\documentclass[msom]{INFORMS-IJOC-Template-26-02-2026/informs4} 
\usepackage{INFORMS-IJOC-Template-26-02-2026/eqndefns-left}
\Equationvalidatefalse
\RequirePackage{tgtermes}
\RequirePackage{newtxtext}
\RequirePackage{newtxmath}
\RequirePackage{bm}

\OneAndAHalfSpacedXI
\EquationsNumberedThrough
\TheoremsNumberedThrough
\ECRepeatTheorems

\usepackage{graphicx}
\setkeys{Gin}{draft=false}
\graphicspath{{INFORMS-IJOC-Template-26-02-2026/}{figures/}}
\usepackage{amsmath,amssymb}
\usepackage{tabularx,booktabs}
\usepackage{mathtools}
\usepackage{xcolor}
\usepackage{algorithm,algpseudocode}
\usepackage[hidelinks]{hyperref}
\usepackage[font=small]{caption}
\usepackage{subcaption}
\usepackage{tikz}
\usetikzlibrary{arrows.meta,calc}
\usetikzlibrary{positioning,fit,backgrounds}
\usepackage{pgfplots}
\usepackage{multirow}
\usepackage{xparse}
\pgfplotsset{compat=1.18}

\usepackage{natbib}
\bibpunct[, ]{(}{)}{,}{a}{}{,}
\def\bibfont{\small}

\newcommand{\Joren}[1]{}
\newcommand{\pat}[1]{}
\newcommand{\jan}[1]{}
\newcommand{\minner}[1]{}
\renewcommand{\Joren}[1]{\noindent\textcolor{blue}{$\ll$(JG) {#1}$\gg$}}
\renewcommand{\pat}[1]{\noindent\textcolor{red}{$\ll$(PH) {#1}$\gg$}}
\renewcommand{\jan}[1]{\noindent\textcolor{purple}{$\ll$(JD) {#1}$\gg$}}
\renewcommand{\minner}[1]{\noindent\textcolor{green}{$\ll$(pp.SM) {#1}$\gg$}}

\NewDocumentCommand{\parencite}{O{} O{} m}{\citep[#1][#2]{#3}}
\NewDocumentCommand{\textcite}{O{} O{} m}{\citet[#1][#2]{#3}}

\algtext*{EndWhile}
\algtext*{EndFor}
\algtext*{EndIf}
\newcommand{\expect}[2][]{\mathbb{E}_{#1}\!\left[#2\right]}
\newcommand{\weights}{\boldsymbol{\theta}}

\usepackage{array}
\newenvironment{conditions*}
  {\par\vspace{\abovedisplayskip}\noindent
   \begin{tabular}{>{$}l<{$} @{\hspace{1.5em}} >{\raggedright\arraybackslash}p{0.78\linewidth}}}
  {\end{tabular}\par\vspace{\belowdisplayskip}}
\makeatletter
\providecommand{\setword}[2]{\phantomsection\def\@currentlabel{#1}\label{#2}#1}
\makeatother

\usepackage{enumitem}
\usepackage[normalem]{ulem}

\definecolor{projectblue}{RGB}{33,88,255}
\definecolor{gradientgray}{RGB}{145,145,145}
\definecolor{constraintorange}{RGB}{230,110,0}
\definecolor{constraintbrown}{RGB}{150, 75, 0}
\definecolor{feasiblegray}{RGB}{232,232,232}
\definecolor{softgreen}{RGB}{115,205,55}

\colorlet{fwdblue}{projectblue}
\colorlet{bwdgray}{gradientgray}
\colorlet{softblue}{feasiblegray}

\tikzset{
    block/.style={
        draw,
        rounded corners=3pt,
        line width=0.9pt,
        fill=white
    },
    fwd/.style={
        -{Latex[length=2.4mm,width=1.6mm]},
        line width=1.05pt,
        draw=fwdblue
    },
    bwd/.style={
        -{Latex[length=2.2mm,width=1.5mm]},
        line width=0.9pt,
        draw=bwdgray
    },
    bwdconn/.style={
        line width=0.85pt,
        draw=bwdgray
    },
    target action/.style={circle, fill=black, inner sep=1.5pt},
    projected action/.style={circle, fill=projectblue, inner sep=1.8pt},
    perturbed target/.style={circle, draw=black, fill=none, inner sep=1.5pt},
    perturbed projected/.style={circle, draw=projectblue, fill=none, inner sep=1.8pt},
    integer action/.style={circle, fill=green!70!black, inner sep=1.35pt},
    feasible region/.style={fill=feasiblegray},
    perturbed feasible region/.style={fill=feasiblegray!75},
    projection guide/.style={dashed, draw=gray},
    projection arrow/.style={
        -{Latex[length=1.4mm,width=1mm]},
        line width=1.05pt,
        draw=red
    },
    perturbation zarrow/.style={
        -{Latex[length=1.4mm,width=1mm]},
        line width=1.05pt,
        draw=black
    },
    perturbation xarrow/.style={
        -{Latex[length=1.4mm,width=1mm]},
        line width=1.05pt,
        draw=projectblue
    },
    neutral constraint/.style={line width=0.85pt, draw=black},
    highlighted constraint/.style={line width=0.85pt, draw=constraintbrown},
    perturbed constraint/.style={line width=0.85pt, draw=black!75},
    network item/.style={circle, draw, minimum size=5mm, inner sep=0pt, font=\small},
    resource box/.style={draw, rounded corners=0pt, inner sep=2pt},
    point/.style={target action},
    proj/.style={projected action},
    old/.style={black},
    new/.style={constraintbrown},
    feasible/.style={feasible region},
    guide/.style={projection guide},
    arr/.style={projection arrow},
    netnode/.style={network item},
    capbox/.style={resource box}
}

\newcommand{\NNblock}[3]{%
\begin{scope}[shift={({#1},{#2})}]
    \node[minimum width=2.05cm, minimum height=1.65cm, inner sep=0pt] (#3box) at (0,0) {};

    \foreach \ya in {0.48,0,-0.48}{
        \foreach \yb in {0.66,0.22,-0.22,-0.66}{
            \draw[black!60,thin] (-0.62,\ya) -- (0,\yb);
        }
    }
    \foreach \ya in {0.66,0.22,-0.22,-0.66}{
        \foreach \yb in {0.22,-0.22}{
            \draw[black!60,thin] (0,\ya) -- (0.62,\yb);
        }
    }

    \foreach \y in {0.48,0,-0.48}{
        \draw[thin,fill=white] (-0.62,\y) circle (0.13);
    }
    \foreach \y in {0.66,0.22,-0.22,-0.66}{
        \draw[thin,fill=white] (0,\y) circle (0.13);
    }
    \foreach \y in {0.22,-0.22}{
        \draw[thin,fill=white] (0.62,\y) circle (0.13);
    }
\end{scope}
}

\newcommand{\QPblockCoordinates}[1]{%
    \pgfmathsetmacro{\qpC}{(#1-2)*0.05}
    \pgfmathsetmacro{\qpD}{(#1-2)*0.04}

    \coordinate (qpAxisOrigin) at (-1.15,-0.72);
    \coordinate (qpXAxisEnd) at (0.94,-0.72);
    \coordinate (qpYAxisEnd) at (-1.15,0.95);
    \coordinate (qpPlotUpperRight) at (0.94,0.95);

    \coordinate (qpConstraintOneStart) at (-1.15,0.76+\qpC);
    \coordinate (qpConstraintOneEnd) at (0.94,-0.02+\qpC);
    \coordinate (qpConstraintTwoStart) at (-0.09+\qpD,0.95);
    \coordinate (qpConstraintTwoEnd) at (0.82+\qpD,-0.48);
    \coordinate (qpConstraintThreeStart) at (-1.15,0.5+\qpC);
    \coordinate (qpConstraintThreeEnd) at (0.5,0.5+\qpC);

    \coordinate (qpConstraintOneThreeIntersection) at (-0.45,0.50+\qpC);
    \coordinate (qpConstraintOneTwoIntersection) at (0.40+1.3*\qpD-0.8*\qpC,0.18-0.5*\qpD+1.3*\qpC);
    \coordinate (qpFeasibleLowerRight) at (0.82+\qpD,-0.72);

    \pgfmathsetmacro{\qpTx}{0.66+(#1-2)*0.10}
    \pgfmathsetmacro{\qpTy}{0.63+(#1-2)*(-0.06)}
    \coordinate (qpTargetPoint) at (\qpTx,\qpTy);
    \coordinate (qpTargetLabel) at (\qpTx+0.10,\qpTy+0.02);

    \pgfmathsetmacro{\qpPx}{0.47+(#1-2)*0.04}
    \pgfmathsetmacro{\qpPy}{0.07+(#1-2)*(-0.02)}
    \coordinate (qpProjectedPoint) at (\qpPx,\qpPy);
    \coordinate (qpProjectedLabel) at (\qpPx+0.06,\qpPy+0.09);

    \coordinate (qpFeasiblePoint) at (0.17,-0.15);
    \coordinate (qpFeasibleLabelTop) at (0.27,-0.06);
    \coordinate (qpFeasibleLabelBottom) at (0.27,-0.24);

    \def\qpContourXRadiusInner{0.21}
    \def\qpContourYRadiusInner{0.45}
    \def\qpContourXRadiusOptimal{0.322}
    \def\qpContourYRadiusOptimal{0.693}
    \def\qpContourXRadiusOuter{0.43}
    \def\qpContourYRadiusOuter{0.93}

    \def\qpFeasibleContourXRadiusInner{0.07}
    \def\qpFeasibleContourYRadiusInner{0.15}
    \def\qpFeasibleContourXRadiusMiddle{0.14}
    \def\qpFeasibleContourYRadiusMiddle{0.30}
    \def\qpFeasibleContourXRadiusOuter{0.21}
    \def\qpFeasibleContourYRadiusOuter{0.45}
}

\newcommand{\QPblock}[5]{%
\begin{scope}[shift={({#1},{#2})}]
    \QPblockCoordinates{#5}

    \draw[-{Latex[length=1.8mm,width=1.2mm]}, line width=0.8pt]
        (qpAxisOrigin) -- (qpXAxisEnd) node[right] {\scriptsize ${x}_{1t}$};
    \draw[-{Latex[length=1.8mm,width=1.2mm]}, line width=0.8pt]
        (qpAxisOrigin) -- (qpYAxisEnd) node[above] {\scriptsize ${x}_{2t}$};

    \draw[neutral constraint]
        (qpConstraintOneStart) -- (qpConstraintOneEnd);
    \draw[neutral constraint]
        (qpConstraintTwoStart) -- (qpConstraintTwoEnd);
    \draw[neutral constraint]
        (qpConstraintThreeStart) -- (qpConstraintThreeEnd);

    \fill[feasible region]
        (qpAxisOrigin) --
        (qpConstraintThreeStart) --
        (qpConstraintOneThreeIntersection) --
        (qpConstraintOneTwoIntersection) --
        (qpConstraintTwoEnd) --
        (qpFeasibleLowerRight) -- cycle;

    \begin{scope}
        \clip (qpAxisOrigin) rectangle (qpPlotUpperRight);
        \draw[black!35,densely dotted,line width=0.75pt]
            (qpTargetPoint)
            ellipse[x radius=\qpContourXRadiusInner,
                    y radius=\qpContourYRadiusInner];
        \draw[black!55,densely dotted,line width=1pt]
            (qpTargetPoint)
            ellipse[x radius=\qpContourXRadiusOptimal,
                    y radius=\qpContourYRadiusOptimal];
        \draw[black!35,densely dotted,line width=0.75pt]
            (qpTargetPoint)
            ellipse[x radius=\qpContourXRadiusOuter,
                    y radius=\qpContourYRadiusOuter];
    \end{scope}

    \draw[neutral constraint]
        (qpConstraintThreeStart) --
        (qpConstraintOneThreeIntersection) --
        (qpConstraintOneTwoIntersection) --
        (qpConstraintTwoEnd);

    \node[target action] at (qpTargetPoint) {};
    \node[anchor=west] at (qpTargetLabel) {\scriptsize $\mathbf{z}_{#4}$};

    \node[projected action] at (qpProjectedPoint) {};
    \node[anchor=north east] at (qpProjectedLabel) {\scriptsize $\mathbf{x}_{#4}^{*}$};

    \draw[projection arrow] (qpTargetPoint) -- (qpProjectedPoint);
\end{scope}
}

\newcommand{\QPfeasblock}[5]{%
\begin{scope}[shift={({#1},{#2})}]
    \QPblockCoordinates{#5}

    \draw[-{Latex[length=1.8mm,width=1.2mm]}, line width=0.8pt]
        (qpAxisOrigin) -- (qpXAxisEnd) node[right] {\scriptsize $\mathbf{x}_{1t}$};
    \draw[-{Latex[length=1.8mm,width=1.2mm]}, line width=0.8pt]
        (qpAxisOrigin) -- (qpYAxisEnd) node[above] {\scriptsize $\mathbf{x}_{2t}$};

    \draw[neutral constraint]
        (qpConstraintOneStart) -- (qpConstraintOneEnd);
    \draw[neutral constraint]
        (qpConstraintTwoStart) -- (qpConstraintTwoEnd);
    \draw[neutral constraint]
        (qpConstraintThreeStart) -- (qpConstraintThreeEnd);

    \fill[feasible region]
        (qpAxisOrigin) --
        (qpConstraintThreeStart) --
        (qpConstraintOneThreeIntersection) --
        (qpConstraintOneTwoIntersection) --
        (qpConstraintTwoEnd) --
        (qpFeasibleLowerRight) -- cycle;

    \begin{scope}
        \clip (qpAxisOrigin) rectangle (qpPlotUpperRight);
        \draw[black!25,densely dotted,line width=0.55pt]
            (qpFeasiblePoint)
            ellipse[x radius=\qpFeasibleContourXRadiusInner,
                    y radius=\qpFeasibleContourYRadiusInner];
        \draw[black!35,densely dotted,line width=0.55pt]
            (qpFeasiblePoint)
            ellipse[x radius=\qpFeasibleContourXRadiusMiddle,
                    y radius=\qpFeasibleContourYRadiusMiddle];
        \draw[black!25,densely dotted,line width=0.55pt]
            (qpFeasiblePoint)
            ellipse[x radius=\qpFeasibleContourXRadiusOuter,
                    y radius=\qpFeasibleContourYRadiusOuter];
    \end{scope}

    \draw[neutral constraint]
        (qpConstraintThreeStart) --
        (qpConstraintOneThreeIntersection) --
        (qpConstraintOneTwoIntersection) --
        (qpConstraintTwoEnd);

    \fill (qpFeasiblePoint) circle (1.9pt);
    \fill[projectblue] (qpFeasiblePoint) circle (1.2pt);
    \node[anchor=north east] at (qpFeasibleLabelTop) {\scriptsize $\mathbf{z}_{#4} = \mathbf{x}_{#4}^{*}$};
\end{scope}
}

\newcommand{\IMblockCoordinates}[1]{%
    \pgfmathsetmacro{\imC}{(#1-2)*0.04}

    \def\imGridValues{-0.72,0,0.72}
    \def\imGuideMinX{-0.90}
    \def\imGuideMaxX{0.90}
    \def\imGuideMinY{-0.84}
    \def\imGuideMaxY{0.84}

    \coordinate (imConstraintStart) at (0.10+\imC, 0.80);
    \coordinate (imConstraintEnd)   at (0.80+\imC,-0.60);

    \pgfmathsetmacro{\imPx}{(#1<1.5) ? 0.30 : ((#1<2.5) ? 0.20 : 0.62)}
    \pgfmathsetmacro{\imPy}{(#1<1.5) ? 0.32 : ((#1<2.5) ? 0.30 : -0.18)}
    \coordinate (imProjectedPoint) at (\imPx,\imPy);
    \coordinate (imProjectedLabel) at (\imPx-0.03,\imPy-0.05);

    \coordinate (imProjectionStart) at (0.17,0.13);
    \coordinate (imProjectionEnd)   at (-0.02,-0.02);

    \pgfmathsetmacro{\imIntX}{(#1>3.5) ? 0.72 : 0}
    \pgfmathsetmacro{\imIntY}{(#1<1.5) ? 0.72 : 0}
    \coordinate (imIntegerPoint) at (\imIntX,\imIntY);
    \coordinate (imIntegerLabel) at (\imIntX+0.15,\imIntY-0.16);
}

\newcommand{\IMblock}[5]{%
\begin{scope}[shift={({#1},{#2})}]
    \node[minimum width=2.35cm, minimum height=2.2cm, inner sep=0pt] (#3box) at (0,0) {};
    \IMblockCoordinates{#5}

    \foreach \x in \imGridValues{
        \draw[black!25,densely dotted] (\x,\imGuideMinY) -- (\x,\imGuideMaxY);
    }
    \foreach \y in \imGridValues{
        \draw[black!25,densely dotted] (\imGuideMinX,\y) -- (\imGuideMaxX,\y);
    }

    \ifnum#5=2\relax\else
        \draw[neutral constraint] (imConstraintStart) -- (imConstraintEnd);
    \fi

    \foreach \x in \imGridValues{
        \foreach \y in \imGridValues{
            \fill (\x,\y) circle (1.0pt);
        }
    }

    \node[projected action] at (imProjectedPoint) {};
    \node[anchor=west] at (imProjectedPoint) {\scriptsize $\mathbf{x}_{#4}^{*}$};

    \draw[projection arrow] (imProjectedPoint) -- (imIntegerPoint);
    \node[integer action] at (imIntegerPoint) {};
    \node[anchor=north] at (imIntegerPoint) {\scriptsize $\mathbf{x}_{#4}$};
\end{scope}
}

\renewcommand{\TITLEfont}{\fs.17.5.\rm}

\RUNAUTHOR{Helm et al.}
\RUNTITLE{Learning Feasible Policies via Differentiable Projection}
\TITLE{Hard Constraints, Smooth Gradients: Learning Feasible Inventory Policies via Differentiable Projection}

\ARTICLEAUTHORS{%
\AUTHOR{Patrick P. Helm, Jan-Niklas Doerr}
\AFF{TUM School of Management, Technical University of Munich, \EMAIL{patrick.helm@tum.de, jan.doerr@tum.de}}

\AUTHOR{Joren Gijsbrechts}
\AFF{Esade, Ramon Llull University, \EMAIL{joren.gijsbrechts@esade.edu}}

\AUTHOR{Stefan Minner}
\AFF{TUM School of Management, Technical University of Munich, \EMAIL{stefan.minner@tum.de}}
\AFF{Munich Data Science Institute (MDSI), Technical University of Munich}
}

\ABSTRACT{Many operational problems are constrained sequential decision processes with large, combinatorial action spaces and interdependent feasibility constraints. Mixed-integer linear programs (MILPs) handle such constraints flexibly but scale poorly in stochastic environments. Deep reinforcement learning (DRL) promises scalable decision rules, but existing methods either penalize constraints rather than enforce them, or rely on feasibility mechanisms that break down once constraints interact. We bridge this gap by embedding a differentiable convex optimization module inside the policy: a differentiable decision rule (e.g., a neural network) proposes continuous action targets, a quadratic program (QP) projects them onto the relaxed feasible set, and a dual-informed integer mapping restores integrality while preserving feasibility. Given a differentiable simulator, the policy trains end to end from sampled trajectories using pathwise gradients, while handling hard constraints with similar flexibility to an MILP. We show that our feasibility enforcement has bounded error relative to an exact integer projection and ensures the entire feasible action space is reachable. We apply the method to multi-echelon production--inventory planning under shared resource and material constraints. Our policy attains an average optimality gap below $1\%$ on small instances. It further outperforms state-of-the-art echelon base-stock policies by up to $9.75\%$ and a rolling-horizon multi-stage stochastic program by at least $7.7\%$ in larger networks. On an industry-scale case study from ASML, it reduces average cost by up to $3.22\%$ relative to the best-known benchmark policy. Our results show that traditional inventory policies struggle in tightly capacitated systems with high demand variability, while our learning-based approach remains cost-effective. The benefits are therefore largest precisely where planning is hardest and most relevant. More broadly, our work shows that DRL can deliver economically significant savings in sequential decision problems with interdependent hard constraints, which are widespread in practice.
}

\KEYWORDS{deep reinforcement learning, differentiable optimization, multi-echelon inventory planning, capacity constraints}

\FUNDING{This work received financial support from Deutsche Forschungsgemeinschaft (DFG) -- Projektnummer 277991500.}

\makeatletter
\def\theARTICLETOPLEFT{}
\def\theARTICLETOPRIGHT{}
\def\theARTICLEABSTRACT{%
  \HOOKb
  \vspace*{18pt}
  \begin{minipage}[t]{\textwidth}\parindent1em
    \ABSfont
    \noindent\theABSTRACT\endgraf
    \vskip5pt
    \if@BLINDREV\else\theFUNDING\fi
    \theKEYWORDS
    \theSUBJECTCLASS
    \theAREAOFREVIEW
    \theMSCCLASS
    \theORMSCLASS
    \if@BLINDREV\else\theHISTORY\fi
    \noindent\hrulefill
  \end{minipage}%
}
\if@BLINDREV
\RRHFirstLine{{\it\theRUNTITLE}}
\LRHFirstLine{{\it\theRUNTITLE}}
\else
\RRHFirstLine{\bf\theRUNAUTHOR:\enskip {\it\theRUNTITLE}}
\LRHFirstLine{\bf\theRUNAUTHOR:\enskip {\it\theRUNTITLE}}
\fi
\RRHSecondLine{}
\LRHSecondLine{}
\def\setoddRH{\hbox to \textwidth{\fs.7.8.\tabcolsep0pt
  \begin{tabular*}{\textwidth}[b]{l@{\extracolsep\fill}r}
  {\theRRHFirstLine}&\raisebox{0pt}[0pt][0pt]{\fs.10.10.\thepage}\\[-2pt]
  \rlap{\VRHDW{0.5pt}{0pt}{\textwidth}}&\\
  \end{tabular*}}}
\def\setevenRH{\hbox to \textwidth{\fs.7.8.\tabcolsep0pt
  \begin{tabular*}{\textwidth}[b]{l@{\extracolsep\fill}r}
  \raisebox{0pt}[0pt][0pt]{\fs.10.10.\thepage}&{\theLRHFirstLine}\\[-2pt]
  \rlap{\VRHDW{0.5pt}{0pt}{\textwidth}}&\\
  \end{tabular*}}}
\def\theARTICLETITLE{%
  \HOOKtop
  \begin{center}
  \vspace*{0pt}%
  \TITLEfont\HD{24}{0}\sffamily\bfseries\theTITLE\HD{0}{15}%
  \end{center}}
\def\AUTHOR#1{%
  \begin{center}
  \AUTHORfont\HD{15}{0}#1\HD{0}{6}\relax
  \end{center}}
\def\AFF#1{%
  \begin{center}
  \AFFfont{#1}\relax
  \vskip1.6pt
  \end{center}}
\def\ps@firstpage{%
  \let\@mkboth\markboth
  \def\@oddfoot{}%
  \def\@evenfoot{}%
  \def\@oddhead{}%
  \def\@evenhead{}%
  \def\sectionmark##1{}%
  \def\subsectionmark##1{}%
}
\makeatother

\begin{document}
\pagestyle{headings}

\maketitle

\setlength{\abovedisplayskip}{2pt}
\setlength{\belowdisplayskip}{2pt}
\setlength{\abovedisplayshortskip}{0pt}
\setlength{\belowdisplayshortskip}{0pt}
\setlength{\jot}{2pt}

\section{Introduction}

Deep Reinforcement Learning (DRL), which uses deep neural networks to optimize sequential decisions under uncertainty, is emerging as a complement to classical Operations Research (OR) tools. Reported gains in large-scale supply chain applications include a 12\% inventory reduction at Amazon without revenue loss \parencite{Madeka.2022}, lower out-of-stock rates and turnover times at Alibaba \parencite{Liu.2023, Xie.2026}, and a 40\% cost reduction at JD.com \parencite{Qi.2023}. These results show that DRL can handle scale and stochasticity in real supply chains, but they are for settings without hard combinatorial action constraints.

For constrained problems, however, mixed-integer linear programs (MILPs) remain the workhorse, as they encode operational requirements directly in the formulation and rely on a solver to guarantee feasibility. State-of-the-art DRL algorithms generally lack this capability. This is a central gap, because many OR problems involve sequential decisions which jointly compete for shared resources.

Besides constraints, a second defining feature of many sequential OR problems is that the transitions and costs are differentiable with respect to the state and action. This enables reliable policy optimization via pathwise gradients, which are more sample-efficient and less noisy than statistical policy-gradient estimates. Exemplary applications include queueing-network control \parencite{Che.2024}, financial hedging \parencite{Buehler.2019}, predictive process control \parencite{Drgona.2022}, and inventory management \parencite{Bottcher.2023}. In the absence of constraints, Hindsight Differentiable Policy Optimization (HDPO) is the leading approach for learning parameterized differentiable policies for these problems \parencite{Madeka.2022, Alvo.2023}. 

Incorporating hard action constraints into differentiable policies nevertheless remains an open challenge. Lagrangian relaxation penalizes violations but does not enforce strict feasibility \parencite{Eisenach.2024, Liu.2025}, while the differentiable scaling mechanism of \textcite{Alvo.2023} applies only when scaling is sufficient to restore feasibility. Figure~\ref{fig:constraints} illustrates this limitation. In panel~(a), a single joint constraint can be made feasible by scaling the target action toward the origin. In panel~(b), multiple constraints are present so it is not clear a priori which ones are binding after feasibility is enforced. In general, identifying the active constraints requires solving an optimization problem, which must itself be differentiable for gradients to flow during training.

\begin{figure}[t]
\centering

\begin{subfigure}[t]{0.34\textwidth}
\centering
\vspace{0pt}
\begin{tikzpicture}[scale=0.6, >=stealth]
    \draw[->] (0,0) -- (3.45,0) node[right, font=\scriptsize, align=left] {dimension 1};
    \draw[->] (0,0) -- (0,2.7) node[above, font=\scriptsize, align=center] {dimension 2};

    \coordinate (A1) at (0,2.25);
    \coordinate (B1) at (2.25,0);
    \coordinate (A2) at (0,1.65);
    \coordinate (B2) at (1.65,0);

    \fill[feasible] (0,0) -- (A2) -- (B2) -- cycle;

    \draw[neutral constraint] (A2) -- (B2);

    \coordinate (z1) at (2.05,1.35);
    \coordinate (x1) at (1.025,0.675);
    \coordinate (z2) at (1.15,2.05);
    \coordinate (x2) at (0.575,1.025);

    \draw[guide] (z1) -- (x1);
    \draw[guide] (z2) -- (x2);
    \draw[arr] (z1) -- (x1);
    \draw[arr] (z2) -- (x2);

    \node[point, old] at (z1) {};
    \node[proj] at (x1) {};
    \node[point, old] at (z2) {};
    \node[proj] at (x2) {};

\end{tikzpicture}
\caption{Shared downstream resource.}
\end{subfigure}
\hfill
\begin{minipage}[t]{0.28\textwidth}
\vspace{0pt}
\centering
\begin{tikzpicture}[>=stealth, font=\footnotesize]
    \node[anchor=west, font=\footnotesize\bfseries] at (0, 0) {Action space};

    \node[point] at (0.25, -0.45) {};
    \node[anchor=west] at (0.55, -0.45) {target action};

    \node[proj] at (0.25, -0.85) {};
    \node[anchor=west] at (0.55, -0.85) {projected action};

    \draw[arr] (0.1, -1.25) -- (0.4, -1.25);
    \node[anchor=west] at (0.55, -1.25) {projection step};

    \draw[neutral constraint] (0.1, -1.65) -- (0.4, -1.65);
    \node[anchor=west, align=left] at (0.55, -1.65) {shared constraint};

    \draw[highlighted constraint] (0.1, -2.05) -- (0.4, -2.05);
    \node[anchor=west, align=left] at (0.55, -2.05) {individual constraints};

    \fill[feasible] (0.1, -2.55) rectangle (0.4, -2.35);
    \node[anchor=west] at (0.55, -2.45) {feasible action set};

\end{tikzpicture}
\end{minipage}
\hfill
\begin{subfigure}[t]{0.34\textwidth}
\centering
\vspace{0pt}
\begin{tikzpicture}[scale=0.6, >=stealth]
    \draw[->] (0,0) -- (3.45,0) node[right, font=\scriptsize, align=left] {dimension 1};
    \draw[->] (0,0) -- (0,2.7) node[above, font=\scriptsize, align=center] {dimension 2};

    \coordinate (T)  at (0,2.25);
    \coordinate (R)  at (2.25,0);
    \coordinate (Vt) at (1.85,1.8);
    \coordinate (Vr) at (1.85,0);
    \coordinate (Hl) at (0,1.25);
    \coordinate (Hr) at (2.5,1.25);

    \coordinate (P1) at (0,0);
    \coordinate (P2) at (0,1.25);
    \coordinate (P3) at (1.0,1.25);
    \coordinate (P4) at (1.85,0.4);
    \coordinate (P5) at (1.85,0);

    \fill[feasible] (P1) -- (P2) -- (P3) -- (P4) -- (P5) -- cycle;

    \draw[neutral constraint] (T) -- (R);
    \draw[highlighted constraint] (Vt) -- (Vr);
    \draw[highlighted constraint] (Hl) -- (Hr);

    \coordinate (z1) at (2.4,0.95);
    \coordinate (x1) at (1.85,0.4);
    \coordinate (z2) at (0.48,2.25);
    \coordinate (x2) at (0.48,1.25);

    \draw[guide] (z1) -- (x1);
    \draw[guide] (z2) -- (x2);
    \draw[arr] (z1) -- (x1);
    \draw[arr] (z2) -- (x2);

    \node[point, old] at (z1) {};
    \node[proj] at (x1) {};
    \node[point, old] at (z2) {};
    \node[proj] at (x2) {};

\end{tikzpicture}
\caption{Individual downstream resources.}
\end{subfigure}

\caption{Feasible action sets for a two-dimensional action space with shared and individual constraints. In (a), a shared constraint limits sum of actions, so scaling restores feasibility. In (b), dedicated constraints limit individual actions. The active constraints depend on the target, so restoring feasibility requires solving an optimization problem.}
\label{fig:constraints}
\end{figure}

To address action feasibility under state-dependent, interdependent constraints, we propose a differentiable policy with three sequential modules (Figure~\ref{fig:pipeline_intro}). First, a differentiable decision rule (in our case a neural network) maps the system state to continuous action targets. Second, a convex differentiable projection module maps these targets onto the feasible set of the continuous relaxation and provides analytical sensitivities via implicit differentiation of its optimality conditions. We use a quadratic program (QP) rather than a linear program (LP) relaxation because strong convexity yields a unique optimizer that varies smoothly with the QP inputs, whereas LP optima are prone to degeneracy and discontinuous jumps that destabilize sensitivity-based learning \parencite{Wilder.2019}. We show that the projection sensitivities flowing through the action target are nonexpansive, and thus do not cause gradient instability in the backward pass. Third, a dual-informed integer mapping converts the continuous projected action into feasible discrete decisions by using the projection's dual variables to prioritize items. Because discrete operations do not provide useful learning signals, we provide surrogate gradients via a straight-through estimator (STE) \parencite{Bengio.2013}, enabling end-to-end gradient propagation. Because the constraints enter the policy only through the specification of the projection, additional or alternative resource structures leave the architecture untouched. The policy thereby retains the formulation-driven workflow of MILPs, where operational requirements are added to the model rather than to the algorithm.

We provide three theoretical guarantees concerning the discrete policy output. First, the two-step mapping stays close to the exact integer projection, both in the projection distance and in the resulting decision, which supports avoiding per-state integer programming. Second, the integer mapping is complete: every feasible action is selected on a positive-measure set of target actions, and hence within reach of a smoothly parameterized decision rule, whereas naive flooring can leave actions that fully utilize some resource unreachable. Third, when the target overshoots the projected action in every coordinate, the mapping returns an action that is limited in at least one coordinate, so the decision rule can push the action up against binding resource limits.

We apply this method to inventory management, which is one of the most developed OR applications, where DRL already has real-world impact \parencite{Gijsbrechts.2026}. Specifically, we consider production--inventory planning in multi-echelon supply networks with finite resources. {This setting combines a combinatorial action space with interdependent hard constraints.} Manufacturers must coordinate production decisions across products and locations while accounting for inventory positions, shared capacities, material flows, and long lead times \parencite{deKok.2018}. The problem arises, for example, in semiconductor manufacturing at ASML \parencite{Fleuren.2025}, on whose benchmark we later validate our method. No optimal policy structure or {universally well-performing} heuristic is known for this problem, precisely because the combinatorial constrained action space is not natively handled by existing methods. {In this setting, as in many other OR problems, the differentiable projection module decomposes into independent subproblems, because the resource constraints tend to be local (e.g., tied to machines, production facilities, and bill-of-material structure). We exploit this structure to scale our method to large instances with local constraints.}

Our method attains an average optimality gap below $1\%$ on 3-item instances without instance-specific hyperparameter tuning. On 10-item instances based on the supply network from \textcite{Tempelmeier.1996}, we create cost savings of up to $9.75\%$ compared to the state-of-the-art echelon base-stock policy of \textcite{vanDijck.2026} and consistently outperform a rolling-horizon multi-stage stochastic program by at least $7.7\%$. The advantage is strongest in tightly capacitated networks with high demand variability and resources shared across echelons, where the base-stock policy struggles. We further evaluate the policy on the 14-item instance of the 2025 ASML Research Challenge, a realistic industry benchmark \parencite{ASML_supply_chain_planning_challenge}. Across two demand settings, our policy reduces average cost by $3.22\%$ and $2.54\%$ relative to the best-known policy for this problem, namely a forecast-based echelon base-stock policy with min-max relative shortfall allocation \parencite{vanDijck.2026}, and first-order stochastically dominates the baseline in the realized-cost distribution. Whereas the baselines require forecasts of future demand to make decisions, our method learns a data-driven mapping from demand features directly to production decisions. An ablation study confirms that all three modules are essential: ignoring the projection sensitivities causes training divergence, and replacing the dual-informed integer map with naive floor rounding increases cost by up to $15.8\%$, which is consistent with our theoretical results.

\begin{figure}[t]
\centering
\begin{tikzpicture}[font=\small]

\def\gap{2.0}

\def\xnn{2.00}
\pgfmathsetmacro{\xqp}{\xnn + 1.10 + \gap + 1.58}
\pgfmathsetmacro{\xim}{\xqp + 1.58 + \gap + 1.26}
\pgfmathsetmacro{\xend}{\xim + 1.26 + 1.50}

\def\yone{0}
\def\ybwd{-0.36}

\NNblock{\xnn}{\yone}{nnA}
\QPblock{\xqp}{\yone}{qpA}{t}{3}
\IMblock{\xim}{\yone}{imA}{t}{3}

\draw[fwd] (0.10,\yone) -- (\xnn-1.10,\yone)
    node[midway,above=2pt,black,font=\footnotesize,align=center] {State};
\draw[fwd] (\xnn+1.10,\yone) -- (\xqp-1.58,\yone)
    node[midway,above=2pt,black,font=\footnotesize,align=center]
        {proposed\\[-2pt]decision};
\draw[fwd] (\xqp+1.58,\yone) -- (\xim-1.26,\yone)
    node[midway,above=2pt,black,font=\footnotesize,align=center]
        {continuous\\[-2pt]feasible\\[-2pt]decision};
\draw[fwd] (\xim+1.26,\yone) -- (\xend,\yone)
    node[midway,above=2pt,black,font=\footnotesize,align=center]
        {integer\\[-2pt]feasible\\[-2pt]decision};

\draw[bwd] (\xend,\ybwd)      -- (\xim+1.26,\ybwd);
\draw[bwd] (\xim-1.26,\ybwd)  -- (\xqp+1.58,\ybwd);
\draw[bwd] (\xqp-1.58,\ybwd)  -- (\xnn+1.10,\ybwd)
    node[midway,below=2pt,black,font=\footnotesize] {QP gradients};
\draw[bwd] (\xnn-1.10,\ybwd)  -- (0.10,\ybwd);

\draw[bwd, rounded corners=6pt]
    (\xim+1.26,\ybwd) --
    (\xim+1.26,-1.00) --
    (\xim-1.26,-1.00) --
    (\xim-1.26,\ybwd);
\node[font=\footnotesize,black,align=center] at (\xim,-1.50)
    {straight-through\\[-2pt]gradients};

\begin{scope}[shift={(1.3,-1.5)}]
    \draw[fwd] (-1.12,0.18) -- (-0.32,0.18);
    \node[anchor=west] at (-0.4,0.18) {forward};
    \draw[bwd] (-0.32,-0.16) -- (-1.12,-0.16);
    \node[anchor=west] at (-0.4,-0.16) {backward};
\end{scope}

\end{tikzpicture}

\caption{Overview of the proposed differentiable policy. A neural network proposes action targets, which are first projected onto the relaxed feasible set by a QP, and then mapped to discrete decisions. Gradients flow end-to-end using surrogate gradients for the non-differentiable integer mapping.}
\label{fig:pipeline_intro}
\end{figure}

In sum, our main contributions are:
\begin{itemize}[leftmargin=*, align=parleft]
    \item A differentiable policy architecture that produces feasible discrete production decisions under interdependent action constraints. It combines a differentiable decision rule (e.g., a neural network), a convex projection onto the continuous feasible set, and a dual-informed integer mapping into a single end-to-end differentiable policy trained via pathwise gradients without per-state mixed-integer optimization.
    \item {Theoretical guarantees on the projection and integer-mapping modules. We bound the optimality and decision-distance gaps introduced by integer mapping, show that every feasible integer action is reachable from a positive-measure target region, and provide a sufficient condition on the target under which the policy chooses a frontier action.}
    \item Numerical validation across three sets of experiments of increasing scale and complexity: small instances with known optimal policies, a literature benchmark based on \textcite{Tempelmeier.1996}, and a realistic industry-scale instance from the 2025 ASML Research Challenge including an ablation study of the policy modules.
\end{itemize}

\section{Literature}
\label{sec:literature}

{Our work connects three streams of literature: capacitated multi-echelon inventory planning defines the application domain (Section~\ref{sec:literature_multi_echelon}), DRL for (constrained) inventory control provides the methodological context (Section~\ref{sec:literature_drl}), and decision-focused learning supplies the technical backbone (Section~\ref{sec:literature_diff_opt}). We review each in turn and position our contribution at their intersection (Section~\ref{sec:gap}).}

\subsection{Multi-Echelon Inventory Planning in Capacitated Networks}
\label{sec:literature_multi_echelon}

{The resource-constrained production--inventory application we consider in this paper belongs to the class of capacitated multi-echelon inventory planning problems. While capacitated inventory problems have been extensively studied in the literature \parencite{Song.2020}, their extension to multi-echelon settings has largely been analyzed for specific network structures \parencite{deKok.2018}.} 

In convergent or serial systems, most contributions focus on echelon base-stock policies, which are optimal in uncapacitated systems under standard assumptions \parencite{Clark.1960}. Near-optimal base-stock levels in capacitated systems have been derived heuristically by \textcite{Huh.2016} and using infinitesimal perturbation analysis by \textcite{Glasserman.1995}. Recently, \textcite{vanDijck.2024} propose a balanced echelon base-stock policy for capacitated assembly systems and demonstrate promising performance in a case study from ASML. In divergent and general networks, allocation policies determine how upstream stock is distributed across downstream demand streams. \textcite{VanDerHeijden.1997} find that balanced stock rationing performs best in pure divergent systems, especially when downstream service is highly differentiated. \textcite{handbook} extend this to balanced linear rationing for general supply networks and show that it outperforms other policies. However, it remains unclear how these allocation rules can be extended to settings with shared resource constraints. 

General capacitated networks have predominantly been addressed in the multi-level dynamic lot-sizing literature under deterministic demand \parencite{Tempelmeier.1996, Malicki.2026}. However, \textcite{Thevenin.2021} demonstrate that deterministic MILPs are inadequate under stochastic demand and propose rolling-horizon stochastic programming (SP), which yields substantial cost reductions. \textcite{Fleuren.2025} extend this to stochastic lead times and new product introductions, reporting state-of-the-art performance on a case study from ASML. {However, \textcite{vanDijck.2026} show that this approach breaks down at industry scale: on the 2025 ASML Research Challenge, rolling-horizon multi-stage SP is outperformed by an optimized forecast-based echelon base-stock policy with a shortfall-minimizing allocation rule, while requiring multiple orders of magnitude more compute. Since this base-stock policy is the best-known policy for the studied problem class, we use it as our main benchmark alongside rolling-horizon SP in Section \ref{sec:experiments}.}

\subsection{Deep Reinforcement Learning for Inventory Control}
\label{sec:literature_drl}

{Inventory control is one of the most developed application areas for DRL in OR, where real-world impact is already visible \parencite{Gijsbrechts.2026}. Early work used DRL mainly as a flexible policy-learning tool for classical inventory settings \parencite{Oroojlooyjadid.2022, Gijsbrechts.2022}. These studies show that DRL can learn competitive replenishment policies without explicitly deriving problem-specific rules. While the discrete action encodings used in these early approaches make hard constraints easy to enforce through masking, they are intractable to scale when many items or locations must be managed jointly.} 

\textcite{Kaynov.2024} instead relies on random rationing to enforce feasibility in a one-warehouse multiple-retailer (OWMR) setting, which enables a more scalable action representation. While this nudges the policy to learn feasible actions, the allocation does not provide a direct gradient signal for learning. \textcite{Harsha.2025} explicitly treats combinatorial constrained action spaces by replacing the parametric policy with a stochastic MILP guided by a neural value function. While proving successful for three-echelon distribution systems with up to 6 locations, the method relies on a MILP with an embedded neural network for action selection, which makes computational effort grow with the expressiveness needed for accurate predictions.

A separate research stream uses pathwise gradients from differentiable simulators for end-to-end inventory policy learning. \textcite{Madeka.2022} propose HDPO for inventory control and demonstrate state-of-the-art performance in data-driven optimization with lost sales and stochastic lead times based on a large-scale case study at Amazon. \textcite{Eisenach.2024} extend this to settings with dynamic storage capacity using a Lagrangian relaxation, where a second neural network predicts state-dependent dual prices to steer the policy toward feasibility. \textcite{Liu.2025} generalize this idea to dual-sourcing settings with multiple suppliers. Both approaches effectively maintain inventory levels close to capacity in expectation. However, these methods rely on penalizing violations, which does not guarantee feasibility of individual actions, making them incompatible with hard constraints. 

Since order quantities typically need to be integer in practice, different approaches have been proposed for making discrete decision-making differentiable. \textcite{Bottcher.2023} learn near-optimal policies on a range of dual-sourcing problems and show that straight-through gradient estimators are effective at smoothing non-differentiable rounding. In contrast, \textcite{Alvo.2023} train policies in a continuous surrogate system and rely on post-processing at deployment time to obtain discrete decisions, which consistently outperforms the policy of \textcite{Gijsbrechts.2022} on a variety of settings. For the OWMR setting, they propose a differentiable scaling-based feasibility enforcement mechanism, which is effective for a single constraint but does not generalize to settings with more complex interdependent action constraints (see Figure \ref{fig:constraints}). 

\subsection{Decision-Focused Learning and Differentiable Optimization}
\label{sec:literature_diff_opt}
Decision-focused learning (DFL) studies the integration of machine learning and constrained optimization into a single decision pipeline. In contrast to classical predict-then-optimize, DFL methods train models with respect to downstream decision quality after the learned outputs have been processed by an optimization problem. In our setting, the main purpose of the optimization is enforcing hard interdependent action constraints. In their recent survey, \textcite{Mandi.2024} distinguish two main ways of obtaining gradients when the downstream decision problem is discrete or otherwise non-differentiable: replacing the decision loss by a surrogate, or differentiating through a relaxed optimization problem. We focus on the contributions most relevant to our setting and refer to \textcite{Mandi.2024} for more context.

Surrogate decision losses avoid differentiating the exact decision map by replacing the downstream loss with a tractable proxy. Examples include the SPO+ loss for predict-then-optimize problems \parencite{Elmachtoub.2022} and Fenchel--Young losses for structured prediction \parencite{Blondel.2020}. \textcite{Hoppe.2026} recently apply this idea to DRL by embedding combinatorial optimization in the policy. This is not applicable to HDPO, since pathwise gradients rely on differentiating the realized trajectory cost through the simulator, which includes the embedded optimization, and therefore cannot easily be replaced by a surrogate.

We therefore build on differentiable optimization, which propagates gradients through the solution map of a continuous constrained optimization problem. Differentiable optimization layers have been developed for QPs \parencite{Amos.2017}, cone programs \parencite{Agrawal.2019}, and LP relaxations \parencite{Wilder.2019}. \textcite{Donti.2021} enforce hard constraints on continuous decisions by combining equality completion with an inequality-correction procedure inside the network, and \textcite{Paulus.2021} embed integer linear programs as layers with surrogate gradients. Both target single-stage decision problems, whereas in our sequential setting the decision feeds back into the state, so gradients must additionally flow through the closed loop. Within reinforcement learning, differentiable optimization layers have mainly been used to enforce safety constraints, for example during exploration \parencite{Dalal.2018}. \textcite{Markgraf.2026} note that differentiable optimization can cause rank loss in the local Jacobian and show that shifting safety constraints from the policy to the environment can be beneficial. In our setting, hard constraints are structural instead of safety artifacts. Thus, differentiable optimization allows us to combine hard action constraints with a low-variance training signal from HDPO for complex sequential decision problems.

\subsection{Research Gap}
\label{sec:gap}
Due to the combinatorial action space and interdependent hard constraints arising in many sequential decision problems, existing methods primarily rely on MILP formulations, which become computationally prohibitive in high-dimensional settings. Scalable learning-based alternatives are not yet available, since they must simultaneously handle large action spaces, hard feasibility, and delayed cost feedback. We address this methodological gap by proposing a differentiable policy that enforces discrete feasibility through a QP projection module and an integer mapping while enabling efficient end-to-end learning via pathwise gradients. We apply the method to production--inventory planning in capacitated multi-echelon networks, where these challenges are particularly pronounced due to tightly coupled resource constraints and long decision horizons.

\section{Preliminaries}
\label{sec:background}

\subsection{Problem Setting}
\label{sec:problem_generic}

We consider a finite-horizon, constrained Markov decision process (MDP) over periods $t \in \{1, \ldots, T\}$ with a combinatorial action space and state-dependent feasibility constraints. At the start of period $t$ the decision maker observes a state $\mathbf s_t \in \mathcal S$ and selects a feasible action $\mathbf x_t \in \mathcal A(\mathbf s_t) \subset \mathbb N_0^{n}$, where $n$ denotes the action dimension. We focus on the more challenging case of discrete actions, which are relevant in most OR applications, due to the presence of indivisible units, but our method applies also to problems with continuous actions. The feasible action space $\mathcal A(\mathbf s_t)$ enforces action integrality and linear constraints that we group into two blocks. The first block contains $m_A$ constraints with fixed coefficient matrix $\mathbf A \in \mathbb R^{m_A \times n}$ and a state-dependent right-hand side $\mathbf b_t = \mathbf b(\mathbf s_t) \in \mathbb R^{m_A}$, while the second block contains $m_C$ constraints with fixed coefficient matrix $\mathbf C \in \mathbb R^{m_C \times n}$ and fixed right-hand side $\mathbf k \in \mathbb R^{m_C}$. This enables us to account for both dynamic constraints that depend on past decisions, such as material or workforce availability, and static constraints that are given, such as capacity or budget limits. The feasible action space is then given by
\begin{equation}
    \mathcal A(\mathbf s_t) := \big\{ \mathbf x \in \mathbb N_0^{n} \;\big|\; \mathbf A \mathbf x \le \mathbf b_t,\; \mathbf C \mathbf x \le \mathbf k \big\},
    \label{eq:feasible-set}
\end{equation}
whereas its continuous relaxation $\mathcal A^{\mathrm{rel}}(\mathbf s_t)$ is defined by replacing $\mathbb N_0^{n}$ with $\mathbb R_{\ge 0}^{n}$ in \eqref{eq:feasible-set}.

After choosing the action $\mathbf x_t$, an exogenous random vector $\mathbf \Xi_t$ is realized with sample value $\boldsymbol \xi_t$. Based on that, an immediate cost $c(\mathbf s_t, \mathbf x_t, \boldsymbol \xi_t)$ is incurred, and the state transitions to $\mathbf s_{t+1} = \Gamma(\mathbf s_t, \mathbf x_t, \boldsymbol \xi_t) \in \mathcal S$.

\paragraph{Assumptions.}
We make four assumptions on the MDP. (i) $\mathcal A^{\mathrm{rel}}(\mathbf s_t)$ is downward closed for all states $\mathbf s_t$: if $\mathbf x$ is relaxed feasible and $\mathbf 0 \le \mathbf x' \le \mathbf x$ componentwise, then so is $\mathbf x'$. Nonnegativity of $\mathbf A$ and $\mathbf C$ is a sufficient condition, and holds whenever constraints represent the consumption of limited resources, e.g., capacity, material, budget, and workforce limits. Since $\mathcal A(\mathbf s_t)$ is the intersection of its relaxation with $\mathbb N_0^{n}$, the integer feasible set inherits downward closedness, and the componentwise floor of every relaxed feasible point is an element of $\mathcal A(\mathbf s_t)$. (ii) The feasible set $\mathcal A(\mathbf s_t)$ is nonempty for every reachable state. Nonnegativity of $\mathbf b(\mathbf s_t)$ and $\mathbf k$ is a sufficient condition, and under (i) it is also necessary, since downward closure then makes the zero action feasible. (iii) For fixed uncertainty realizations $\boldsymbol \xi_t$, the per-period cost function $c$ and the transition function $\Gamma$ are deterministic and differentiable almost everywhere in the state and action $(\mathbf s_t, \mathbf x_t)$, so that pathwise gradients can propagate through the unrolled state recursion. (iv) The stochastic process $\mathbf \Xi_{1:T}$ is exogenous: its law may depend on exogenous information included in the state, such as observed covariates, but not on the actions or the endogenous part of the state. Assumption (i) is needed only for the proposed integer mapping, and ensures that a local search suffices to restore integrality. Assumption (ii) is required for the projection to be well-defined.  Assumptions (iii) and (iv) represent the standard HDPO assumptions \parencite{Alvo.2023}.

\paragraph{Objective.}
A deterministic policy is a map $\pi : \mathbf s \mapsto \mathbf x \in \mathcal A(\mathbf s)$ that assigns a feasible action to every state. {While we focus on stationary policies, which are more relevant in practice, nonstationary policies can be represented by adding the time index $t$ to the state \parencite[cf.][]{Alvo.2023}}. We distinguish between the random process $\mathbf \Xi_{1:T}$ and a realized sample path $\boldsymbol \xi_{1:T}$. Conditional on an initial state $\mathbf s_1$ and a sample path realization $\boldsymbol \xi_{1:T}$, the system evolves deterministically, and the sample-path average cost of $\pi$ is
\[
    V(\mathbf s_1;\pi,\boldsymbol \xi_{1:T}) := \frac{1}{T} \sum_{t=1}^T c(\mathbf s_t,\pi(\mathbf s_t),\boldsymbol \xi_t),
    \quad \text{s.t.} \quad
    \mathbf s_{t+1} = \Gamma(\mathbf s_t, \pi(\mathbf s_t) , \boldsymbol \xi_t) \quad \forall t \in \{1, \ldots, T-1\}.
\]
We seek an optimal policy $\pi^*$ that minimizes the expected sample-path average cost, where the expectation is taken over the initial state and the exogenous stochastic process:
\[
    \pi^* \in \arg\min_{\pi} \ \mathbb{E}_{\mathbf{S}_1, \mathbf{\Xi}_{1:T}}\!\left[V(\mathbf{S}_1;\pi,\mathbf{\Xi}_{1:T})\right].
\]

\subsection{Hindsight Differentiable Policy Optimization}
\label{sec:hdpo}

We adopt the HDPO framework of \textcite{Madeka.2022}, \textcite{Bottcher.2023} and \textcite{Alvo.2023} to {optimize the policy}. We consider a parameterized differentiable policy $\pi_{\weights}$ (e.g., a neural network) with parameter vector $\weights \in \mathbb{R}^{n_{\weights}}$, and define the parameter optimization problem as
\[
    \weights^* \in \arg\min_{\weights} J(\weights) \quad \text{  with  } \quad J(\weights) := \mathbb{E}_{\mathbf{S}_1, \mathbf{\Xi}_{1:T}}\!\left[V(\mathbf{S}_1; \pi_{\weights}, \mathbf{\Xi}_{1:T})\right].
\]
Parameter updates use stochastic gradient descent (SGD) with learning rate $\eta > 0$,
\[
    \weights' = \weights - \eta \hat \nabla J(\weights),
\]
or a more advanced gradient-based stochastic optimizer such as Adam \parencite{Adam}. To estimate $\nabla J(\weights)$, we use pathwise gradients obtained by automatic differentiation through a differentiable simulator. Unlike score-function estimators such as REINFORCE \parencite{Williams.1992}, pathwise gradients propagate sensitivities directly along the trajectory. This typically yields lower-variance gradient estimates under delayed costs and permits a deterministic training policy, avoiding the training--evaluation mismatch caused by stochastic exploration. In each iteration, we sample a batch of {$N \in \mathbb{N}$} trajectories under the current policy $\pi_{\weights}$, roll them out in the differentiable simulator, and backpropagate through each sample path. This yields the gradient estimator
\[
    \hat \nabla J(\weights) = \frac{1}{N} \sum_{n=1}^N \nabla V\left(\mathbf s_1^{(n)};\pi_{\weights},\boldsymbol \xi_{1:T}^{(n)}\right).
\]
Under mild regularity conditions, this estimator is unbiased for the true gradient for a fully differentiable policy \parencite{Suh.2022}. HDPO alternates between simulating rollouts and updating the policy parameters.

\section{Constrained Hindsight Differentiable Policy Optimization}
\label{sec:method}

{We develop a policy that makes HDPO compatible with the hard constraints in \eqref{eq:feasible-set} while preserving differentiability. Starting from a potentially infeasible target action, a differentiable projection followed by an integer mapping ensure discrete feasibility while allowing gradients to flow end to end (Section~\ref{sec:architecture}). We then provide theoretical guarantees on the reachability of feasible actions and the selection of frontier actions that distinguish our integer mapping from naive flooring (Section~\ref{sec:guarantees}). All proofs are provided in Appendix~\ref{app:proofs}.}

\subsection{Integer-Feasible Policy Architecture}
\label{sec:architecture}

The central challenge is that the action must satisfy hard interdependent constraints and ultimately be integer, while discrete optimization itself does not provide useful local Jacobians for HDPO. We therefore augment the neural network, which carries the learnable parameters $\weights$ and maps the state $\mathbf s_t$ to a continuous target $\mathbf z_t \in \mathbb{R}_{\ge 0}^{n}$, with two sequential modules: the differentiable projection enforces feasibility while relaxing integrality, and the integer mapping converts the result into an implementable discrete action $\mathbf x_t \in \mathcal{A}(\mathbf s_t)$. Figure~\ref{fig:full_pipeline} illustrates the resulting forward and backward passes within HDPO. {We introduce the two modules in turn, deriving the analytical gradients of the projection (Section~\ref{sec:projection}) and the straight-through gradients of the integer mapping (Section~\ref{sec:integer_mapping}).}

\input{figures/pipeline}

\subsubsection{QP Projection and Sensitivities}
\label{sec:projection}

While $\mathbf z_t$ represents the targeted action, it need not satisfy the constraints in~\eqref{eq:feasible-set}. We therefore project $\mathbf z_t$ to a feasible continuous action $\mathbf x^*_t$ by minimizing the squared weighted Euclidean distance, using weights $w_i > 0$ with $\sum_{i\in[n]} w_i = 1$ collected in the diagonal matrix $\mathbf W := \operatorname{diag}(w_i)_{i\in[n]}$. We use $[l]$ to denote the index set $\{1,\dots,l\}$ for a positive integer $l$. Larger weights penalize deviations in coordinate $i$ more strongly, so that a priori differences in scale or importance across coordinates can be encoded. The weights are exogenous to the policy and held fixed during training. 

We restate the relaxed action space of Section~\ref{sec:problem_generic} in terms of the stacked constraint matrix $\mathbf M \in \mathbb R^{m \times n}$ with $m := m_A + m_C + n$, and the stacked right-hand side $\mathbf r_t \in \mathbb R^{m}$ as
\[
\mathcal A^{\mathrm{rel}}(\mathbf s_t) =
\{\mathbf x\in\mathbb R^{n}: \mathbf M\mathbf x\le\mathbf r_t\},
\qquad
\mathbf M :=
{\renewcommand{\arraystretch}{0.6}
\begin{bmatrix}
\mathbf A\\
\mathbf C\\
-\mathbf I
\end{bmatrix}},
\qquad
\mathbf r_t :=
{\renewcommand{\arraystretch}{0.6}
\begin{bmatrix}
\mathbf b_t\\
\mathbf k\\
\mathbf 0
\end{bmatrix}},
\]
where $\mathbf M \mathbf x \le \mathbf r_t$ summarizes the state-dependent, static, and nonnegativity constraints. The projected action $\mathbf x^*_t$ is then given as the point in $\mathcal A^{\mathrm{rel}}(\mathbf s_t)$ closest to the target $\mathbf z_t$ in terms of the squared weighted Euclidean norm $\| \cdot \|_{\mathbf W}^2$. This projection $P_t$ can be computed by solving the following QP:
\begin{equation}
    \mathbf x_t^* = P_t(\mathbf z_t) :=
    \arg\min_{\mathbf x \in \mathcal A^{\mathrm{rel}}(\mathbf s_t)} f_t(\mathbf x;\mathbf z_t) \text{ with } f_t(\mathbf x;\mathbf z_t) :=
    \frac12 \sum_{i \in [n]} w_i (x_{i,t} - z_{i,t})^2 = \frac12\|\mathbf x-\mathbf z_t\|_{\mathbf W}^2.
    \label{eq:qp_objective}
\end{equation}
Because $\mathbf W$ is positive definite, the objective is strongly convex \parencite{Boyd.2004}. Together with nonemptiness of $\mathcal A^{\mathrm{rel}}(\mathbf s_t)$ by Assumption (ii), this makes the projected action unique. The optimization problem can be solved with a primal-dual QP solver or a specialized algorithm for Euclidean projection onto convex sets, which returns both $\mathbf x_t^*$ and the optimal dual prices $\boldsymbol{\lambda}_t^*$ associated with the constraints $\mathbf M \mathbf x \le \mathbf r_t$.

{The separability of the QP gives rise to a decomposition that is central to the scalability of our approach. The constraint matrix $\mathbf M$ induces a bipartite variable--constraint graph whose connected components partition the variables into independent clusters that share no constraint. Because the objective \eqref{eq:qp_objective} is separable across variables, the projection splits into one independent QP per cluster, each of which can be solved in parallel, so the per-step projection cost is governed by the size of the largest cluster rather than the global action dimension $n$. In many applications these clusters are typically small, since constraint couplings are local, so the projection stays cheap even on large-scale instances, as demonstrated in Section~\ref{sec:experiments}.}

For HDPO, however, we need more than the projected action itself: we also need the local Jacobians of the projection map with respect to its inputs. \textcite{Amos.2017} show that the QP map is differentiable almost everywhere and subdifferentiable everywhere. {Lemma~\ref{lem:qp_jacobians} gives closed-form expressions for the two sensitivities for a locally fixed active set based on the general expressions in \textcite{Amos.2017}.} If the active constraints are linearly dependent, the target-channel Jacobian remains well defined because it does not depend on which linearly independent subset is selected. For the right-hand-side channel, on the measure-zero set where the active set is not locally fixed, the solution map is only directionally differentiable, and we use the generalized subgradient. The sensitivities with respect to the constraint coefficients can be derived analogously in case those depend on the state as well. {Corollary~\ref{cor:competition} gives structural interpretations of the Jacobians: it characterizes the response to target perturbations into three regimes, establishes that the target-channel Jacobian is nonexpansive, and describes the response to a relaxation of the state-dependent constraints.}

{Figure~\ref{fig:QP_intuition} illustrates the structural content of the two Jacobians. The Jacobian $\partial \mathbf{x}_t^*/\partial \mathbf{z}_t$ describes how infinitesimal target perturbations propagate along the active face. Corollary~\ref{cor:competition}(a) refines this effect coordinatewise: if coordinate $i$ is unconstrained at the optimum (\emph{Free}), the projection acts as the identity; if coordinate $i$ is fully determined by the active set (\emph{Pinned}), the projected action does not respond to its target; otherwise (\emph{Competing}), coordinate $i$ shares an active constraint with another coordinate, so increasing $z_{i,t}$ pushes at least one coupled coordinate downward. The Jacobian $\partial \mathbf{x}_t^*/\partial \mathbf{b}_t$ describes how relaxations of active right-hand sides enter the solution. Corollary~\ref{cor:competition}(c) shows that such relaxations are absorbed one-for-one by the active constraints, with the induced change in $\mathbf{x}_t^*$ being the minimum-$\mathbf W$-norm adjustment that achieves this absorption.}

\begin{lemma}[QP Jacobians]
\label{lem:qp_jacobians}
Let $\mathbf x_t^*$ be the unique minimizer of \eqref{eq:qp_objective}, and let $\tilde{\mathbf{M}}_t$ and $\tilde{\mathbf{r}}_t$ denote the active-constraint matrix and active right-hand side at $\mathbf{x}_t^*$, with $\mathbf{U}_t := \frac{\partial \tilde{\mathbf{r}}_t}{\partial \mathbf{b}_t}$. Suppose that the active constraints are linearly independent and the active set remains unchanged in a neighborhood of the solution. Then the projection map is locally differentiable in $(\mathbf z_t, \mathbf b_t)$, with Jacobians
\[
\frac{\partial \mathbf{x}_t^*}{\partial \mathbf{z}_t}
=
\mathbf{I}
-
\mathbf{W}^{-1}\tilde{\mathbf{M}}_t^\top
\left(\tilde{\mathbf{M}}_t\mathbf{W}^{-1}\tilde{\mathbf{M}}_t^\top\right)^{-1}
\tilde{\mathbf{M}}_t,
\quad
\frac{\partial \mathbf{x}_t^*}{\partial \mathbf{b}_t}
=
\mathbf{W}^{-1}\tilde{\mathbf{M}}_t^\top
\left(\tilde{\mathbf{M}}_t\mathbf{W}^{-1}\tilde{\mathbf{M}}_t^\top\right)^{-1}
\mathbf{U}_t.
\]
\end{lemma}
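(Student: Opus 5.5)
Since the active set is fixed on a neighborhood of $(\mathbf z_t,\mathbf b_t)$, the plan is to replace the inequality-constrained QP~\eqref{eq:qp_objective} locally by its equality-constrained version on the active face, and then differentiate the resulting closed-form solution. The first step is to write the KKT conditions at $\mathbf x_t^*$:
\[
\mathbf W(\mathbf x_t^*-\mathbf z_t)+\tilde{\mathbf M}_t^\top\tilde{\boldsymbol\lambda}_t=\mathbf 0,\qquad \tilde{\mathbf M}_t\mathbf x_t^*=\tilde{\mathbf r}_t .
\]
Inactive constraints have zero multipliers by complementary slackness, so they drop out. Linear independence of the active rows (LICQ) makes $\tilde{\boldsymbol\lambda}_t$ unique. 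Because the active set is locally constant, these two equations characterize $\mathbf x_t^*$ for all nearby inputs: the equality-constrained solution stays feasible for the inactive constraints and keeps nonnegative multipliers by continuity, and strong convexity makes it the unique minimizer of the original QP.

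The second step is to solve this linear system explicitly. The first equation gives $\mathbf x_t^*=\mathbf z_t-\mathbf W^{-1}\tilde{\mathbf M}_t^\top\tilde{\boldsymbol\lambda}_t$. Substituting into the second gives
\[
\tilde{\mathbf M}_t\mathbf W^{-1}\tilde{\mathbf M}_t^\top\tilde{\boldsymbol\lambda}_t=\tilde{\mathbf M}_t\mathbf z_t-\tilde{\mathbf r}_t .
\]
The matrix $\tilde{\mathbf M}_t\mathbf W^{-1}\tilde{\mathbf M}_t^\top$ is positive definite, and hence invertible, because $\mathbf W^{-1}\succ 0$ and $\tilde{\mathbf M}_t$ has full row rank. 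This yields the closed form
\[
\mathbf x_t^*=\bigl(\mathbf I-\mathbf W^{-1}\tilde{\mathbf M}_t^\top(\tilde{\mathbf M}_t\mathbf W^{-1}\tilde{\mathbf M}_t^\top)^{-1}\tilde{\mathbf M}_t\bigr)\mathbf z_t+\mathbf W^{-1}\tilde{\mathbf M}_t^\top(\tilde{\mathbf M}_t\mathbf W^{-1}\tilde{\mathbf M}_t^\top)^{-1}\tilde{\mathbf r}_t .
\]
Locally this expression is affine in $(\mathbf z_t,\tilde{\mathbf r}_t)$, since $\tilde{\mathbf M}_t$ consists of fixed rows of $\mathbf M$ and does not depend on the inputs.

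The final step is to differentiate. The $\mathbf z_t$-Jacobian is read off directly from the coefficient of $\mathbf z_t$. For the $\mathbf b_t$-Jacobian, apply the chain rule through $\tilde{\mathbf r}_t$. The active right-hand side is a row selection of $\mathbf r_t=(\mathbf b_t,\mathbf k,\mathbf 0)$, so $\mathbf U_t=\partial\tilde{\mathbf r}_t/\partial\mathbf b_t$ is a constant 0/1 selection matrix; its rows for active $\mathbf C$- or nonnegativity constraints are zero. This gives exactly the stated expression.

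An alternative route is to apply the implicit function theorem to the KKT system, which is the general approach of \textcite{Amos.2017}, and then eliminate $d\tilde{\boldsymbol\lambda}_t$ via the Schur complement; both routes produce the same formulas. The main obstacle is justifying that the local equality-constrained characterization is valid. That is where both hypotheses are used: linear independence of the active constraints makes the Schur complement invertible, and local constancy of the active set guarantees that $\mathbf x_t^*$ coincides with the equality-constrained solution on a whole neighborhood of the inputs.
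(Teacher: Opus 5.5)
Your proposal is correct and is essentially the paper's proof: both reduce to the KKT system on the locally fixed active set, use $\mathbf W\succ\mathbf 0$ and full row rank of $\tilde{\mathbf M}_t$ for invertibility, and eliminate the multipliers through $\tilde{\mathbf M}_t\mathbf W^{-1}\tilde{\mathbf M}_t^\top$. The differences are minor: you solve for $\mathbf x_t^*$ in closed form before differentiating, whereas the paper differentiates the reduced system first, and your appeal to continuity for nonnegative multipliers would need strict complementarity; that appeal is unnecessary anyway, because, as you note at the end, the locally-unchanged-active-set hypothesis already forces the true minimizer to satisfy the same uniquely solvable reduced KKT system.
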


\begin{corollary}[Structure of the QP Jacobians]
\label{cor:competition}
Adopt the assumptions of Lemma~\ref{lem:qp_jacobians}, and in addition suppose $\mathbf A, \mathbf C \ge 0$. For any matrix $\mathbf H$ write $[\mathbf H]_{:,i}$ for its $i$th column and write $\mathbf e_i$ for the $i$th unit vector.

\emph{(a) Target.} Each coordinate $i \in [n]$ with $x_{i,t}^* > 0$ falls into one of three regimes:
\begin{itemize}
\item \emph{Free.} If $[\tilde{\mathbf{M}}_t]_{:,i} = \mathbf{0}$, then $\frac{\partial \mathbf{x}_t^*}{\partial z_{i,t}} = \mathbf{e}_i$ and thus $\frac{\partial x_{i,t}^*}{\partial z_{i,t}}=1$.
\item Otherwise it is either:
      \emph{Pinned}, $\frac{\partial \mathbf{x}_t^*}{\partial z_{i,t}} = \mathbf{0}$; or
      \emph{Competing}, $0<\frac{\partial x_{i,t}^*}{\partial z_{i,t}}<1$ and $\frac{\partial x_{j,t}^*}{\partial z_{i,t}} < 0$ for some $j \ne i$.
\end{itemize}

{\emph{(b) Nonexpansiveness.} For every target perturbation $\Delta \mathbf z_t$, $\bigl\|\frac{\partial \mathbf{x}_t^*}{\partial \mathbf{z}_t}\,\Delta \mathbf z_t\bigr\|_{\mathbf W} \le \|\Delta \mathbf z_t\|_{\mathbf W}$.}

\emph{(c) Constraint.} $\tilde{\mathbf{M}}_t\, \frac{\partial \mathbf{x}_t^*}{\partial \mathbf{b}_t} = \mathbf{U}_t$, and for every perturbation $\Delta \mathbf b_t$,
\(
    \frac{\partial \mathbf{x}_t^*}{\partial \mathbf{b}_t} \Delta \mathbf b_t
    =
    \arg\min_{\mathbf y}
    \left\{
    \frac12\|\mathbf y\|_{\mathbf W}^2:
    \tilde{\mathbf M}_t\mathbf y
    =
    \mathbf U_t\Delta \mathbf b_t
    \right\}.
\)
\end{corollary}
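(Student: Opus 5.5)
The plan rests on one structural observation. The target-channel Jacobian $\mathbf J_t := \partial \mathbf x_t^*/\partial \mathbf z_t$ from Lemma~\ref{lem:qp_jacobians} is the $\mathbf W$-orthogonal projector onto $\ker \tilde{\mathbf M}_t$. Set $\mathbf G_t := \tilde{\mathbf M}_t\mathbf W^{-1}\tilde{\mathbf M}_t^\top$, which is positive definite because the active rows are linearly independent. Set $\mathbf B_t := \tilde{\mathbf M}_t \mathbf W^{-1/2}$. Then
\[
\mathbf P_t := \mathbf W^{1/2}\mathbf J_t\mathbf W^{-1/2} = \mathbf I - \mathbf B_t^\top(\mathbf B_t\mathbf B_t^\top)^{-1}\mathbf B_t
\]
is symmetric and idempotent, i.e., a Euclidean orthogonal projector. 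In addition, $\tilde{\mathbf M}_t\mathbf J_t = \mathbf 0$.

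Part (b) then follows immediately. We have $\|\mathbf J_t\Delta\mathbf z_t\|_{\mathbf W} = \|\mathbf P_t \mathbf W^{1/2}\Delta\mathbf z_t\|_2 \le \|\mathbf W^{1/2}\Delta\mathbf z_t\|_2 = \|\Delta \mathbf z_t\|_{\mathbf W}$, since orthogonal projectors have operator norm at most one.

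Part (c) is a direct computation. First, $\tilde{\mathbf M}_t \mathbf W^{-1}\tilde{\mathbf M}_t^\top \mathbf G_t^{-1}\mathbf U_t = \mathbf U_t$. For the variational characterization, write the KKT conditions of $\min \frac12\|\mathbf y\|_{\mathbf W}^2$ subject to $\tilde{\mathbf M}_t\mathbf y = \mathbf c$. Stationarity gives $\mathbf W\mathbf y = \tilde{\mathbf M}_t^\top\boldsymbol\mu$. Substituting into the constraint gives $\boldsymbol\mu = \mathbf G_t^{-1}\mathbf c$, so $\mathbf y = \mathbf W^{-1}\tilde{\mathbf M}_t^\top\mathbf G_t^{-1}\mathbf c$. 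The objective is strongly convex, so this minimizer is unique. Taking $\mathbf c = \mathbf U_t\Delta\mathbf b_t$ reproduces $\frac{\partial \mathbf{x}_t^*}{\partial \mathbf{b}_t}\Delta\mathbf b_t$ from Lemma~\ref{lem:qp_jacobians}.

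For part (a), let $\mathbf m_i := [\tilde{\mathbf M}_t]_{:,i}$.

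\emph{Free case.} If $\mathbf m_i=\mathbf 0$, then $\tilde{\mathbf M}_t\mathbf e_i=\mathbf 0$, hence $\mathbf J_t\mathbf e_i = \mathbf e_i$.

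\emph{Diagonal entry.} Otherwise, $[\mathbf J_t]_{ii} = 1 - w_i^{-1}\mathbf m_i^\top \mathbf G_t^{-1}\mathbf m_i < 1$, because $\mathbf G_t^{-1}\succ 0$ and $\mathbf m_i\neq \mathbf 0$. We also have $[\mathbf J_t]_{ii} = [\mathbf P_t]_{ii} = \|\mathbf P_t\mathbf e_i\|_2^2 \ge 0$, using symmetry and idempotence of $\mathbf P_t$.

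\emph{Pinned case.} If $[\mathbf J_t]_{ii}=0$, then $\mathbf P_t\mathbf e_i=\mathbf 0$. Since $\mathbf J_t \mathbf e_i = w_i^{1/2}\,\mathbf W^{-1/2}\mathbf P_t\mathbf e_i$, it follows that $\mathbf J_t\mathbf e_i=\mathbf 0$.

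\emph{Competing case.} Otherwise $0<[\mathbf J_t]_{ii}<1$, and it remains to produce a negative off-diagonal entry in column $i$. This is the step I expect to need the most care, and it uses $\mathbf A,\mathbf C\ge 0$ together with $x_{i,t}^*>0$.

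Because $x^*_{i,t}>0$, the nonnegativity row $-\mathbf e_i^\top$ is inactive. Every other $-\mathbf I$ row has a zero in column $i$. So $\mathbf m_i\neq\mathbf 0$ forces an active row $k$ that comes from $\mathbf A$ or $\mathbf C$ with $\tilde M_{ki}>0$, and this entire row is nonnegative.

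Now read off the $k$th entry of $\tilde{\mathbf M}_t\mathbf J_t\mathbf e_i=\mathbf 0$:
\[
\tilde M_{ki}[\mathbf J_t]_{ii} + \sum_{j\ne i}\tilde M_{kj}[\mathbf J_t]_{ji} = 0.
\]
The first term is strictly positive and every $\tilde M_{kj}\ge 0$. Hence some $j\neq i$ with $\tilde M_{kj}>0$ must satisfy $[\mathbf J_t]_{ji} = \partial x_{j,t}^*/\partial z_{i,t} <0$, which is the Competing regime.

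The subtle point to verify is the choice of row $k$. It must be an $\mathbf A$/$\mathbf C$ row, not a nonnegativity row, so that the sign argument applies. That is exactly where the hypothesis $x_{i,t}^*>0$ enters.
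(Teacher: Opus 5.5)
Your proposal is correct and follows essentially the same route as the paper. It uses the same Euclidean orthogonal projector $\mathbf W^{1/2}(\partial \mathbf x_t^*/\partial \mathbf z_t)\mathbf W^{-1/2}$ for (a)--(b), the identity $[\partial \mathbf x_t^*/\partial \mathbf z_t]_{ii}=\|\mathbf P_t\mathbf e_i\|_2^2$, the same KKT computation for (c), and the same sign argument on an active nonnegative $\mathbf A$/$\mathbf C$ row. The one difference is the case split in (a): you decide Pinned versus Competing by whether the diagonal entry vanishes and read the negative off-diagonal entry directly off one active row containing $i$, whereas the paper argues by contrapositive that a column with no negative entry must be identically zero, so your version is slightly more direct.
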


\begin{figure}[t]
\centering

\begin{subfigure}[t]{0.45\textwidth}
\centering
\begin{tikzpicture}[scale=0.65, >=stealth]
    \node at (2,4.15) {\small Perturbing $\mathbf{z}$};

    \def\I{3.2}

    \fill[feasible] (0,0) -- (\I,0) -- (0,\I) -- cycle;

    \draw[->] (0,0) -- (4.1,0) node[right] {$x_1$};
    \draw[->] (0,0) -- (0,4.1) node[above] {$x_2$};

    \draw[neutral constraint] (\I,0) -- (0,\I);

    \node[below] at (\I,0) {$b$};
    \node[left]  at (0,\I) {$b$};

    \coordinate (z)  at (3.35,2.45);
    \coordinate (zp) at (3.95,2.45);

    \coordinate (x)  at (2.05,1.15);
    \coordinate (xp) at (2.35,0.85);

    \draw[projection guide] (z) -- (x);
    \draw[projection guide, draw=black!50] (zp) -- (xp);

    \draw[perturbation zarrow] (z) -- (zp);
    \draw[perturbation xarrow] (x) -- (xp);

    \node[target action, label=above:{$\mathbf{z}$}] at (z) {};
    \node[perturbed target, label=right:{$\mathbf{z}+\delta_\mathbf{z}$}] at (zp) {};

    \node[projected action, label=below left:{$\mathbf{x}^*$}] at (x) {};
    \node[perturbed projected, label=right:{$\mathbf{x}^*+\delta_\mathbf{x}$}] at (xp) {};
\end{tikzpicture}
\caption{Differentiating w.r.t the target point.}
\end{subfigure}
\hfill
\begin{subfigure}[t]{0.45\textwidth}
\centering
\begin{tikzpicture}[scale=0.65, >=stealth]
    \node at (2,4.15) {\small Perturbing $b$};

    \def\Iold{2.5}
    \def\Inew{3.45}

    \fill[feasible region] (0,0) -- (\Iold,0) -- (0,\Iold) -- cycle;
    \fill[perturbed feasible region] (0,\Iold) -- (0,\Inew) -- (\Inew,0) -- (\Iold,0) -- cycle;

    \draw[->] (0,0) -- (4.1,0) node[right] {$x_1$};
    \draw[->] (0,0) -- (0,4.1) node[above] {$x_2$};

    \draw[neutral constraint] (\Iold,0) -- (0,\Iold);
    \draw[perturbed constraint] (\Inew,0) -- (0,\Inew);

    \node[below] at (\Iold,0) {$b$};
    \node[left]  at (0,\Iold) {$b$};

    \node[below] at (\Inew+0.4,0) {$b+\delta_b$};
    \node[left]  at (0,\Inew) {$b+\delta_b$};

    \def\zx{3.35}
    \def\zy{2.25}
    \coordinate (zR) at (\zx,\zy);

    \coordinate (x)  at ({(\zx - \zy + \Iold)/2},{(\zy - \zx + \Iold)/2});
    \coordinate (xp) at ({(\zx - \zy + \Inew)/2},{(\zy - \zx + \Inew)/2});

    \draw[projection guide] (zR) -- (x);
    \draw[projection guide, draw=black!50] (zR) -- (xp);

    \draw[perturbation xarrow] (x) -- (xp);

    \node[target action, label=above:{$\mathbf{z}$}] at (zR) {};

    \node[projected action, label=left:{$\mathbf{x}^*$}] at (x) {};
    \node[perturbed projected, label=right:{$\mathbf{x}^*+\delta_\mathbf{x}$}] at (xp) {};
\end{tikzpicture}
\caption{Differentiating w.r.t the right-hand side.}
\end{subfigure}
\caption{Intuition for differentiating through a Euclidean projection with a single active constraint. }
\label{fig:QP_intuition}
\end{figure}

{As a map, the projection $P_t$ is firmly nonexpansive and hence $1$-Lipschitz in $\|\cdot\|_{\mathbf W}$, a classical property of metric projections onto convex sets that requires no regularity assumption \parencite[e.g.,][]{Bauschke.2017}. Corollary~\ref{cor:competition}(b) is the local counterpart for the backward pass: the target-channel Jacobian is an orthogonal projector and therefore nonexpansive. As the backward pass through the differentiable simulator multiplies the per-step Jacobians according to the chain rule, the projection neither amplifies target-side perturbations nor inflates the Lipschitz constant of the decision rule, so the target channel does not cause gradient explosion, which is a potential failure mode of HDPO \parencite[see][]{Suh.2022}.}

Corollary~\ref{cor:competition}(b) also quantifies the rank loss that \textcite{Markgraf.2026} identify in differentiable optimization layers: $\partial \mathbf x_t^*/\partial \mathbf z_t$ has rank $n - \operatorname{rank}(\tilde{\mathbf M}_t)$, so each linearly independent active constraint removes one direction from the target channel. However, in our case, the signal is routed rather than lost. By Lemma~\ref{lem:qp_jacobians}, $\partial \mathbf x_t^*/\partial \mathbf b_t$ has rank equal to the number of active state-dependent constraints, so each direction lost to such a constraint reappears in the right-hand-side channel, where it reaches the earlier decisions that made the resource scarce. Active static constraints lose rank without compensation, which is natural, since a binding fixed capacity cannot be relaxed by any decision.

In summary, the projection is unique, differentiable almost everywhere, and has stable, interpretable sensitivities with respect to the action targets. By itself, however, it does not guarantee integrality.

\subsubsection{Integer Mapping}
\label{sec:integer_mapping}

To obtain an integer action $\mathbf x_t \in \mathcal A(\mathbf s_t)$ close to $\mathbf z_t$ under the weighted Euclidean metric \eqref{eq:qp_objective} without solving a separate integer program, we use a local dual-informed mapping around $\mathbf{x}_t^*$. Denote this mapping by $\mathrm{IM}_t$, so the implemented action is $\mathbf x_t=\mathrm{IM}_t(\mathbf x_t^*,\mathbf z_t,\boldsymbol{\lambda}_t^*(\mathbf z_t))$. The mapping proceeds in three steps. First, initialize at the componentwise floor $\mathbf x_t^F:=\lfloor \mathbf{x}_t^* \rfloor$, which is integer and feasible because the relaxed feasible set is downward closed by Assumption (i). Second, compute a priority score for each coordinate,
\[
    \sigma_{i,t}
    =
    f_t(\mathbf x_t^F;\mathbf z_t)
    -
    f_t(\mathbf x_t^F + \mathbf e_i;\mathbf z_t)
    -
    [\mathbf M^\top \boldsymbol{\lambda}_t^*]_i.
\]
The first two terms give the exact one-step change in the projection objective at the floor point, and the last term penalizes implied constraint usage via the dual prices. Third, sort coordinates by decreasing $\sigma_{i,t}$ and traverse the list once: at each coordinate, accept the unit increment only if the resulting action remains feasible and does not increase \(f_t(\cdot;\mathbf z_t)\). This greedy local search augments the conservative floor solution with feasibility-preserving upward moves that are most promising under the local dual information.

It remains to specify how gradients pass through the integer mapping. Since the integrality mapping is piecewise constant in $\mathbf{x}_t^*$, its exact Jacobian is zero almost everywhere and undefined at the integer boundaries, providing no useful learning signal. We therefore replace its Jacobian with a straight-through estimator (STE) \parencite[cf.][]{Bottcher.2023} that treats the integer-mapping step as the identity during backpropagation, unaffected by the target $\mathbf z_t$ and the dual prices $\boldsymbol{\lambda}_t^*$:
\[
    \widehat{\frac{\partial \mathbf x_t}{\partial \mathbf x_t^*}} := \mathbf I, \quad \widehat{\frac{\partial \mathbf x_t}{\partial \mathbf z_t}} := \mathbf 0, \quad \widehat{\frac{\partial \mathbf x_t}{\partial \boldsymbol{\lambda}_t^*}} := \mathbf 0.
\]
This surrogate acts as a linear interpolation of the underlying step function in $\mathbf{x}_t^*$ and is analogous to differentiating the continuous relaxation of a discrete optimization problem \parencite[cf.][]{Wilder.2019}. It trades biased gradients in the backward pass for forward-pass fidelity, since integer actions are used throughout training so the visited states coincide with those at evaluation. We justify this and other choices in the ablation study in Section~\ref{sec:ablation}.

\subsection{Theoretical Guarantees}
\label{sec:guarantees}

We provide three structural guarantees for the proposed policy architecture: a projection error bound, a completeness property, and a frontier selection result. First, we bound the error introduced by the two-step mapping from $\mathbf z_t$ to $\mathbf x_t$ via $\mathbf x^*_t$ with respect to the optimal integer solution $\mathbf x^I_t$ to the discrete projection problem. Proposition~\ref{prop:hybrid_rounding_gap} bounds both the projection-metric gap to the optimal integer solution and the corresponding decision distance. The bound is expressed in absolute terms for near-feasible $\mathbf z_t$ to avoid distortions resulting from tiny denominators and otherwise in relative terms. Absolute bounds for far-from-feasible $\mathbf z_t$ are uninformative since the slope of the quadratic projection distance becomes arbitrarily steep. This shows that the two-step mapping adds limited error in terms of the projection objective and the actual decision. This justifies the use of the tractable two-step mapping for the nondifferentiable discrete projection problem.

\begin{proposition}[Bounded integer mapping error]
\label{prop:hybrid_rounding_gap}
Fix a state \(\mathbf s_t\) and target \(\mathbf z_t\), and write \(f_t(\mathbf x):=f_t(\mathbf x;\mathbf z_t)\). Let \(\mathbf x_t^*\in\arg\min_{\mathbf x\in\mathcal A^{\mathrm{rel}}(\mathbf s_t)} f_t(\mathbf x)\)
be the relaxed QP solution from \eqref{eq:qp_objective}, and let
\(\mathbf x_t^I\in\arg\min_{\mathbf x\in\mathcal A(\mathbf s_t)} f_t(\mathbf x)\)
be an optimal integer feasible solution. Define \(R_t:=\|\mathbf z_t-\mathbf x_t^*\|_{\mathbf W}\). Suppose that
\(\mathbf x_t\in\mathcal A(\mathbf s_t)\) is obtained from \(\mathbf x_t^*\)
via \(\mathrm{IM}_t\). Then, for any \(\rho>0\):
\begin{itemize}
\item[(i)] \emph{Objective gap.}
\[
R_t\le \rho
\ \Rightarrow\
f_t(\mathbf x_t)-f_t(\mathbf x_t^I)\le \tfrac12+\rho,
\ \
R_t>\rho
\ \Rightarrow\
\frac{f_t(\mathbf x_t)-f_t(\mathbf x_t^I)}{f_t(\mathbf x_t^I)}
\le
\frac{1}{\rho^2}+\frac{2}{\rho}.
\]
\item[(ii)] \emph{Decision distance.}
\[
R_t\le \rho
\ \Rightarrow\
\|\mathbf x_t-\mathbf x_t^I\|_{\mathbf W}
\le 1+\sqrt{1+2\rho},
\ \
R_t>\rho
\ \Rightarrow\
\frac{\|\mathbf x_t-\mathbf x_t^I\|_{\mathbf W}}{R_t}
\le
\frac{1}{\rho}+\sqrt{\frac{1}{\rho^2}+\frac{2}{\rho}}.
\]
\end{itemize}
\end{proposition}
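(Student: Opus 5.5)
The argument rests on three elementary facts about $\mathrm{IM}_t$ and the relaxed projection, followed by two triangle-type estimates. Throughout, the key normalization is $\sum_i w_i = 1$. It gives $\|\mathbf v\|_{\mathbf W}\le 1$ for every $\mathbf v$ with $|v_i|\le 1$ componentwise, since then $\|\mathbf v\|_{\mathbf W}^2=\sum_i w_i v_i^2\le\sum_i w_i=1$.

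\emph{Fact 1 (monotonicity of $\mathrm{IM}_t$).} The greedy pass accepts a unit increment only if it does not increase $f_t$. Hence $f_t(\mathbf x_t)\le f_t(\mathbf x_t^F)$, with $\mathbf x_t^F=\lfloor\mathbf x_t^*\rfloor$.

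\emph{Fact 2 (closeness to $\mathbf x_t^*$).} Write $\mathbf x_t^F=\mathbf x_t^*-\mathbf d$ with $0\le d_i<1$, so $\|\mathbf d\|_{\mathbf W}\le 1$. Each coordinate is incremented at most once, so $\mathbf x_t-\mathbf x_t^*\in(-1,1)^n$ componentwise and therefore $\|\mathbf x_t-\mathbf x_t^*\|_{\mathbf W}\le 1$.

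\emph{Fact 3 (quadratic growth of the projection).} By the variational inequality characterizing the $\mathbf W$-projection onto the convex set $\mathcal A^{\mathrm{rel}}(\mathbf s_t)$, every relaxed-feasible $\mathbf y$ satisfies
\[
f_t(\mathbf y)-f_t(\mathbf x_t^*)\ \ge\ \tfrac12\|\mathbf y-\mathbf x_t^*\|_{\mathbf W}^2 .
\]
This holds because $f_t$ is exactly quadratic with Hessian $\mathbf W$ and $\langle \mathbf W(\mathbf x_t^*-\mathbf z_t),\mathbf y-\mathbf x_t^*\rangle\ge 0$. Finally, $\mathcal A(\mathbf s_t)\subseteq\mathcal A^{\mathrm{rel}}(\mathbf s_t)$ gives $f_t(\mathbf x_t^I)\ge f_t(\mathbf x_t^*)=R_t^2/2$.

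For part (i), the triangle inequality gives $\|\mathbf x_t^F-\mathbf z_t\|_{\mathbf W}\le R_t+\|\mathbf d\|_{\mathbf W}\le R_t+1$. Combined with Fact 1,
\[
f_t(\mathbf x_t)\le \tfrac12 (R_t+1)^2 = \tfrac12 R_t^2+R_t+\tfrac12,
\]
and subtracting the lower bound $f_t(\mathbf x_t^I)\ge R_t^2/2$ yields
\[
f_t(\mathbf x_t)-f_t(\mathbf x_t^I)\le R_t+\tfrac12 .
\]
If $R_t\le\rho$ this is at most $\tfrac12+\rho$. If $R_t>\rho$, dividing by $f_t(\mathbf x_t^I)\ge R_t^2/2$ gives the ratio bound $2/R_t+1/R_t^2$, which is decreasing in $R_t$ and hence at most $2/\rho+1/\rho^2$.

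For part (ii), split the distance through $\mathbf x_t^*$:
\[
\|\mathbf x_t-\mathbf x_t^I\|_{\mathbf W}\le \|\mathbf x_t-\mathbf x_t^*\|_{\mathbf W}+\|\mathbf x_t^*-\mathbf x_t^I\|_{\mathbf W}.
\]
The first term is at most $1$ by Fact 2. For the second, apply Fact 3 with $\mathbf y=\mathbf x_t^I$, then use optimality of $\mathbf x_t^I$ over the integer set, which contains $\mathbf x_t^F$:
\[
f_t(\mathbf x_t^I)-f_t(\mathbf x_t^*)\le f_t(\mathbf x_t^F)-f_t(\mathbf x_t^*)\le R_t+\tfrac12 .
\]
Hence $\|\mathbf x_t^*-\mathbf x_t^I\|_{\mathbf W}\le\sqrt{1+2R_t}$, and altogether $\|\mathbf x_t-\mathbf x_t^I\|_{\mathbf W}\le 1+\sqrt{1+2R_t}$. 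This gives the absolute case for $R_t\le\rho$. Dividing by $R_t$ gives $1/R_t+\sqrt{1/R_t^2+2/R_t}$, again decreasing in $R_t$, which gives the relative case for $R_t>\rho$.

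The main obstacle is the decision-distance bound. Routing the triangle inequality through $\mathbf z_t$, or applying Fact 3 to both $\mathbf x_t$ and $\mathbf x_t^I$, yields only weaker constants such as $2\sqrt{1+2\rho}$. The sharp constant $1+\sqrt{1+2\rho}$ depends on the structural observation in Fact 2, namely that the single greedy pass moves each coordinate by at most one unit above the floor. That observation is what keeps $\mathbf x_t$ inside the unit $\mathbf W$-ball around $\mathbf x_t^*$, so I would verify it carefully against the definition of $\mathrm{IM}_t$.
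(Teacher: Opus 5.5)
Your proof is correct and follows the paper's argument essentially step for step. Both proofs use the same ingredients: floor feasibility with the non-increasing greedy pass, the bound $f_t(\mathbf x_t^F)-f_t(\mathbf x_t^*)\le R_t+\tfrac12$, the projection inequality for $\|\mathbf x_t^I-\mathbf x_t^*\|_{\mathbf W}$, and the coordinatewise unit bound on $\mathbf x_t-\mathbf x_t^*$. Your triangle inequality through $\mathbf z_t$ is the same estimate as the paper's expansion plus Cauchy--Schwarz.

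One harmless slip: $\mathbf x_t-\mathbf x_t^*$ lies in $(-1,1]^n$, not $(-1,1)^n$. An accepted increment at a coordinate where $x_{i,t}^*$ is already integral gives a difference of exactly $1$. This does not affect the bound, since you only need $|v_i|\le 1$.
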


Second, we show that the integer mapping, in contrast to naive flooring, is complete on the entire action set by analyzing the reachability for the differentiable projection and integer mapping modules. We write
\[
    \Phi_t^{\mathrm{IM}}(\mathbf z_t)
    := \mathrm{IM}_t\!\bigl(P_t(\mathbf z_t),\,\mathbf z_t,\,\boldsymbol{\lambda}_t^*(\mathbf z_t)\bigr),
    \qquad
    \Phi_t^{\mathrm{F}}(\mathbf z_t)
    := \lfloor P_t(\mathbf z_t)\rfloor,
\]
for the QP--integer-mapping and QP--floor compositions, evaluated at a target $\mathbf z_t$. We call the minimal expressiveness requirement on such maps \emph{completeness} and define it as follows.

\begin{definition}[Completeness]
\label{def:completeness}
Fix a state $\mathbf s_t \in \mathcal{S}$ and let $\mathcal Q\subseteq\mathcal A(\mathbf s_t)$. A measurable map $\mathcal{R}\colon\mathbb R_{\ge 0}^{n}\to\mathcal A(\mathbf s_t)$, as a function of the target $\mathbf z_t$, is \emph{complete on $\mathcal Q$} if the target preimage $\mathcal{R}^{-1}(\mathbf q) := \{\mathbf z_t\in\mathbb R_{\ge 0}^{n} : \mathcal{R}(\mathbf z_t)=\mathbf q\}$ has positive Lebesgue measure for every $\mathbf q\in\mathcal Q$. It is \emph{incomplete on $\mathcal Q$} at $\mathbf s_t$ if $\mathcal{R}^{-1}(\mathbf q)$ has Lebesgue measure zero for some $\mathbf q\in\mathcal Q$.
\end{definition}

Completeness requires that every feasible integer action in $\mathcal Q$ can be selected from a non-negligible set of continuous targets. This matters for learning because if an action has only a measure-zero preimage, it is practically impossible for a smoothly parameterized decision rule to select it. To locate where the two maps differ, we partition the feasible actions by constraint activity. We write
\[
    \mathcal A^{\mathrm{act}}(\mathbf s_t)
    :=
    \bigl\{\mathbf q\in\mathcal A(\mathbf s_t) :
    (\mathbf A\mathbf q)_l = b_{l,t} \text{ for some } l\in[m_A],
    \text{ or } (\mathbf C\mathbf q)_j = k_j \text{ for some } j\in[m_C]\bigr\}
\]
for the active boundary, i.e., the set of integer actions at which some constraint is active, and $\mathcal A^{\mathrm{slack}}(\mathbf s_t) := \mathcal A(\mathbf s_t)\setminus\mathcal A^{\mathrm{act}}(\mathbf s_t)$ for those at which none is. Proposition \ref{prop:completeness} contrasts the two compositions: our integer mapping is complete everywhere, whereas naive flooring is complete only away from the active boundary. This distinction is essential since the active boundary contains the actions that fully utilize some resource, which are of particular practical importance. Figure~\ref{fig:completeness} illustrates the difference in a two-dimensional single-constraint example. Since completeness of our integer mapping holds for any positive $\mathbf{W}$, the weighting rescales the target space rather than restricting the reachable action set. The incompleteness of flooring holds for every positive $\mathbf{W}$ as well, so it is structural and cannot be remedied by reweighting the projection.

\begin{proposition}[Completeness of the integer mapping and flooring]
\label{prop:completeness}
\leavevmode
\begin{itemize}
    \item[(a)] For all constraint structures $(\mathbf A, \mathbf b_t, \mathbf C, \mathbf k)$ satisfying Assumptions $(i)-(ii)$, positive weights $\mathbf{W}$, and every state $\mathbf s_t \in \mathcal{S}$, $\Phi_t^{\mathrm{IM}}$ is complete on $\mathcal A(\mathbf s_t)$ and $\Phi_t^{\mathrm{F}}$ is complete on $\mathcal A^{\mathrm{slack}}(\mathbf s_t)$.
    \item[(b)] There is a constraint structure $(\mathbf A, \mathbf b_t, \mathbf C, \mathbf k)$ satisfying Assumptions (i)-(ii), and a state $\mathbf s_t \in \mathcal{S}$ for which $\Phi_t^{\mathrm{F}}$ is not complete on $\mathcal A^{\mathrm{act}}(\mathbf s_t)$ for every set of positive weights $\mathbf{W}$.
\end{itemize}
\end{proposition}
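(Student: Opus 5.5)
The plan is to prove (a) by a single "base point plus small ball" construction that works for every $\mathbf q\in\mathcal A(\mathbf s_t)$. The tool is the global $1$-Lipschitz property of $P_t$ in $\|\cdot\|_{\mathbf W}$, which holds without regularity assumptions. Part (b) then follows from an explicit two-dimensional example in which the floor preimage of an active integer point is a single ray.

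\emph{Part (a), integer mapping.} Fix $\mathbf q\in\mathcal A(\mathbf s_t)$ and let $S:=\{i: q_i\ge 1\}$. If $S=\emptyset$, then $\mathbf q=\mathbf 0$, and the same argument applies with all coordinates in the "zero" group. Define the base point $\mathbf y$ by $y_i:=q_i-\tfrac14$ for $i\in S$ and $y_j:=0$ for $j\notin S$.
\begin{itemize}
\item Since $\mathbf 0\le\mathbf y\le\mathbf q$, downward closure (Assumption (i)) gives $\mathbf y\in\mathcal A^{\mathrm{rel}}(\mathbf s_t)$, so $P_t(\mathbf y)=\mathbf y$.
\item Take targets $\mathbf z_t\in B:=\{\mathbf z\ge\mathbf 0:\|\mathbf z-\mathbf y\|_{\mathbf W}<r\}$ with $r$ small. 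Because $\mathbf y$ lies in the closed orthant, $B$ is the intersection of a ball with an orthant around a boundary point, so it has positive Lebesgue measure.
\item Nonexpansiveness gives $\|P_t(\mathbf z_t)-\mathbf y\|_{\mathbf W}<r$. Since the weights are positive, this controls every coordinate. For $r$ small relative to $\min_i\sqrt{w_i}/4$ we get $\lfloor P_t(\mathbf z_t)\rfloor=\mathbf q-\sum_{i\in S}\mathbf e_i=:\mathbf x^F$.
\end{itemize}
Next I check the greedy pass, using $f_t(\mathbf x+\mathbf e_i)-f_t(\mathbf x)=\tfrac{w_i}{2}\bigl(1+2(x_i-z_{i,t})\bigr)$, which depends only on coordinate $i$.
\begin{itemize}
\item For $i\in S$ we have $z_{i,t}-(q_i-1)\approx\tfrac34>\tfrac12$. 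So the unit increment strictly decreases $f_t$, and it stays feasible because every intermediate point is $\le\mathbf q$.
\item For $j\notin S$ we have $z_{j,t}\approx 0<\tfrac12$. So the increment strictly increases $f_t$ and is rejected.
\end{itemize}
Since each acceptance test depends only on its own coordinate and on feasibility, the outcome does not depend on the order induced by $\sigma_{i,t}$, and hence not on the dual prices. The pass therefore returns exactly $\mathbf q$ on all of $B$.

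\emph{Part (a), flooring on slack points.} For $\mathbf q\in\mathcal A^{\mathrm{slack}}(\mathbf s_t)$, all $A$- and $C$-constraints are strict at $\mathbf q$. By continuity, $\mathbf y:=\mathbf q+\varepsilon\mathbf 1$ is relaxed-feasible for some small $\varepsilon\in(0,\tfrac12)$. Taking a small ball around $\mathbf y$, nonexpansiveness keeps $P_t(\mathbf z_t)$ within $(q_i,q_i+1)$ in every coordinate, so $\Phi^{\mathrm F}_t=\mathbf q$ on a set of positive measure.

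\emph{Part (b).} Take $n=2$, no $\mathbf A$-block, $\mathbf C=(1\ 1)$, $k=2$, and $\mathbf q=(1,1)\in\mathcal A^{\mathrm{act}}$. This structure satisfies (i)–(ii). If $\lfloor P_t(\mathbf z_t)\rfloor=\mathbf q$, then $P_t(\mathbf z_t)\ge(1,1)$ and its coordinates sum to at most $2$, which forces $P_t(\mathbf z_t)=(1,1)$. This point lies in the relative interior of the edge $x_1+x_2=2$, where only that constraint is active. The optimality conditions of \eqref{eq:qp_objective} then give the preimage $\{(1,1)+\lambda\mathbf W^{-1}(1,1)^\top:\lambda\ge 0\}$. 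This is a ray, which has Lebesgue measure zero for every positive $\mathbf W$.

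The main obstacle is the ball step in part (a). It must keep both the floor and the sign of each one-step change in $f_t$ uniformly controlled. It also must not rely on $\mathbf z_t$ being feasible, since Assumption (i) alone does not say whether points slightly above $\mathbf 0$ in the off-support coordinates remain feasible. Routing everything through the Lipschitz bound on $P_t$ avoids that issue.
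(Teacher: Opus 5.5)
Your proof is correct and follows essentially the paper's argument: a positive-measure neighbourhood on which continuity of $P_t$ (you use the explicit $1$-Lipschitz bound) fixes the floor, after which the coordinatewise one-step test makes the greedy pass return $\mathbf q$ regardless of the dual-based order; the slack case for flooring uses the same idea, and part (b) uses the same $x_1+x_2\le 2$ ray example. The only cosmetic difference is that you centre the IM neighbourhood at $\mathbf q-\tfrac14\sum_{i\in S}\mathbf e_i$, which makes the floor exactly $\mathbf q-\sum_{i\in S}\mathbf e_i$, whereas the paper centres it at $\mathbf q$ and handles both cases $x^F_{i,t}\in\{q_i-1,q_i\}$.
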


\begin{figure}[t]
\centering

\colorlet{ptColor00}{black!70}
\colorlet{ptColor01}{red!70!black}
\colorlet{ptColor10}{orange!85!black}
\colorlet{ptColor02}{teal!70!black}
\colorlet{ptColor11}{green!50!black}
\colorlet{ptColor20}{blue!65!black}

\tikzset{
    fill00/.style={fill=ptColor00!18, draw=none},
    fill01/.style={fill=ptColor01!16, draw=none},
    fill10/.style={fill=ptColor10!18, draw=none},
    fill02/.style={fill=ptColor02!18, draw=none},
    fill11/.style={fill=ptColor11!18, draw=none},
    fill20/.style={fill=ptColor20!18, draw=none},
    label00/.style={text=ptColor00},
    label01/.style={text=ptColor01},
    label10/.style={text=ptColor10},
    label02/.style={text=ptColor02},
    label11/.style={text=ptColor11},
    label20/.style={text=ptColor20}
}

\newcommand{\FillForZeroZero}{\fill[fill00] (-0.3,-0.3) rectangle (3.0,3.0);}
\newcommand{\FillForZeroOne}{\fill[fill01] (-0.3,-0.3) rectangle (3.0,3.0);}
\newcommand{\FillForOneZero}{\fill[fill10] (-0.3,-0.3) rectangle (3.0,3.0);}
\newcommand{\FillForZeroTwo}{\fill[fill02] (-0.3,-0.3) rectangle (3.0,3.0);}
\newcommand{\FillForOneOne}{\fill[fill11] (-0.3,-0.3) rectangle (3.0,3.0);}
\newcommand{\FillForTwoZero}{\fill[fill20] (-0.3,-0.3) rectangle (3.0,3.0);}
\newcommand{\DefineReachabilityPoints}{
    \coordinate (zerozero) at (0,0);
    \coordinate (zerohalf) at (0,0.5);
    \coordinate (zeroone) at (0,1);
    \coordinate (zeroonehalf) at (0,1.5);
    \coordinate (zerotwo) at (0,2);
    \coordinate (zerotop) at (0,2.55);

    \coordinate (halfzero) at (0.5,0);
    \coordinate (halfhalf) at (0.5,0.5);
    \coordinate (halfone) at (0.5,1);
    \coordinate (halfonehalf) at (0.5,1.5);

    \coordinate (onezero) at (1,0);
    \coordinate (onehalf) at (1,0.5);
    \coordinate (oneone) at (1,1);
    \coordinate (onetop) at (1,2.55);

    \coordinate (onehalfzero) at (1.5,0);
    \coordinate (onehalfhalf) at (1.5,0.5);
    \coordinate (onefiftyfivetop) at (1.55,2.55);

    \coordinate (conetop) at (0.55,2.55);
    \coordinate (coneright) at (2.55,0.55);

    \coordinate (twozero) at (2,0);
    \coordinate (twoone) at (2,1);

    \coordinate (rightzero) at (2.55,0);
    \coordinate (rightone) at (2.55,1);
    \coordinate (rightonefiftyfive) at (2.55,1.55);
    \coordinate (topright) at (2.55,2.55);
}

\begin{subfigure}[t]{0.47\textwidth}
\centering
\begin{tikzpicture}[scale=1.25, >=stealth, font=\scriptsize]
    \useasboundingbox (-0.32,-0.40) rectangle (3.10,2.82);
    \DefineReachabilityPoints

    \fill[black!7] (zerozero) -- (twozero) -- (zerotwo) -- cycle;

    \begin{scope}
        \clip (zerozero) rectangle (oneone);
        \FillForZeroZero
    \end{scope}
    \begin{scope}
        \clip (zeroone) rectangle (onetop);
        \FillForZeroOne
    \end{scope}
    \begin{scope}
        \clip (onezero) rectangle (rightone);
        \FillForOneZero
    \end{scope}
    \begin{scope}
        \clip (oneone) -- (onetop) -- (topright) -- cycle;
        \FillForZeroOne
    \end{scope}
    \begin{scope}
        \clip (oneone) -- (rightone) -- (topright) -- cycle;
        \FillForOneZero
    \end{scope}

    \begin{scope}
        \clip (zerotwo) -- (zerotop) -- (conetop) -- cycle;
        \FillForZeroTwo
    \end{scope}
    \begin{scope}
        \clip (twozero) -- (coneright) -- (rightzero) -- cycle;
        \FillForTwoZero
    \end{scope}

    \foreach \x in {0,1,2}{
        \draw[black!20,densely dotted] (\x,0) -- (\x,2.55);
    }
    \foreach \y in {0,1,2}{
        \draw[black!20,densely dotted] (0,\y) -- (2.55,\y);
    }
    \draw[-{Latex[length=1.8mm,width=1.2mm]}, line width=0.8pt]
        (-0.08,0) -- (2.65,0) node[right] {$x_1$};
    \draw[-{Latex[length=1.8mm,width=1.2mm]}, line width=0.8pt]
        (0,-0.08) -- (0,2.65) node[above] {$x_2$};

    \draw[black!55,line width=1.15pt] (zerotwo) -- (twozero);
    \draw[ptColor11!50,line width=1.2pt] (oneone) -- (topright);

    \foreach \pt in {zerozero,zeroone,zerotwo,onezero,oneone,twozero}{
        \fill[black] (\pt) circle (1.0pt);
    }
    \node[label00, anchor=north east, xshift=-2pt, yshift=-2pt] at (zerozero) {$(0,0)$};
    \node[label02, anchor=east, xshift=-3pt, yshift=3pt] at (zerotwo) {$(0,2)$};
    \node[label20, anchor=north, xshift=3pt, yshift=-2pt] at (twozero) {$(2,0)$};

    \node[label01, anchor=east, xshift=-4pt] at (zeroone) {$(0,1)$};
    \node[label10, anchor=north, yshift=-4pt] at (onezero) {$(1,0)$};
    \node[label11, anchor=west, xshift=2pt, yshift=0pt] at (oneone) {$(1,1)$};
\end{tikzpicture}
\caption{Floor rounding creates positive-area target regions for every feasible action except the active \((1,1)\), which it reaches only from the measure-zero target ray (green).}
\end{subfigure}
\hfill
\begin{subfigure}[t]{0.5\textwidth}
\centering
\begin{tikzpicture}[scale=1.25, >=stealth, font=\scriptsize]
    \useasboundingbox (-0.32,-0.40) rectangle (3.10,2.82);
    \DefineReachabilityPoints

    \fill[black!7] (zerozero) -- (twozero) -- (zerotwo) -- cycle;

    \begin{scope}
        \clip (zerozero) rectangle (halfhalf);
        \FillForZeroZero
    \end{scope}
    \begin{scope}
        \clip (zerohalf) rectangle (halfonehalf);
        \FillForZeroOne
    \end{scope}
    \begin{scope}
        \clip (halfzero) rectangle (onehalfhalf);
        \FillForOneZero
    \end{scope}
    \begin{scope}
        \clip (halfonehalf) -- (zeroonehalf) -- (zerotop) -- (onefiftyfivetop) -- cycle;
        \FillForZeroTwo
    \end{scope}
    \begin{scope}
        \clip (halfhalf) -- (onehalfhalf) -- (rightonefiftyfive) -- (topright)
            -- (onefiftyfivetop) -- (halfonehalf) -- cycle;
        \FillForOneOne
    \end{scope}
    \begin{scope}
        \clip (onehalfzero) -- (rightzero) -- (rightonefiftyfive)
            -- (onehalfhalf) -- cycle;
        \FillForTwoZero
    \end{scope}

    \foreach \x in {0,1,2}{
        \draw[black!20,densely dotted] (\x,0) -- (\x,2.55);
    }
    \foreach \y in {0,1,2}{
        \draw[black!20,densely dotted] (0,\y) -- (2.55,\y);
    }
    \draw[-{Latex[length=1.8mm,width=1.2mm]}, line width=0.8pt]
        (-0.08,0) -- (2.65,0) node[right] {$x_1$};
    \draw[-{Latex[length=1.8mm,width=1.2mm]}, line width=0.8pt]
        (0,-0.08) -- (0,2.65) node[above] {$x_2$};

    \draw[black!55,line width=1.15pt] (zerotwo) -- (twozero);

    \foreach \pt in {zerozero,zeroone,zerotwo,onezero,oneone,twozero}{
        \fill[black] (\pt) circle (1.0pt);
    }
    \node[label00, anchor=north east, xshift=-2pt, yshift=-2pt] at (zerozero) {$(0,0)$};
    \node[label02, anchor=east, xshift=-3pt, yshift=3pt] at (zerotwo) {$(0,2)$};
    \node[label20, anchor=north, xshift=3pt, yshift=-2pt] at (twozero) {$(2,0)$};

    \node[label01, anchor=east, xshift=-4pt] at (zeroone) {$(0,1)$};
    \node[label10, anchor=north, yshift=-4pt] at (onezero) {$(1,0)$};
    \node[label11, anchor=west, xshift=2pt] at (oneone) {$(1,1)$};
\end{tikzpicture}
\caption{Our integer mapping creates positive-area target regions for all integer feasible points, making them reachable from positive-measure target sets.}
\end{subfigure}

\caption{Preimages of the integer feasible points under floor rounding (left) and our integer mapping (right) for $x_1 + x_2 \le 2$ with $x_1, x_2 \in \mathbb{N}_0$.}
\label{fig:completeness}
\end{figure}

Third, we show that the integer mapping selects an action on the frontier of the feasible set for ambitious targets. While completeness is an expressiveness property, it does not describe which action is selected for a given target. Proposition~\ref{prop:frontier_reachability} shows that when every coordinate of the projection residual $\mathbf z_t - \mathbf x_t^*$ exceeds half a unit, the integer mapping returns an action on the integer frontier. The integer frontier is the set of feasible actions for which some unit increment is infeasible. This is desirable in practice, because leaving resources unused is wasteful if action targets are ambitious. When the QP can be decomposed, the result naturally applies to each cluster separately. 

\begin{proposition}[Frontier selection]
\label{prop:frontier_reachability}
Define the integer frontier of the feasible set as
\[
    \mathcal F(\mathbf s_t)
    := \bigl\{\mathbf y \in \mathcal A(\mathbf s_t)
       :\; \exists\, i \in [n] \text{ such that }
       \mathbf y + \mathbf e_i \notin \mathcal A(\mathbf s_t)\bigr\}.
\]
If the projection residual satisfies $z_{i,t}-x_{i,t}^* > \frac12$ for every $i \in [n]$, then \(\Phi_t^{\mathrm{IM}}(\mathbf z_t) \in \mathcal F(\mathbf s_t)\).
\end{proposition}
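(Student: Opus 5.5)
The plan is to argue directly from the mechanics of the greedy pass in $\mathrm{IM}_t$. The idea is to show that, under the residual condition, the objective test never blocks an increment. Any rejection must then be caused by infeasibility, and downward closedness (Assumption (i)) transfers that infeasibility to the final action.

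\emph{Step 1: the objective test always passes.} Because $f_t$ is separable, incrementing coordinate $i$ at a current point $\mathbf x'$ changes the objective by
\[
f_t(\mathbf x'+\mathbf e_i)-f_t(\mathbf x') = w_i\bigl(x'_i+\tfrac12-z_{i,t}\bigr).
\]
The pass visits each coordinate exactly once, so when coordinate $i$ is visited we still have $x'_i = x^F_{i,t} = \lfloor x^*_{i,t}\rfloor \le x^*_{i,t}$. By hypothesis $x^*_{i,t} < z_{i,t}-\tfrac12$. Hence the change above is strictly negative, and the ``does not increase $f_t$'' test is satisfied for every $i$. A coordinate is therefore rejected only if $\mathbf x'+\mathbf e_i \notin \mathcal A(\mathbf s_t)$.

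\emph{Step 2: not every coordinate can be accepted.} Suppose all $n$ increments were accepted. The output would be $\mathbf x_t = \lfloor \mathbf x_t^*\rfloor + \mathbf 1$, which is feasible and satisfies $\mathbf x_t > \mathbf x_t^*$ componentwise. The segment $\mathbf x_t^*+\tau(\mathbf x_t-\mathbf x_t^*)$, $\tau\in[0,1]$, lies in the convex set $\mathcal A^{\mathrm{rel}}(\mathbf s_t)$. The directional derivative of $f_t$ at $\tau=0$ is
\[
\sum_i w_i\,(x^*_{i,t}-z_{i,t})(x_{i,t}-x^*_{i,t}),
\]
and every summand is negative, since $x^*_{i,t}<z_{i,t}$ and $x_{i,t}>x^*_{i,t}$. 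So a small step along the segment strictly decreases $f_t$. This contradicts the optimality of $\mathbf x_t^*$ in \eqref{eq:qp_objective}. Hence some coordinate $i$ is rejected, and by Step 1 the reason is that $\mathbf x'+\mathbf e_i$ is infeasible, where $\mathbf x'$ is the iterate when $i$ is visited.

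\emph{Step 3: transfer infeasibility to the output.} The iterates only increase, and coordinate $i$ is never changed again after its visit. Therefore $\mathbf x' \le \mathbf x_t$ and $\mathbf x'+\mathbf e_i \le \mathbf x_t+\mathbf e_i$ componentwise. If $\mathbf x_t+\mathbf e_i$ were feasible, downward closedness of $\mathcal A(\mathbf s_t)$ would make $\mathbf x'+\mathbf e_i$ feasible, a contradiction. Thus $\mathbf x_t+\mathbf e_i\notin\mathcal A(\mathbf s_t)$, i.e., $\Phi_t^{\mathrm{IM}}(\mathbf z_t)\in\mathcal F(\mathbf s_t)$.

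The main point needing care is Step 2. It is the only place where the residual condition is used beyond the objective test, via $x^*_{i,t}<z_{i,t}$ for all $i$. The argument is independent of the priority order $\sigma_{i,t}$ and of the dual prices, so no property of $\boldsymbol\lambda^*_t$ is required.
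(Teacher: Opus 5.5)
Your proof is correct and has the same structure as the paper's: optimality of $\mathbf x_t^*$ rules out accepting all $n$ increments; the strict objective decrease $w_i(x_{i,t}^F+\tfrac12-z_{i,t})<0$ forces any rejection to come from infeasibility; and downward closedness lifts that infeasibility from the intermediate iterate to the output. The only difference is in the all-accepted case. You use the first-order condition along the segment from $\mathbf x_t^*$ to $\lfloor\mathbf x_t^*\rfloor+\mathbf 1$, which needs only $z_{i,t}>x_{i,t}^*$, whereas the paper compares objective values directly and uses the $\tfrac12$ margin there to show that each coordinate of $\lfloor\mathbf x_t^*\rfloor+\mathbf 1$ is strictly closer to $\mathbf z_t$ than $\mathbf x_t^*$ is.
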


In summary, the three results of this section show that the policy's discrete output is well behaved. The two-step mapping adds only a bounded rounding error. Furthermore, unlike naive flooring, which remains incomplete on the active boundary under every choice of weights for certain instances, our integer mapping can always reach the boundary actions that matter most in tightly constrained systems and, when the target is ambitious enough, pushes decisions onto the integer frontier.

\section{Production--Inventory Planning in General Networks}
\label{sec:application}

We describe the application of our method to a centralized periodic-review production--inventory planning problem on a general multi-echelon network. The network consists of items $\mathcal{I} = \mathcal{I}^C \cup \mathcal{I}^E$, partitioned into component items $\mathcal{I}^C$ that feed downstream production stages and end items $\mathcal{I}^E$ that face external customer demand ($\mathcal{I}^C \cap \mathcal{I}^E = \emptyset$). Items are linked by bill-of-material (BOM) relationships encoded in the matrix $\mathbf{A} \in \mathbb{N}_0^{|\mathcal{I}^C| \times |\mathcal{I}|}$, where $A_{i,j}$ denotes the number of units of component item $i \in \mathcal{I}^C$ required to produce one unit of item $j \in \mathcal{I}$. We assume the BOM graph is acyclic. Production is subject to deterministic item-specific lead times $\tau_i \in \mathbb{N}$ and shared static resource constraints. A set of resources $\mathcal{G}$ is available with $k_g \in \mathbb{N}$ units per period, and the consumption matrix $\mathbf{C} \in \mathbb{N}_0^{|\mathcal{G}| \times |\mathcal{I}|}$ specifies the units $C_{g,i}$ of resource $g \in \mathcal{G}$ consumed per unit of item $i \in \mathcal{I}$. This formulation allows for general convergent--divergent multi-echelon structures with shared resources and contains serial, assembly, and distribution networks as special cases.

External demand $\Xi_{i,t} \in \mathbb{N}_0$ arises only for end items $i \in \mathcal{I}^E$ and is modeled as an exogenous stochastic process that may be non-stationary over time, with realization $\xi_{i,t}$ observed in period $t$. Excess end-item demand is backordered. Demand for components is endogenous through the BOM. An exogenous observable feature vector $\mathbf{s}_t^{\text{feat}}$ available at the start of period $t$ summarizes demand-relevant covariates to forecast future demand. Costs are incurred for holding inventory at a per-period unit cost $h_i \in \mathbb{R}_+$ for each item $i \in \mathcal{I}$ and for backordered end-item demand at a per-period unit penalty $p_i \in \mathbb{R}_+$ for each end item $i \in \mathcal{I}^E$. We do not model setup times or setup costs. At the tactical level considered here, the review period is long relative to changeover frequency, so each item is produced in nearly every period and setup time is approximately constant across periods. The setup time can therefore be subtracted from the available capacity $\mathbf{k}$ rather than modeled through endogenous setup decisions. To the extent that setup costs represent the opportunity cost of the capacity consumed by changeovers \parencite{Karmarkar.1987}, they are captured by the same reduction. The tactical decision is thus how much to produce rather than whether to set up.

At the start of period $t$, the state $\mathbf{s}_t$ contains on-hand component inventories $\mathbf o_t^C$, end-item inventory levels, outstanding pipeline orders, and demand-relevant features. The action $\mathbf{x}_t \in \mathbb{N}_0^{|\mathcal{I}|}$ specifies the integer production quantities released for all items in period $t$. Production is constrained by material availability and shared resources, encoded respectively as $\mathbf A\mathbf x_t \le \mathbf o_t^C$ and $\mathbf C\mathbf x_t \le \mathbf k$. The transition $\Gamma$ and cost $c$ are differentiable almost everywhere and demand is exogenous, satisfying Assumptions~(iii)--(iv). This instantiates the generic MDP of Section~\ref{sec:problem_generic} with action dimension $n = |\mathcal{I}|$, $m_A = |\mathcal{I}^C|$ state-dependent constraints with right-hand side $\mathbf b_t := \mathbf o_t^C$, and $m_C = |\mathcal{G}|$ state-independent constraints. {Since material and resource consumption as well as on-hand inventories are nonnegative, $\mathbf A \ge 0, \mathbf C \ge 0, \mathbf b_t \ge \mathbf 0, \mathbf k \ge 0$ hold in every period, satisfying Assumptions~(i)-(ii).} The full MDP of the problem is described in Appendix~\ref{app:mdp}. Following \textcite{Xie.2026}, we introduce an inductive bias into the policy architecture by considering a differentiable decision rule that outputs a state-dependent echelon-base stock level, from which the action targets are derived using the order-up-to rule. Appendix~\ref{app:bs_target} gives the full construction and Section \ref{sec:ablation} provides the numerical justification for this choice.
\section{Computational Study}
\label{sec:experiments}

We evaluate the proposed policy across three sets of experiments: small instances with a known optimal policy in Section~\ref{sec:toy}, larger capacitated networks based on \textcite{Tempelmeier.1996} in Section~\ref{sec:tempelmeier}, and the industry-scale 2025 ASML Research Challenge in Section~\ref{sec:asml}. An ablation study isolates the contribution of each policy module in Section~\ref{sec:ablation}. Since existing HDPO approaches are not directly applicable under hard, interdependent action constraints (Section \ref{sec:literature}), it also serves as a comparison against design choices made in earlier work.

\paragraph{Our Policy} We use a fully connected neural network with two hidden layers and continuously differentiable exponential linear unit (CELU) activations \parencite[cf.][]{Bottcher.2023}. The width is $32$ for the small instances and $128$ for the larger instances including the case study. This is the only hyperparameter that we vary across instances to match the network expressiveness with the instance complexity. Parameter optimization uses Adam \parencite{Adam} with a base learning rate of $\eta = 10^{-3}$, momentum $\boldsymbol \beta = [0.9, 0.999]$ and a batch size of $1{,}024$. The prioritization weights $w_i$ in the QP are set inversely proportional to each item's long-run average gross requirements (external demand for end items and consumption through the BOM for component items) and normalized to sum to one. This gives priority to items with lower demand to balance the structural disadvantage of low-volume items in distance-based allocation. As shown in Section \ref{sec:guarantees}, this does not structurally restrict the policy class. Further implementation details of our policy are listed in Appendix~\ref{app:policy_implementation}.

\subsection{Baseline Policies} 
We provide an overview of the baseline policies with details deferred to Appendix~\ref{app:benchmarks}. For the ASML case study, we use the baseline configurations of \textcite{vanDijck.2026} without modification, while the hyperparameters for the other instances are given in Appendix~\ref{app:baseline_hyperparameters}.

\paragraph{Base-Stock Policies} We consider different variants of the structured echelon base-stock policies of \textcite{vanDijck.2026}. The policy raises the echelon inventory position $\mathbf{EIP}_{t}$ of each item $i$ up to a potentially time-varying base-stock level $B_{i,t}$ and allocates production resources to minimize the maximum relative shortfall. Details on the allocation policy are provided in Appendix~\ref{app:benchmarks}. Static base-stock levels ($\mathrm{SBS}$) use the long-run mean $m'_i$ and standard deviation $s'_i$ of echelon lead-time demand, $B_{i,t} = \lceil m'_i + \theta_i\, s'_i \rceil$, and apply under i.i.d.\ demand. Dynamic forecast-based levels ($\mathrm{DBS}$) use the forecasted conditional mean $m'_{i,t}$ and standard deviation $s'_{i,t}$ given $\mathbf{s}_t^{\text{feat}}$, $B_{i,t} = \lceil m'_{i,t} + \theta_i\, s'_{i,t} \rceil$, and apply under non-stationary demand. Appendix~\ref{app:benchmarks} lays out how the echelon lead-time demand moments are computed. The item-specific safety-stock factors $\boldsymbol\theta := (\theta_i)_{i \in \mathcal I}$ are optimized using the black-box simulation-optimization algorithm CMA-ES \parencite{Hansen.2023}. The single-factor variants $\overline{\mathrm{SBS}}$ and $\overline{\mathrm{DBS}}$ replace $\theta_i$ by a global $\theta$ optimized by 1-D simulation-based grid search. In the small instances of Section~\ref{sec:toy}, we additionally report $\mathrm{SBS}^*$, the globally optimal SBS policy. We obtain $\theta^*$ using a simulation-based exhaustive enumeration of safety stock factors that yield unique base stock levels, where bounds are chosen such that all best-found values lie strictly inside.

\paragraph{MILP Policies} We additionally include a multi-stage stochastic program ($\mathrm{MSP}$) and its deterministic counterpart ($\mathrm{DET}$), both deployed in a rolling horizon. At each period, $\mathrm{MSP}$ builds a scenario tree by sampling demand forward from the current state, and implements only the first-stage production decision before re-solving from the updated state. The tree lets production at later nodes adapt to the realized demand history (recourse), while non-anticipativity constraints prevent it from reacting to demand that has not yet been observed. It structurally matches the approach of \textcite{Thevenin.2021} and \textcite{Fleuren.2025}. $\mathrm{DET}$ is the special case with a single successor per node and demand fixed to its inflated conditional-mean forecast $(1+\gamma) \cdot \expect{\boldsymbol \Xi_{t:t+H} \mid \mathbf s^{\text{feat}}_t}$ with safety-stock parameter $\gamma \ge 0$. With a single successor per node, $\mathrm{DET}$ has no recourse. We include it since it is close to what many companies still use for production planning, although we expect it to be outperformed by policies that account for uncertainty explicitly. The full formulation is given in Appendix~\ref{app:mss}.

To improve the computational efficiency of $\mathrm{MSP}$, we solve the LP relaxation and use a feasible rounding procedure to obtain integer values for the current decision. The rounding is structurally identical to our integer mapping but sorts the coordinates according to the fractional part of the LP solution because evaluating the effect on the objective would require re-solving the LP. Prior work found this to provide statistically indistinguishable results from the MIP policy, while requiring 10 times less computation time \parencite{vanDijck.2026}.

\subsection{Small Instances}
\label{sec:toy}

We investigate the smallest supply network with interdependent action constraints to compare against the optimal policy. The network has $|\mathcal{I}|=3$ items, with $\mathcal{I}^C=\{1\}$ being the shared component and $\mathcal{I}^E=\{2, 3\}$ the end items fed by a unit BOM $\mathbf A = \begin{bsmallmatrix}0 & 1 & 1\end{bsmallmatrix}$. Following the terminology of \textcite{Tempelmeier.1996}, we distinguish between a \emph{noncyclic} resource structure in which every resource is confined to a single BOM level, and a \emph{cyclic} one in which some resource is shared by an item and one of its (indirect) components. Specifically, we consider two resource structures: noncyclic dedicated resources with $\mathbf C = \mathbf I$ and capacities $(k_1, k_2, k_3) = (3, 1, 2)$, and cyclic resources with $\mathbf C = \begin{bsmallmatrix}1 & 1 & 0\\0 & 0 & 1\end{bsmallmatrix}$ and capacities $(k_1, k_2) = (4, 2)$. In the noncyclic setting, the projection decomposes $\mathcal{I}^C$ and $\mathcal{I}^E$, whereas the cyclic setting is not decomposable. All lead times are fixed to $\tau_i=1$.

We generate a set of instances by varying three parameters: the average resource utilization $u \in \{0.5, 0.8\}$, the variance-to-mean ratio $v \in \{1, 2\}$, and the shortage-cost ratio $r \in \{0.8, 0.9, 0.95\}$. Demand is i.i.d. across periods and independent across end items, with empty feature vector $\mathbf{s}_t^{\text{feat}} = []$. {The average requirement of each item (demand for end items, consumption for components) is set such that $\sum_i C_{g,i}\,\expect{\Xi_i} = u\, k_g$ for every resource $g \in \mathcal{G}$, i.e., the expected per-period consumption of resource $g$ equals a fraction $u$ of its capacity. We use $u$ to vary the mean demand instead of the capacities such that $k_g$ remain integer. The demand distribution is Poisson when $v = 1$ and negative binomial with $\operatorname{Var}(\Xi_i) = v\,\expect{\Xi_i}$ when $v > 1$. Instead of the coefficient of variation, we use the variance-to-mean ratio $v$ because it directly controls overdispersion for discrete demand: when the mean changes, fixing $v$ keeps the same distribution family and only requires changing its parameters. We fix holding costs to $(h_1, h_2, h_3) = (1, 6, 4)$ across instances and determine shortage costs such that $r = p_i / (p_i + h_i)$ holds.} This yields a compact testbed of 24 instances across different levels of congestion, demand variability, and shortage severity.

We compare our method and $\mathrm{SBS}^*$ against the optimal average cost obtained by relative value iteration on a truncated state space. The state-space truncation bounds are increased until further enlargement leaves the optimal policy and average cost unchanged. The policies are evaluated on a fixed set of $1{,}000$ demand trajectories of length $10{,}000$, where the first $1{,}000$ periods are discarded as burn-in to ensure that the system reaches its steady state under the current policy.

Table \ref{tab:toy_results} shows the gap of the simulated average cost of our method and {$\mathrm{SBS}^*$} relative to the optimal average cost across the 24 instances. Our method achieves optimality gaps below $2\%$ with only two exceptions and an average gap of $0.91\%$ across all instances. We outperform {$\mathrm{SBS}^*$} on 22/24 instances, where the savings are especially pronounced for congested systems with high resource utilization and demand variability. Since $\mathrm{SBS}^*$ is the globally optimal policy within its class, this gap reflects the structural limitation of base-stock policies under interdependent action constraints. This demonstrates the ability of our method to learn near-optimal policies in small instances with interdependent action constraints, and its robustness across different levels of congestion, demand variability, and shortage severity.

\begin{table}[t]
  \centering
  \renewcommand{\arraystretch}{1.0}
  \setlength{\tabcolsep}{4pt}
  \footnotesize
  \begin{tabular}{llccc|ccc|ccc|ccc}
    \toprule
    & & \multicolumn{6}{c|}{$u=0.5$} & \multicolumn{6}{c}{$u=0.8$} \\
    \cmidrule(lr){3-8} \cmidrule(lr){9-14}
    & & \multicolumn{3}{c|}{$v=1$} & \multicolumn{3}{c|}{$v=2$} & \multicolumn{3}{c|}{$v=1$} & \multicolumn{3}{c}{$v=2$} \\
    \cmidrule(lr){3-5} \cmidrule(lr){6-8} \cmidrule(lr){9-11} \cmidrule(lr){12-14}
    Structure & Policy $\qquad r=$ & 0.80 & 0.90 & 0.95 & 0.80 & 0.90 & 0.95 & 0.80 & 0.90 & 0.95 & 0.80 & 0.90 & 0.95 \\
    \midrule
    \multirow{2}{*}{Cyclic} & Ours & {2.18} & \textbf{0.24} & {0.91} & \textbf{0.59} & \textbf{0.64} & \textbf{0.52} & \textbf{0.98} & \textbf{0.90} & \textbf{1.63} & \textbf{0.61} & \textbf{1.03} & \textbf{0.82} \\
    & SBS$^*$ & \textbf{1.09} & 1.15 & \textbf{0.41} & 1.34 & 1.28 & 0.85 & 4.88 & 3.66 & 3.58 & 5.63 & 4.10 & 3.54  \\
    \cmidrule(lr){1-14}
    \multirow{2}{*}{\shortstack{Non-\\cyclic}} & Ours & \textbf{0.51} & \textbf{0.10} & \textbf{0.70} & \textbf{0.43} & \textbf{0.44} & \textbf{0.60} & \textbf{1.66} & \textbf{1.36} & \textbf{2.36} & \textbf{1.32} & \textbf{0.90} & \textbf{0.55} \\
    & SBS$^*$ & 0.99 & 1.53 & 0.98 & 1.90 & 1.87 & 1.11 & 4.19 & 3.73 & 2.86 & 4.92 & 3.47 & 3.01 \\
    \bottomrule
  \end{tabular}
  \caption{Optimality gaps (\%) for our method and $\mathrm{SBS}^*$ on the small instances. The smallest gap in each column is bold, as is any entry that is not significantly higher than it (one-sided Wilcoxon signed-rank test with $p<0.001$).}
  \label{tab:toy_results}
\end{table}

\subsection{Larger Instances}
\label{sec:tempelmeier}

We proceed to test our method on a set of larger instances based on the general capacitated network structure from \textcite[][Problem Class A]{Tempelmeier.1996}, which consists of $|\mathcal{I}|=10$ items, with $|\mathcal{I}^E|=4$ end items, and $|\mathcal{G}|=3$ resource groups. The network is depicted in Figure \ref{fig:tempelmeier_10} (a). Following \textcite{Tempelmeier.1996}, we consider 2 constraint structures with cyclic and noncyclic resource groups listed in Figure \ref{fig:tempelmeier_10} (b), and 5 utilization profiles listed in Figure \ref{fig:tempelmeier_10} (c), with $k_g$ rounded to integers. The BOM and resource consumption matrices are binary. In the noncyclic setting, the projection decomposes into the resource groups, whereas the cyclic setting does not decompose. In addition, we assume a value-added structure for the holding costs, where the holding cost of item $i$ is given by $h_i = 1 + \sum_{j \in \mathcal{I}^C} A_{j,i} h_j$, so that the most upstream items have $h_i = 1$. Lead times are $\tau_i=1$ for all items. The shortage-cost ratio is set to $r=0.9$ for all end items. Demand is i.i.d. across periods and independent across end items, with empty feature vector $\mathbf{s}_t^{\text{feat}} = []$. Raw demand is gamma distributed and stochastically rounded to integers using a Bernoulli distribution to preserve the expected demand, which is given by \(\expect{\boldsymbol \Xi} = [7, 3, 5, 10]^\top\). We consider 2 coefficients of variation $\mathrm{cov} \in \{0.5, 0.9\}$ for the raw gamma demand, leading to a testbed of 20 instances. The planning horizon is $T=60$ periods.

\begin{figure}[t]
    \centering
    \tikzset{
        netnode/.style={circle, draw, minimum size=5.2mm, inner sep=0pt, font=\scriptsize}
    }

    \begin{subfigure}[b]{0.36\textwidth}
        \centering
        \begin{tikzpicture}[>=stealth, font=\scriptsize]
            \def\xgap{1.30}
            \def\ygap{0.60}

            \newcommand{\comp}[3]{\node[netnode] (n#1) at ({#2*\xgap},{-#3*\ygap}) {#1};}
            \newcommand{\bom}[2]{\draw[->, line width=0.55pt, shorten >=1pt, shorten <=1pt] (n#1.east) -- (n#2.west);}

            \comp{8}{0}{1.5}
            \comp{9}{0}{2.5}
            \comp{10}{0}{3.5}

            \comp{5}{1}{1.5}
            \comp{6}{1}{2.5}
            \comp{7}{1}{3.5}

            \comp{1}{2}{1.0}
            \comp{2}{2}{2.0}
            \comp{3}{2}{3.0}
            \comp{4}{2}{4.0}

            \bom{8}{5} \bom{8}{6}
            \bom{9}{6} \bom{9}{7}
            \bom{10}{7}

            \bom{5}{1} \bom{5}{2}
            \bom{6}{2} \bom{6}{3}
            \bom{7}{3} \bom{7}{4}
        \end{tikzpicture}
        \caption{Supply network.}
        \label{fig:tempelmeier_10_network}
    \end{subfigure}\hfill
    \begin{subfigure}[b]{0.30\textwidth}
        \centering
        \renewcommand{\arraystretch}{1.05}
        \setlength{\tabcolsep}{5pt}
        \footnotesize
        \begin{tabular}{lll}
            \toprule
            Resource & Noncyclic & Cyclic \\
            \midrule
            A & 1--4  & 1, 2, 6 \\
            B & 5--7  & 3, 4, 7 \\
            C & 8--10 & 5, 8--10 \\
            \bottomrule
        \end{tabular}
        \caption{Resource groups.}
        \label{tab:tempelmeier_10_resources}
    \end{subfigure}\hfill
    \begin{subfigure}[b]{0.28\textwidth}
        \centering
        \renewcommand{\arraystretch}{1.05}
        \setlength{\tabcolsep}{5pt}
        \footnotesize
        \begin{tabular}{c|c|c|c}
            \toprule
            Profile & 0.9 & 0.7 & 0.5 \\
            \midrule
            1 & A--C & --   & --   \\
            2 & --   & A--C & --   \\
            3 & --   & --   & A--C \\
            4 & A    & B    & C    \\
            5 & C    & B    & A    \\
            \bottomrule
        \end{tabular}
        \caption{Utilization profiles.}
        \label{tab:tempelmeier_10_profiles}
    \end{subfigure}

    \caption{10-item resource-constrained supply network structures from \textcite{Tempelmeier.1996}.}
    \label{fig:tempelmeier_10}
\end{figure}

\begin{table}[t]
  \centering
  \renewcommand{\arraystretch}{1.0}
  \setlength{\tabcolsep}{4pt}
  \footnotesize
  \newcommand{\tempelmeieremptycells}{& & & & & & & & & &}
  \begin{tabular}{llcc|cc|cc|cc|cc}
    \toprule
    & & \multicolumn{2}{c|}{Profile 1} & \multicolumn{2}{c|}{Profile 2} & \multicolumn{2}{c|}{Profile 3} & \multicolumn{2}{c|}{Profile 4} & \multicolumn{2}{c}{Profile 5} \\
    \cmidrule(lr){3-4} \cmidrule(lr){5-6} \cmidrule(lr){7-8} \cmidrule(lr){9-10} \cmidrule(lr){11-12}
    Structure & Policy $\qquad \mathrm{cov}=$ & 0.5 & 0.9 & 0.5 & 0.9 & 0.5 & 0.9 & 0.5 & 0.9 & 0.5 & 0.9 \\
    \midrule
    \multirow{5}{*}{Cyclic} & $\overline{\mathrm{SBS}}$ & 274.80 & 615.93 & 212.89 & 424.61 & 211.31 & 402.75 & 242.94 & 537.55 & 224.07 & 469.69 \\
    & SBS & 246.25 & 564.00 & 192.66 & 392.61 & 190.89 & 370.56 & 213.63 & 481.69 & 203.61 & 432.15 \\
    & DET & 267.43 & 570.70 & 225.54 & 430.05 & 225.10 & 432.31 & 243.55 & 491.83 & 270.99 & 528.26 \\
    & MSP & 259.38 & 618.82 & 208.35 & 414.86 & 207.40 & 399.95 & 223.85 & 498.36 & 218.15 & 473.22 \\
    & Ours & \textbf{231.27} & \textbf{508.98} & \textbf{191.78} & \textbf{382.62} & \textbf{190.43} & \textbf{368.15} & \textbf{204.32} & \textbf{440.52} & \textbf{200.05} & \textbf{422.70} \\
    \cmidrule(lr){1-12}
    \multirow{5}{*}{\shortstack{Non-\\cyclic}} & $\overline{\mathrm{SBS}}$ & 228.77 & 489.64 & 211.80 & 408.02 & 211.30 & 401.44 & 228.62 & 492.03 & 222.31 & 458.29 \\
    & SBS & 210.68 & 461.25 & 191.48 & 376.46 & \textbf{190.60} & 370.65 & 210.39 & 465.53 & 202.12 & 417.41 \\
    & DET & 243.53 & 518.00 & 224.49 & 427.63 & 225.04 & 431.00 & 230.36 & 491.96 & 268.34 & 535.60 \\
    & MSP & 225.63 & 509.85 & 207.79 & 404.68 & 207.40 & 399.19 & 220.43 & 478.79 & 216.96 & 458.51 \\
    & Ours & \textbf{206.23} & \textbf{441.19} & \textbf{190.61} & \textbf{373.35} & \textbf{190.78} & \textbf{367.26} & \textbf{201.46} & \textbf{425.53} & \textbf{199.61} & \textbf{412.70} \\
    \bottomrule
  \end{tabular}
  \caption{Average costs per period for all methods on the larger instances. The lowest cost in each column is bold, as is any entry that is not significantly higher than it (one-sided Wilcoxon signed-rank test with $p<0.001$).}
  \label{tab:tempelmeier_10_results}
\end{table}

We compare our method against $\mathrm{SBS}$, $\overline{\mathrm{SBS}}$, DET and MSP. All policies are evaluated on a fixed set of $5{,}000$ demand trajectories with identical initial states and demand realizations. The initial state distribution for evaluation is generated by a $30$-period warmup simulation under the optimized SBS policy. The results are given in Table \ref{tab:tempelmeier_10_results}. Our method outperforms all benchmarks in all but one instance, where the average costs of $\mathrm{SBS}$ are not statistically distinct ($+0.09\%, p=0.09$). Cost reductions of our method reach up to $9.75\%$ relative to the best baseline, which is $\mathrm{SBS}$ consistently. In line with \textcite{vanDijck.2026}, $\mathrm{SBS}$ uniformly improves on $\overline{\mathrm{SBS}}$ and MSP outperforms DET in all but two instances, but is strictly worse than SBS. In line with the findings from Section \ref{sec:toy}, the savings are particularly large when (downstream) capacity is tight, as is the case in Profiles 1 (90/90/90) and 4 (90/70/50), and demand is variable. Also, higher $\mathrm{cov}$ increases the cost gap in all but one case. The savings are further amplified under the cyclic resource structure, in which the cost reduction of our method compared to $\mathrm{SBS}$ tends to be higher than in the noncyclic setting. This confirms that the advantage of our method is largest where the action constraints are most interdependent and do not decompose into simple clusters.

\subsection{Case Study}
\label{sec:asml}

We consider the benchmark instances from the 2025 ASML Research Challenge, jointly organized by ASML and the International Society of Inventory Research \parencite{ASML_supply_chain_planning_challenge}. The instance is designed to capture the operational characteristics of real-world supply chains in the high-tech industry. It consists of $|\mathcal{I}|=14$ items, including $|\mathcal{I}^E|=3$ end items, connected through a general bill-of-materials network with $|\mathcal{G}|=9$ resource groups, as depicted in Figure \ref{fig:asml_network}. The resource structure is noncyclic, and the projection decomposes into the nine resource groups, the largest of which contains three items. Each component is required exactly once for every end item and each unit consumes one resource unit. The planning horizon is $T=60$ months and the average utilization of all resource groups is either $0.79$ or $0.83$. The exogenous demand process is non-stationary and mimics real-world patterns in the high-tech industry. The aggregate market demand follows a cyclical AR(8) process with long-run mean $\mu = 15$ and innovation variance $\sigma^2 = 4$, using autoregressive parameters
\(
\boldsymbol{\phi} = [0.6, 0.5, -0.2, -0.2, 0.2, -0.2, 0.1, -0.1]^\top
\). The market demand is then split stochastically across the end products based on a Dirichlet distribution with parameters $\boldsymbol \alpha$ combined with an unbiased rounding procedure. Thus, demand is also correlated across end items. The expected demand shares for the end items are $\frac7{15}$, $\frac13$, and $\frac15$, respectively. We use the original challenge with $\boldsymbol \alpha_1 = \left[\frac{35}{3}, \frac{25}{3}, 5\right]$ as Setting 1 and the lower-variance demand split with $\boldsymbol \alpha_2 = [35, 25, 15]$ proposed by \textcite{vanDijck.2026} as Setting 2. In both settings, the feature vector $\mathbf s^{\text {feat}}_t$ contains the last eight aggregate demand observations before rounding. The shortage-cost ratio is $r=0.9$ for all end items. The remaining instance parameters and resource groups are given in Table \ref{tab:case_study}.

\begin{table}[t]
  \centering
  \renewcommand{\arraystretch}{0.75}
  \setlength{\tabcolsep}{6pt}
  \begin{tabular}{l|c|c|c|c|c|c|c|c|c|c|c|c|c|c}
  
    \toprule
    $i \in \mathcal{I}$ & \textbf{1} & \textbf{2} & \textbf{3} & \textbf{4} & \textbf{5} & \textbf{6} & \textbf{7} & \textbf{8} & \textbf{9} & \textbf{10} & \textbf{11} & \textbf{12} & \textbf{13} & \textbf{14} \\
    \midrule
    $\tau_i$      & 5 & 8 & 6 & 7 & 8 & 3 & 3 & 3 & 3 & 4 & 3 & 2 & 2 & 3 \\
    $h_i$         & 0.3 & 0.5 & 0.2 & 0.3 & 0.4 & 0.6 & 0.7 & 0.8 & 0.9 & 1.0 & 1.7 & 6.0 & 7.0 & 8.0 \\
    \midrule
    $g \in \mathcal{G}$ 
                   & A & B & C & D & E 
                   & \multicolumn{3}{c|}{F} 
                   & \multicolumn{2}{c|}{G} 
                   & H 
                   & \multicolumn{3}{c}{I} \\
    $k_g$          
                   & 18 & 19 & 18 & 19 & 19 
                   & \multicolumn{3}{c|}{18} 
                   & \multicolumn{2}{c|}{19} 
                   & 19 
                   & \multicolumn{3}{c}{19} \\
    \bottomrule
  \end{tabular}
  \caption{Overview of instance parameters for the ASML Research Challenge from \textcite{ASML_supply_chain_planning_challenge}.}
  \label{tab:case_study}
\end{table}

\begin{figure}[t]
    \centering
    \tikzset{
        netnode/.style={circle, draw, minimum size=5mm, inner sep=0pt, font=\small},
        capbox/.style={}
    }

    \begin{tikzpicture}[>=stealth, font=\small]
        \def\upstreamgap{0.66}
        \def\samegap{0.55}
        \def\diffgap{0.55}

        \pgfmathsetmacro{\ylefttop}{2.00}
        \pgfmathsetmacro{\ymidtop}{2.00}
        \pgfmathsetmacro{\yrighttop}{1.00}

        \pgfmathsetmacro{\ycone}{\ylefttop}
        \pgfmathsetmacro{\yctwo}{\ycone-\upstreamgap}
        \pgfmathsetmacro{\ycthree}{\yctwo-\upstreamgap}
        \pgfmathsetmacro{\ycfour}{\ycthree-\upstreamgap}
        \pgfmathsetmacro{\ycfive}{\ycfour-\upstreamgap}

        \pgfmathsetmacro{\ycsix}{\ymidtop}
        \pgfmathsetmacro{\ycseven}{\ycsix-\samegap}
        \pgfmathsetmacro{\yceight}{\ycseven-\samegap}
        \pgfmathsetmacro{\ycnine}{\yceight-\diffgap}
        \pgfmathsetmacro{\ycten}{\ycnine-\samegap}
        \pgfmathsetmacro{\yceleven}{\ycten-\diffgap}

        \pgfmathsetmacro{\yctwelve}{\yrighttop}
        \pgfmathsetmacro{\ycthirteen}{\yctwelve-\samegap}
        \pgfmathsetmacro{\ycfourteen}{\ycthirteen-\samegap}

        \node[netnode] (c1) at (0.0,  \ycone) {1};
        \node[netnode] (c2) at (0.0,  \yctwo) {2};
        \node[netnode] (c3) at (0.0,  \ycthree) {3};
        \node[netnode] (c4) at (0.0,  \ycfour) {4};
        \node[netnode] (c5) at (0.0,  \ycfive) {5};

        \node[netnode] (c6)  at (3.45, \ycsix) {6};
        \node[netnode] (c7)  at (3.45, \ycseven) {7};
        \node[netnode] (c8)  at (3.45, \yceight) {8};
        \node[netnode] (c9)  at (3.45, \ycnine) {9};
        \node[netnode] (c10) at (3.45, \ycten) {10};
        \node[netnode] (c11) at (3.45, \yceleven) {11};

        \node[netnode] (c12) at (7.05, \yctwelve) {12};
        \node[netnode] (c13) at (7.05, \ycthirteen) {13};
        \node[netnode] (c14) at (7.05, \ycfourteen) {14};

        \draw[->] (c1.east) -- (c6.west) node[pos=0.78, above] {};
        \draw[->] (c1.east) -- (c7.west) node[pos=0.78, above] {};
        \draw[->] (c1.east) -- (c8.west) node[pos=0.78, above] {};

        \draw[->] (c2.east) -- (c9.west)  node[pos=0.78, above] {};
        \draw[->] (c2.east) -- (c10.west) node[pos=0.78, above] {};

        \draw[->] (c3.east) -- (c11.west) node[pos=0.78, above] {};
        \draw[->] (c4.east) -- (c11.west);
        \draw[->] (c5.east) -- (c11.west);

        \draw[->] (c6.east) -- (c12.west) node[pos=0.32, above] {};
        \draw[->] (c7.east) -- (c13.west) node[pos=0.32, above] {};
        \draw[->] (c8.east) -- (c14.west) node[pos=0.32, above] {};

        \draw[->] (c9.east)  -- (c12.west);
        \draw[->] (c9.east)  -- (c13.west);
        \draw[->] (c10.east) -- (c14.west);

        \draw[->] (c11.east) -- (c12.west);
        \draw[->] (c11.east) -- (c13.west);
        \draw[->] (c11.east) -- (c14.west);

        \begin{pgfonlayer}{background}
            \node[capbox, fit=(c1)] {};
            \node[capbox, fit=(c2)] {};
            \node[capbox, fit=(c3)] {};
            \node[capbox, fit=(c4)] {};
            \node[capbox, fit=(c5)] {};
            \node[capbox, fit=(c6)(c7)(c8)] {};
            \node[capbox, fit=(c9)(c10)] {};
            \node[capbox, fit=(c11)] {};
            \node[capbox, fit=(c12)(c13)(c14)] {};
        \end{pgfonlayer}
    \end{tikzpicture}

    \caption{14-item supply network structure of the ASML Research Challenge from \textcite{ASML_supply_chain_planning_challenge}.}
    \label{fig:asml_network}
\end{figure}

We compare our method to SBS, DBS, $\overline{\mathrm{SBS}}$, $\overline{\mathrm{DBS}}$, DET and MSP. Note that $\overline{\mathrm{SBS}}$ is the original baseline of the ASML Research Challenge and DBS is the best-known policy for the instances. We evaluate all policies on a fixed set of $5{,}000$ demand trajectories per setting with identical initial states and demand realizations. The initial state distribution for evaluation is generated by a $30$-period warmup simulation under the optimized DBS policy. Table \ref{tab:asml_results} shows that our method significantly reduces total costs in both settings, by $3.22\%$ in Setting~1 and $2.54\%$ in Setting~2 relative to the best-performing benchmark, which is DBS. As reported by \textcite{vanDijck.2026}, DBS finds a very different holding-backorder cost trade-off than MSP, stocking significantly more to avoid backorders. Our method finds a middle ground, incurring lower holding costs than DBS and lower backorder costs than MSP, while outperforming both in terms of total costs. Since ASML's newest lithography systems exceed US\$300 million in value per unit \parencite{vanDijck.2026}, the capital tied up in inventory is considerable. The $4.1$--$6.6$\% holding cost reduction compared to DBS consequently represents economically significant savings. Further, both DBS and MSP require an explicit model of future demand to take decisions, whereas our policy maps the observed demand history to production decisions directly.

\begin{table}[t]
  \centering
  \renewcommand{\arraystretch}{0.75}
  \setlength{\tabcolsep}{6pt}
  \begin{tabular}{c|l|c|c|c|c|c|c|c}
    \toprule
    Setting & Costs & $\overline{\mathrm{SBS}}$ & $\overline{\mathrm{DBS}}$ & SBS & DBS & MSP & DET & Ours \\
    \midrule
    \multirow{3}{*}{1} & Backorder  & 73.81  & 82.20  & {66.33}  & 68.27 & 93.65 & 122.50 & 73.53 \\
                       & Holding    & 239.24 & 227.57 & 232.49 & 218.72 & {195.60} & 211.08 & 204.21 \\
                       & Total      & 313.05 & 309.77 & 303.71 & 286.99 & 289.25 & 333.58 & \textbf{277.74} \\
    \midrule
    \multirow{3}{*}{2} & Backorder  & {59.36} & 77.52 & 61.69 & 60.86 & 80.65 & 98.85 & 62.32 \\
                       & Holding    & 220.19 & 204.23 & 213.02 & 197.26 & {180.89} & 199.34 & 189.25 \\
                       & Total      & 279.55 & 281.75 & 274.72 & 258.12 & 261.54 & 298.19 & \textbf{251.56} \\
    \bottomrule
  \end{tabular}
  \caption{Average costs per period for all methods on the ASML instances. The lowest cost in each row is bold, as is any entry that is not significantly higher than it (one-sided Wilcoxon signed-rank test with $p<0.001$).}
  \label{tab:asml_results}
\end{table}

\begin{figure}[t]
    \centering
    \includegraphics[width = 0.5\textwidth]{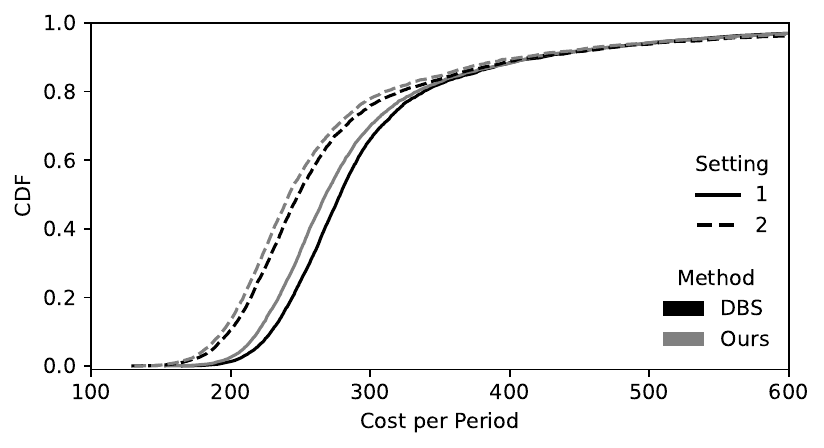}
    \vspace{-2mm}
    \caption{\small Empirical CDFs of realized average costs per period for the ASML instances.}
    \label{fig:asml_fosd}
\end{figure}

Following \textcite{Bottcher.2023}, we extend the analysis beyond mean costs to the full distribution of realized costs across trajectories. In particular, Figure \ref{fig:asml_fosd} reports the empirical cumulative distribution functions (CDF) of the realized average costs per period for our method and DBS. The empirical CDF of our method lies above that of DBS in both settings, so the realized cost of our policy is first-order stochastically smaller. Thus, our method not only reduces average costs but also yields a more favorable distribution of outcomes, with a higher probability of achieving lower costs compared to the benchmark.

\subsection{Ablation Study}
\label{sec:ablation}
We perform an ablation study on the ASML instances to isolate the contributions of the key components of our method. Starting from the full method, we consider four variants, each of which reflects design choices made in prior work. Variant $-\text{BS}$ removes the echelon base-stock inductive bias and lets the neural network output production targets directly \parencite{Madeka.2022}. Variant $-\text{STE}$ follows \textcite{Alvo.2023} by training in the continuous surrogate system and applying integer mapping only at evaluation time, thereby removing the STE during training. Variant $-\text{IM}$ replaces the dual-informed integer mapping by a simple rule that always floors production decisions to the nearest smaller integer as done by \textcite{Bottcher.2023}. Variant $-\text{KKT}$ keeps the same forward pass but applies an STE to the entire projection. By Corollary~\ref{cor:competition}, this reproduces the Jacobians of Lemma~\ref{lem:qp_jacobians} at an empty active set: every coordinate is treated as \emph{Free} rather than \emph{Pinned} or \emph{Competing}, and the right-hand-side channel vanishes, as if no constraint were binding. In this case, the QP could be replaced by a MILP or any other non-differentiable projection method used in earlier work \parencite{Kaynov.2024}.

\begin{table}[t]
  \centering
  \renewcommand{\arraystretch}{0.75}
  \setlength{\tabcolsep}{6pt}
  \begin{tabular}{c|c|c|c}
    \toprule
    Setting & $-\text{BS}$ & $-\text{STE}$ & $-\text{IM}$ \\
    \midrule
    1 & +5.81 & +1.16 & +11.83 \\
    2 & +7.06 & +1.06 & +15.79 \\
    \bottomrule
  \end{tabular}
  \caption{Change in total cost (\%) relative to our full method for the ablation variants on the ASML instances.}
  \label{tab:asml_ablation_results}
\end{table}

The results are given in Table \ref{tab:asml_ablation_results}. $-\text{KKT}$ consistently leads to divergence during training, and is, hence, not reported in the table. {This is because the surrogate sensitivities mistakenly suggest that larger targets uniformly lead to larger actions.} Since $-\text{KKT}$ reproduces the learning signal that any non-differentiable projection within HDPO would provide, such methods do not merely perform worse than ours, but cannot be trained end to end at all. This demonstrates that the analytical projection gradients are essential for stable training, highlighting the importance of our proposed differentiable projection module. All other variants increase costs relative to our method, with $-\text{STE}$ being the least detrimental causing about $1.1\%$ higher cost. Training in a continuous surrogate system would still outperform all baselines, but applying STEs during training is even better, which is in line with \textcite{Bottcher.2023}. Without the echelon base-stock inductive bias, $-\text{BS}$ increases costs by $5.8\text{--}7.1\%$, which shows the effectiveness of incorporating structural policy insights in line with \textcite{Xie.2026}. Note that $-\text{BS}$ would still outperform all baselines except DBS and MSP. Finally, $-\text{IM}$ increases costs by $11.8\text{--}15.8\%$. {This is consistent with the incompleteness of floor rounding on active actions shown in Proposition \ref{prop:completeness}. Especially the crucial integer actions on the active boundary of the feasible set are oftentimes not reachable by floor rounding, as shown in Figure~\ref{fig:completeness}, which leads to significant performance degradation.} It is equally consistent with Proposition \ref{prop:frontier_reachability}: our integer mapping drives ambitious targets onto the integer frontier, whereas floor rounding systematically leaves spare capacity unused. Overall, the ablation results indicate that all components of our method contribute to its performance, with the differentiable projection and integer mapping playing particularly crucial roles in achieving cost savings and stable training.

\section{Conclusion}
\label{sec:conclusion}

We studied end-to-end policy learning for constrained sequential decision problems with combinatorial, state-dependent action spaces, focusing on capacitated multi-echelon production--inventory planning. The key obstacle is that hard, interdependent feasibility constraints break the differentiable scaling and Lagrangian relaxation schemes on which existing policies rely. To close this gap, we proposed a three-module differentiable policy: a differentiable decision rule proposes continuous action targets from the system state, a strongly convex QP projects them onto the feasible set of the continuous relaxation while supplying analytical sensitivities through implicit differentiation of its optimality conditions, and a dual-informed integer mapping restores integrality while preserving feasibility and providing surrogate gradients through a straight-through estimator. We showed that our method outperforms both classical policies and a rolling-horizon multi-stage stochastic program on a set of instances of increasing scale and complexity, including a realistic industry benchmark from the 2025 ASML Research Challenge.

For managers, the results demonstrate when an end-to-end learned policy is worth its added complexity. When capacity is ample, well-tuned echelon base-stock policies with balanced allocation remain close to the state-of-the-art, and the machinery of policy learning is difficult to justify. The picture changes as utilization rises, demand becomes more variable, and resources are shared across echelons. In those regimes, how much to stock and how to ration scarce capacity across competing items become tightly coupled decisions. Our policy learns to optimize both jointly, whereas conventional policies separate them, which leads to cost savings in exactly those instances.

A structural advantage of embedding constraint handling inside the policy is that the operational setting enters only through the QP specification: additional shared resources, alternative capacity structures, or changing material balances change the QP coefficients but leave the neural policy, integer map, and training procedure untouched. In this sense, the policy mirrors the formulation-driven workflow of classical mixed-integer programming, where operational constraints are added to the model rather than to the algorithm. Transfer learning across constraint structures with the same architecture is an interesting avenue to future work.

Other research directions follow directly from the key assumptions we made. We assume differentiable system dynamics and exogenous uncertainty to apply the HDPO framework. Where these do not hold, the differentiable simulator can be replaced by a learned critic that provides policy gradients, as in deterministic actor--critic reinforcement learning. While this extends the applicability of our method, it also introduces additional challenges in terms of sample efficiency and training stability. When the relaxed action space is not downward closed, the componentwise floor need no longer be feasible, so our integer mapping requires an additional repair step to recover a feasible starting point. Further, we assume resource limits are given, although capacity can be expanded through long-term investment. Using the projection sensitivities, the resource limits can be optimized alongside the policy. More broadly, our method can be transferred to other constrained sequential decision problems, such as financial hedging and manufacturing process control.

\ACKNOWLEDGMENT{We thank Tjum van Dijck and Tijn Fleuren for providing their implementation of the rolling-horizon multi-stage stochastic program, and of its deterministic counterpart.}

\bibliographystyle{INFORMS-IJOC-Template-26-02-2026/informs2014}
\bibliography{references}

\newpage
\begin{APPENDIX}{}

\raggedbottom

\makeatletter
\renewcommand{\theHsection}{app.\arabic{section}}
\makeatother

\section{Production--Inventory Planning in General Networks}
\label{app:mdp}

\paragraph{State and sequence of events.} The state $\mathbf{s}_t$ is taken at the start of period $t$, after pipeline arrivals from period $t - \tau_i$ have been received but before any production decision or demand realization. It comprises the on-hand component inventory $o^C_{i,t} \in \mathbb{N}_0$ for $i \in \mathcal{I}^C$, the net inventory of end items $o^E_{i,t} \in \mathbb{Z}$ for $i \in \mathcal{I}^E$ (negative values represent backlog), the in-transit pipeline $(x_{i,t-l})_{l = 1, \ldots, \tau_i - 1}$ of orders released but not yet completed, and the demand-feature vector $\mathbf{s}_t^{\text{feat}}$. Formally,
\begin{equation*}
    \mathbf{s}_t := \big( (o^C_{i,t})_{i \in \mathcal{I}^C},\; (o^E_{i,t})_{i \in \mathcal{I}^E},\; (x_{i,t-l})_{i \in \mathcal{I},\, l = 1, \ldots, \tau_i - 1},\; \mathbf{s}_t^{\text{feat}} \big).
\end{equation*}

\begin{figure}[t]
    \centering
    \setlength{\unitlength}{0.75mm}%
\begin{picture}(200,60)
\thicklines
\put(0,35){\vector(1,0){200}}

\put(2,37){\makebox(0,5)[c]
{\shortstack[c]{\scriptsize $\dots$}}}

\put(195,37){\makebox(0,5)[c]
{\shortstack[c]{\scriptsize $\dots$}}}

\put(10,35){\line(0,1){5}}
\put(188,35){\line(0,1){5}}

\put(10,27){\makebox(0,5)[c]
{\shortstack[c]{\scriptsize \textbf{Start}\\\scriptsize\textbf{ period $t$}}}}

\put(188,27){\makebox(0,5)[c]
{\shortstack[c]{\scriptsize \textbf{End}\\\scriptsize\textbf{ period $t$}}}}

\put(35,35){\vector(0,1){7}}
\put(35,45){\makebox(0,15)[c]
{\shortstack[c]{\scriptsize \textbf{Pipeline arrivals.}\\
 \scriptsize Receipt of $x_{i,t-\tau_i}$.}}}

\put(75,35){\vector(0,-1){7}}
\put(75,10){\makebox(0,15)[c]
{\shortstack[c]{\scriptsize \textbf{Production decision.}\\
 \scriptsize $\mathbf{x}_t \in \mathcal{A}(\mathbf{s}_t)$ released;\\
 \scriptsize $\tilde o_{i,t} = o^C_{i,t} - \sum_j A_{ij} x_{j,t}$,
 \scriptsize $i \in \mathcal{I}^C$.}}}

\put(120,35){\vector(0,1){7}}
\put(120,45){\makebox(0,15)[c]
{\shortstack[c]{\scriptsize \textbf{Demand realization.}\\
 \scriptsize $\xi_{i,t}$ observed;\\
 \scriptsize $\tilde o_{i,t} = o^E_{i,t} - \xi_{i,t}$,
 \scriptsize $i \in \mathcal{I}^E$.}}}

\put(160,35){\vector(0,-1){7}}
\put(160,15){\makebox(0,15)[c]
{\shortstack[c]{\scriptsize \textbf{Cost accrual.}\\
 \scriptsize Costs $c_t$ incurred.}}}

\end{picture}
    \vspace{-8mm}
    \caption{Sequence of events within period $t$.}
    \label{fig:timeline}
\end{figure}

The events within period $t$ unfold in the following order, illustrated in Figure~\ref{fig:timeline}:
\begin{enumerate}
    \item \textbf{Pipeline arrivals.} Production released $\tau_i$ periods earlier completes, so that $o^C_{i,t}$ and $o^E_{i,t}$ already reflect receipt of $x_{i,t-\tau_i}$.
    \item \textbf{Production decision.} The action $\mathbf{x}_t \in \mathbb{N}_0^{|\mathcal{I}|}$ is chosen, with $x_{i,t}$ the quantity of item $i$ released into the pipeline. {This consumes on-hand inventory of all input components in a convergent BOM structure.} The post-consumption component inventory is the intermediate variable
    \begin{equation*}
        \tilde o_{i,t} := o^C_{i,t} - \sum_{j \in \mathcal{I}} A_{i,j}\, x_{j,t}, \qquad i \in \mathcal{I}^C.
    \end{equation*}
    \item \textbf{Demand realization.} Demand realization $\xi_{i,t}$ for end item $i \in \mathcal{I}^E$ is observed and met from on-hand stock, yielding the post-demand end-item inventory
    \begin{equation*}
        \tilde o_{i,t} := o^E_{i,t} - \xi_{i,t}, \qquad i \in \mathcal{I}^E.
    \end{equation*}
    \item \textbf{Cost accrual.} Holding costs are charged on positive intermediate inventory and shortage costs on unmet end-item demand, {giving the per-period cost function}
    \begin{equation*}
        c(\mathbf{s}_t, \mathbf{x}_t, \boldsymbol \xi_t) = \sum_{i \in \mathcal{I}} h_i (\tilde o_{i,t})^+ + \sum_{i \in \mathcal{I}^E} p_i (-\tilde o_{i,t})^+.
    \end{equation*}
\end{enumerate}
{Given the action $\mathbf x_t$ and demand realization $\boldsymbol \xi_t$, the state transitions deterministically as $\mathbf{s}_{t+1} = \Gamma(\mathbf{s}_t, \mathbf{x}_t, \boldsymbol \xi_t)$, where the transition function $\Gamma$ shifts the pipeline forward by one period and updates on-hand inventories with the arrivals from period $t-\tau_i+1$:}
\[
\begin{aligned}
    o^C_{i,t+1} &= \tilde o_{i,t} + x_{i,t-\tau_i+1}, \qquad i \in \mathcal{I}^C,\\
    o^E_{i,t+1} &= \tilde o_{i,t} + x_{i,t-\tau_i+1}, \qquad i \in \mathcal{I}^E.
\end{aligned}
\]

\section{Echelon Base-Stock Target Generation}
\label{app:bs_target}

We instantiate the differentiable decision rule of Section~\ref{sec:method} with a structured parameterization tailored to inventory control. Rather than predicting production requests independently per item, the neural network first outputs a vector $\mathbf B_t \in \mathbb{R}_{\ge 0}^{|\mathcal I|}$ of state-dependent echelon base-stock levels, specifying desired echelon inventory positions that are then converted into production targets. This structure acts as an inductive bias \parencite{Xie.2026}, embedding problem-specific structure while preserving flexibility. As shown in the ablation study in Section~\ref{sec:ablation}, it improves performance in complex real-world supply networks. The echelon inventory position of item $i$ at time $t$ is
\[
    \mathrm{EIP}_{i,t}
    :=
    o^E_{i,t}
    +
    \sum_{l=1}^{\tau_i-1} x_{i,t-l},
    \qquad i \in \mathcal I^E,
\]
\[
    \mathrm{EIP}_{i,t}
    :=
    o^C_{i,t}
    +
    \sum_{l=1}^{\tau_i-1} x_{i,t-l}
    +
    \sum_{j \in \mathcal I} A_{i,j}\,\mathrm{EIP}_{j,t},
    \qquad i \in \mathcal I^C.
\]
End items provide the non-recursive base case, where $\mathrm{EIP}_{i,t}$ reduces to the local inventory position because end items have no downstream successors. Component items are then computed by propagating upward through the acyclic BOM, making the recursion well-defined. Including this BOM term generalizes the classical echelon-stock notion of \textcite{Clark.1960} to general bill-of-material structures: $A_{i,j}\,\mathrm{EIP}_{j,t}$ counts the units of component $i$ embedded in the echelon position of downstream item $j$, so $\mathrm{EIP}_{i,t}$ aggregates all material committed to item $i$, whether on hand, in the pipeline, or absorbed into a downstream stage, automatically coordinating upstream actions with downstream inventory. The corresponding continuous production target $\mathbf z_t$ is then defined componentwise as
\[
    \mathbf z_{t} := (\mathbf B_{t} - \mathbf{EIP}_{t})^+.
\]
The projection and integer-mapping modules of Section~\ref{sec:method} then map $\mathbf z_t$ to a feasible integer production decision.

\section{Baseline Policy Details}\label{app:benchmarks}
\subsection{Base Stock Policy}
\paragraph{Shortfall-priority allocation}

Given $\mathbf B_t$ with $B_{i,t} > 0$ for all $i \in \mathcal I$, echelon inventory positions $\mathbf{EIP}_{t}$, and on-hand component inventories $\mathbf o^C_t$, the allocator implements the nested min--max relative shortfall (\textit{waterfilling}) rule of \textcite{vanDijck.2026}. The relative shortfall of item $i$ at time $t$ given the current cumulative assignment $x_{i,t}$ is
\[
y_{i,t} \;=\; \frac{\big(B_{i,t} - \mathrm{EIP}_{i,t} - x_{i,t}\big)^+}{B_{i,t}}.
\]
At each step, the next unit is assigned to the item with the largest $y_{i,t}$, given that this is feasible. When allocating to the leading item becomes infeasible, allocation moves to the next-most-short feasible item. The procedure terminates when no further production is feasible or all shortfalls are eliminated.

\paragraph{Echelon lead-time demand}
Let $L_{j,i}$ denote the echelon lead time of item $i \in \mathcal I$ along the BOM path to end item $j \in \mathcal I^E$, computed as the longest path from $i$ to $j$ over per-item lead times $\tau_l$ plus a unit increment for the periodic review. With binary BOM matrices, the static mean and variance of the echelon lead-time demand for item $i$ aggregate over all served end items
\[
m'_i \;=\; \sum_{j \in \mathcal I^E} L_{j,i}\, \expect{\Xi_j},
\qquad
(s'_i)^2 \;=\; \sum_{j \in \mathcal I^E} L_{j,i}\, \operatorname{Var}(\Xi_j).
\]
For DBS, the conditional mean and variance $(m'_{i,t}, s'_{i,t})$ are obtained from the closed-form recursions of the AR$(p)$ aggregate-demand model. We refer to \textcite[][Appendix B]{vanDijck.2026} for the full derivation.

\subsection{Multi-Stage Stochastic Programming} \label{app:mss}
The multi-stage stochastic program is formalized below. Following the action-then-demand timing of Appendix~\ref{app:mdp}, the production decision indexed by a node is made before observing the demand realization associated with that node. In addition to the notation of Section~\ref{sec:application}, which consists of the BOM $\mathbf{A}$, resource consumption $\mathbf{C}$, capacities $\mathbf{k}$, lead times $\tau_i$, and unit holding and backlog costs $h_i$ and $p_i$, it uses the following notation.

\noindent \textbf{Sets}
\begin{conditions*}
    \mathcal{T} & Look-ahead periods relative to the current period, $\mathcal{T} = \{0, 1, \ldots, H\}$ for a look-ahead of $H$ periods, indexed by $t$, with $t = 0$ the current period (root) \\
    \mathcal{N} & Nodes of the scenario tree, indexed by $n$ or $n^{\prime}$ \\
    \mathcal{N}_n^A & Ancestors of node $n \in \mathcal{N}$, i.e., the nodes on the path from the root to $n$ including itself
\end{conditions*}
\vspace{2mm}
\noindent \textbf{Parameters}
\begin{conditions*}
    \xi_{i,n} \in \mathbb{N}_0 & Demand of end item $i \in \mathcal{I}^E$ sampled at node $n \in \mathcal{N}$ \\
    t(n) \in \mathcal{T} & Time slot associated with node $n \in \mathcal{N}$ \\
    a(n) \in \mathcal{N} & Parent node of node $n \in \mathcal{N}$, which is undefined for the root node in time slot $0$ \\
    \rho_n \in (0, 1] & Probability of reaching node $n \in \mathcal{N}$, with $\sum\limits_{n \in \mathcal{N}\,:\, t(n) = t} \rho_n = 1$ for all $t \in \mathcal{T}$ \\
    \Bar{o}_i \in \mathbb{Z} & Initial inventory of item $i \in \mathcal{I}$, which can be negative for end items $i \in \mathcal{I}^E$ \\
    \Bar{x}_{i,t} \in \mathbb{N}_0 & Initial in-transit pipeline of item $i \in \mathcal{I}$ arriving $t$ periods from now, with $\Bar{x}_{i,t} = 0$ for $t \geq \tau_i$
\end{conditions*}
\vspace{2mm}
\noindent \textbf{Decision variables}
\begin{conditions*}
    x_{i,n} \in \mathbb{N}_0 & Pre-demand production quantity of item $i \in \mathcal{I}$ at node $n \in \mathcal{N}$ \\
    o_{i,n} \in \mathbb{N}_0 & Inventory of item $i \in \mathcal{I}$ at node $n \in \mathcal{N}$ \\
    b_{i,n} \in \mathbb{N}_0 & Backlog of end item $i \in \mathcal{I}^E$ at node $n \in \mathcal{N}$
\end{conditions*}
\begin{subequations}
\begin{align}
    \min \ \ \ & \sum_{n \in \mathcal{N}} \rho_n \left(\sum_{i \in \mathcal{I}} h_i o_{i,n} + \sum_{i \in \mathcal{I}^E} p_i b_{i,n}\right) && & \label{MSS-PIP objective function} \\[5pt]
    \text{s.t.} \ \ \ & \Bar{o}_i + \sum_{t^{\prime}=1}^{t(n)} \Bar{x}_{i,t^{\prime}} + \hspace{-2mm} \sum_{\substack{n^{\prime} \in \mathcal{N}_n^A \\ t(n^{\prime}) \leq t(n) - \tau_i}} \hspace{-2mm} x_{i,n^{\prime}} - \sum_{n^{\prime} \in \mathcal{N}_n^A} \xi_{i,n^{\prime}} + b_{i,n} = o_{i,n} && i \in \mathcal{I}^E, n \in \mathcal{N} & \label{MSS-PIP constraints: end-item balance} \\[8pt]
    & \Bar{o}_i + \sum_{t^{\prime}=1}^{t(n)} \Bar{x}_{i,t^{\prime}} + \hspace{-2mm} \sum_{\substack{n^{\prime} \in \mathcal{N}_n^A \\ t(n^{\prime}) \leq t(n) - \tau_i}} \hspace{-2mm} x_{i,n^{\prime}} - \sum_{j \in \mathcal{I}} A_{i,j} \left( \sum_{n^{\prime} \in \mathcal{N}_n^A} x_{j,n^{\prime}} \right) = o_{i,n} && i \in \mathcal{I}^C, n \in \mathcal{N} & \label{MSS-PIP constraints: component balance} \\[8pt]
    & \sum_{i \in \mathcal{I}} C_{g,i}\, x_{i,n} \leq k_g && g \in \mathcal{G}, n \in \mathcal{N} & \label{MSS-PIP constraints: capacity} \\
    & x_{i,n} = x_{i,n^{\prime}} && i \in \mathcal{I}; n, n^{\prime} \in \mathcal{N} : a(n) = a(n^{\prime}) & \label{MSS-PIP constraints: non-anticipativity} \\
    & x_{i,n}, o_{i,n} \in \mathbb{N}_0 && i \in \mathcal{I}, n \in \mathcal{N} & \label{MSS-PIP constraints: integrality} \\
    & b_{i,n} \in \mathbb{N}_0 && i \in \mathcal{I}^E, n \in \mathcal{N} & \label{MSS-PIP constraints: integrality backlog}
\end{align}
\end{subequations}

The objective \eqref{MSS-PIP objective function} is to minimize the expected total holding and backlog costs. Constraints \eqref{MSS-PIP constraints: end-item balance} set the inventory and backlog values for the end-items, while constraints \eqref{MSS-PIP constraints: component balance} specify the inventory levels for components, both at each node of the scenario tree. Constraints \eqref{MSS-PIP constraints: capacity} enforce the shared resource capacities. Constraints \eqref{MSS-PIP constraints: non-anticipativity} impose non-anticipativity: nodes with the same parent differ only in the demand drawn in the current period, which is not yet observed when production is released (Appendix~\ref{app:mdp}). Lastly, constraints \eqref{MSS-PIP constraints: integrality} and \eqref{MSS-PIP constraints: integrality backlog} impose non-negativity and integrality on all decision variables.

\section{Experimental Configuration}
\label{app:experimental_configuration}

We implement our policy and simulation in PyTorch \parencite{PyTorch} and integrate the open-source qpth solver \parencite{Amos.2017} for the differentiable projection layer. The MILP policies are implemented using Gurobi 12 with a non-binding time limit of 3600 seconds. We run experiments distributed across CPU cores, with one Intel Xeon Platinum 8280L CPU at 2.70 GHz per process. 

\subsection{Implementation Details of Our Policy}
\label{app:policy_implementation}

We train for at most $1{,}000$ epochs with early stopping if the validation loss plateaus. To approach long-run performance on the small instances, we use trajectories of length $200$ during training, where the initial state distribution is generated by a $100$-period warmup simulation under the current policy. For the larger instances, we use the planning horizon $T$ for each trajectory with an additional warmup period of $T/2$ periods. 

We augment the state by the echelon inventory position as a handcrafted feature. We scale the state, action and costs during training to improve numerical stability. The inventory levels and pipelines are scaled by the mean demand per period, whereas the echelon inventory position is scaled by the mean demand during the echelon lead time. For the ASML instances, the aggregate demand lags are scaled by the long-run mean. The costs are scaled by the costs of the $\overline{\mathrm{SBS}}$ policy. While this is not strictly necessary, it avoids the need to tune the learning rate separately for each instance. Before each update, we clip the gradient norm to 10. To keep the neural network output normalized, the echelon base-stock network uses softplus as the output activation and then scales by the base-stock levels of $\overline{\mathrm{SBS}}$. The raw action network in the ablation study uses a sigmoid as the output activation and then scales by the lowest limit of any resource that the respective item consumes.

\subsection{Baseline Configurations}
\label{app:baseline_hyperparameters}

For the ASML case study, we use the complete baseline configurations of \textcite{vanDijck.2026} without modification. This includes the fitted item-specific and global safety-stock factors $\boldsymbol\theta$ and $\theta$ for the base stock policies as well as the lookahead horizon of $H=30$ for the MILP policies with demand inflation factor $\gamma$ for DET, and the scenario tree branches $[6, 4, 3, 2, 2] + [1]\times(25)$ for MSP. We therefore do not retune the baseline policies for the ASML instances.

For the remaining instances, we fit the parameters separately using the following hyperparameters. The look-ahead horizon $H$ for the MILP policies is fixed to two times the maximum echelon lead time to account for the downstream effect of upstream production. Based on this, $\gamma$ for DET is tuned via a simulation-based grid search over $[0.1,0.9]$ in increments of $0.05$. For MSP, we fix the branching structure to $[6,4,3,2,2]+[1]\times(H-5)$ across instances, yielding $288$ scenarios. We tune $\theta$ for $\overline{\mathrm{SBS}}$ via a simulation-based grid search over $[1, 7]$ in increments of $0.1$. All simulation-based grid searches are performed using the same set of $1{,}000$ demand trajectories of length $T$ preceded by $T/2$ warmup periods for each candidate value. The intervals are chosen such that the best-found value is strictly within the search space. We optimize $\boldsymbol \theta$ for SBS and DBS using CMA-ES. Following the recommendations from \textcite{Hansen.2023}, we set the population size to $4 + \lfloor 3.0 \cdot \log(\dim(\boldsymbol \theta)) \rfloor$, and the number of parents to half of that. We initialize the population mean to the best global $\theta$ with an initial step size of $1$. The optimization runs for 200 iterations and candidates within each iteration are evaluated on the same set of $1{,}000$ demand trajectories of length $T$ preceded by $T/2$ warmup periods.

\section{Proofs}
\label{app:proofs}

\begin{proof}{Proof of Lemma~\ref{lem:qp_jacobians}}
Let $\boldsymbol{\lambda}\in\mathbb R_+^{m}$ denote the Lagrange multipliers associated with the constraints $\mathbf M \mathbf x_t \le \mathbf r_t$. The Lagrangian of \eqref{eq:qp_objective} is
\[
\mathcal L(\mathbf{x}_t,\boldsymbol{\lambda})
=
\frac12 (\mathbf{x}_t-\mathbf{z}_t)^\top \mathbf{W}(\mathbf{x}_t-\mathbf{z}_t)
+\boldsymbol{\lambda}^\top(
\mathbf{M}\mathbf{x}_t-\mathbf{r}_t).
\]
Under a locally fixed linearly independent active set, the reduced KKT system for
\eqref{eq:qp_objective} is
\begin{equation}
\begin{bmatrix}
\mathbf{W} & \tilde{\mathbf{M}}_t^\top\\
\tilde{\mathbf{M}}_t & \mathbf{0}
\end{bmatrix}
\begin{bmatrix}
\mathbf{x}_t^*\\
\boldsymbol{\tilde \lambda}_t^*
\end{bmatrix}
=
\begin{bmatrix}
\mathbf{W}\mathbf{z}_t\\
\tilde{\mathbf{r}}_t
\end{bmatrix},
\label{eq:reduced_kkt}
\end{equation}
where $\boldsymbol{\tilde \lambda}_t^*$ collects the optimal Lagrange multipliers (dual prices) of the active constraints. On a neighborhood where the active set is fixed, \eqref{eq:reduced_kkt} is a linear system whose coefficient matrix is invertible because $\mathbf W \succ \mathbf 0$ and the active rows are linearly independent. Its solution $(\mathbf x_t^*, \boldsymbol{\tilde \lambda}_t^*)$ therefore depends affinely, hence differentiably, on the inputs $(\mathbf z_t, \mathbf b_t)$. Differentiating \eqref{eq:reduced_kkt} with respect to $\mathbf z_t$ gives
\[
\mathbf{W}\,\frac{\partial \mathbf{x}_t^*}{\partial \mathbf{z}_t}
+\tilde{\mathbf{M}}_t^\top\frac{\partial \boldsymbol{\tilde\lambda}_t^*}{\partial \mathbf{z}_t}
=\mathbf{W},
\qquad
\tilde{\mathbf{M}}_t\,\frac{\partial \mathbf{x}_t^*}{\partial \mathbf{z}_t}=\mathbf{0}.
\]
Solving the first equation for $\frac{\partial \mathbf{x}_t^*}{\partial \mathbf{z}_t}$ and substituting into the second yields
$\frac{\partial \boldsymbol{\tilde\lambda}_t^*}{\partial \mathbf{z}_t}
=\bigl(\tilde{\mathbf{M}}_t\mathbf{W}^{-1}\tilde{\mathbf{M}}_t^\top\bigr)^{-1}\tilde{\mathbf{M}}_t$;
substituting this back gives the first claim. Differentiating \eqref{eq:reduced_kkt} with respect to $\mathbf b_t$ instead gives
\[
\mathbf{W}\,\frac{\partial \mathbf{x}_t^*}{\partial \mathbf{b}_t}
+\tilde{\mathbf{M}}_t^\top\frac{\partial \boldsymbol{\tilde\lambda}_t^*}{\partial \mathbf{b}_t}
=\mathbf{0},
\qquad
\tilde{\mathbf{M}}_t\,\frac{\partial \mathbf{x}_t^*}{\partial \mathbf{b}_t}=\mathbf{U}_t,
\]
and the same elimination yields the second claim. For the general sensitivity formulas for QPs, see \textcite{Amos.2017}.
\end{proof}

\begin{proof}{Proof of Corollary~\ref{cor:competition}, part (a) (resource competition)}
Define
\(
\boldsymbol{\Pi}_t
:=
\mathbf I
-
\mathbf W^{-1/2}\tilde{\mathbf M}_t^\top
\left(\tilde{\mathbf M}_t\mathbf W^{-1}\tilde{\mathbf M}_t^\top\right)^{-1}
\tilde{\mathbf M}_t\mathbf W^{-1/2}.
\)
Then $\boldsymbol{\Pi}_t$ is the Euclidean orthogonal projector onto
\(\ker(\tilde{\mathbf M}_t\mathbf W^{-1/2})\), and the formula in
Lemma~\ref{lem:qp_jacobians} reads
\(\partial \mathbf x_t^*/\partial \mathbf z_t
=\mathbf W^{-1/2}\boldsymbol{\Pi}_t\mathbf W^{1/2}\).
Since $\mathbf W$ is diagonal, its $i$th column and the corresponding diagonal entry are
\begin{equation}
\frac{\partial \mathbf x_t^*}{\partial z_{i,t}}
=
w_i^{1/2}\,\mathbf W^{-1/2}\boldsymbol{\Pi}_t\mathbf e_i,
\qquad
\frac{\partial x_{i,t}^*}{\partial z_{i,t}}
=
[\boldsymbol{\Pi}_t]_{ii}
=
\mathbf e_i^\top \boldsymbol{\Pi}_t\mathbf e_i
=
\|\boldsymbol{\Pi}_t\mathbf e_i\|_2^2
\in [0,1],
\label{eq:jacobian_column}
\end{equation}
where the last equality uses that $\boldsymbol{\Pi}_t$ is symmetric and idempotent, and the upper bound uses that orthogonal projections do not increase Euclidean norm. Both expressions vanish exactly when $\boldsymbol{\Pi}_t\mathbf e_i=\mathbf 0$, so the column is zero if and only if its diagonal entry is. Lemma~\ref{lem:qp_jacobians} also gives $\tilde{\mathbf{M}}_t\, \partial \mathbf x_t^*/\partial \mathbf z_t = \mathbf{0}$.

\emph{Free regime.} If coordinate $i$ appears in no active row, the $i$th column of $\tilde{\mathbf M}_t$ is zero, so the formula in Lemma~\ref{lem:qp_jacobians} gives $\partial \mathbf x_t^*/\partial z_{i,t}=\mathbf e_i$ and thus $\partial x_{i,t}^*/\partial z_{i,t}=1$.

\emph{Pinned and Competing regimes.} Suppose coordinate $i$ appears in some active row. Since $x_{i,t}^*>0$, the lower-bound row $-\mathbf e_i^\top$ is not active, and the lower-bound row $-\mathbf e_l^\top$ of any other coordinate $l\ne i$ has a zero entry in column $i$; coordinate $i$ therefore appears in an active $\mathbf A$/$\mathbf C$ row. It suffices to show that if $\partial \mathbf x_t^*/\partial z_{i,t}$ contains no strictly negative entry, it is identically zero. Assume $\partial x_{l,t}^*/\partial z_{i,t} \ge 0$ for every $l \in [n]$. For each active row $k$,
\[
\sum_{l \in [n]} \tilde{M}_{kl}\,\frac{\partial x_{l,t}^*}{\partial z_{i,t}} = 0.
\]
For active lower-bound rows $-\mathbf e_l^\top$, this equation gives $\partial x_{l,t}^*/\partial z_{i,t}=0$ directly. For active $\mathbf A$/$\mathbf C$ rows, every summand is the product of two nonnegative factors, because $\mathbf A, \mathbf C \ge 0$. Since the sum is zero, each summand must vanish. Thus any coordinate appearing with a positive coefficient in an active $\mathbf A$/$\mathbf C$ row has $\partial x_{l,t}^*/\partial z_{i,t}=0$. Any remaining coordinate $l\ne i$ appears in no active row, so the $l$th column of $\tilde{\mathbf M}_t$ is zero, and the formula in Lemma~\ref{lem:qp_jacobians} gives $\partial x_{l,t}^*/\partial z_{i,t}=0$. Coordinate $i$ appears with a positive coefficient in an active $\mathbf A$/$\mathbf C$ row, so the same argument also gives $\partial x_{i,t}^*/\partial z_{i,t}=0$. Hence $\partial \mathbf x_t^*/\partial z_{i,t}=\mathbf 0$ (Pinned regime). Otherwise $\partial \mathbf x_t^*/\partial z_{i,t}$ contains at least one strictly negative entry, and since $\partial x_{i,t}^*/\partial z_{i,t} \ge 0$ that entry is off-diagonal (Competing regime). In this case $\partial \mathbf x_t^*/\partial z_{i,t}\ne\mathbf 0$, so $\partial x_{i,t}^*/\partial z_{i,t}>0$ by \eqref{eq:jacobian_column}. Since coordinate $i$ appears in an active row, $\mathbf e_i\notin \ker(\tilde{\mathbf M}_t\mathbf W^{-1/2})$, so $\boldsymbol{\Pi}_t\mathbf e_i\ne\mathbf e_i$ and therefore $\partial x_{i,t}^*/\partial z_{i,t}<1$.

Finally, suppose every active row $k$ with $\tilde{M}_{ki} > 0$ has $\tilde{M}_{kl} = 0$ for all $l \ne i$, so that these rows are scalar bounds on $x_{i,t}$. Row $k$ of $\tilde{\mathbf M}_t\,\partial \mathbf x_t^*/\partial z_{i,t} = \mathbf 0$ then collapses to $\tilde{M}_{ki}\,\partial x_{i,t}^*/\partial z_{i,t} = 0$, so $\partial x_{i,t}^*/\partial z_{i,t}=0$ and hence $\partial \mathbf x_t^*/\partial z_{i,t}=\mathbf 0$ by \eqref{eq:jacobian_column}. The Competing regime therefore requires some active row $k$ with $\tilde{M}_{ki} > 0$ and $\tilde{M}_{kj} > 0$ for some $j \ne i$.
\end{proof}

\begin{proof}{Proof of Corollary~\ref{cor:competition}, part (b) (nonexpansiveness)}
As shown in the proof of part (a), \(\partial \mathbf x_t^*/\partial \mathbf z_t=\mathbf W^{-1/2}\boldsymbol{\Pi}_t\mathbf W^{1/2}\), where \(\boldsymbol{\Pi}_t\) is the Euclidean orthogonal projector onto \(\ker(\tilde{\mathbf M}_t\mathbf W^{-1/2})\). Equivalently, \(\partial \mathbf x_t^*/\partial \mathbf z_t\) is the \(\mathbf W\)-orthogonal projector onto \(\ker\tilde{\mathbf M}_t\). Hence, for every perturbation \(\mathbf h\),
\[
    \Bigl\|\frac{\partial \mathbf x_t^*}{\partial \mathbf z_t}\mathbf h\Bigr\|_{\mathbf W}
    =
    \|\boldsymbol{\Pi}_t\mathbf W^{1/2}\mathbf h\|_2
    \le
    \|\mathbf W^{1/2}\mathbf h\|_2
    =
    \|\mathbf h\|_{\mathbf W},
\]
where the inequality uses that orthogonal projections do not increase the Euclidean norm.
\end{proof}

\begin{proof}{Proof of Corollary~\ref{cor:competition}, part (c) (constraint filling)}
By Lemma~\ref{lem:qp_jacobians},
\[
\tilde{\mathbf{M}}_t\,
\frac{\partial \mathbf{x}_t^*}{\partial \mathbf{b}_t}
=
\tilde{\mathbf{M}}_t \mathbf{W}^{-1} \tilde{\mathbf{M}}_t^\top
\bigl(\tilde{\mathbf{M}}_t \mathbf{W}^{-1} \tilde{\mathbf{M}}_t^\top\bigr)^{-1}
\mathbf{U}_t
=
\mathbf{U}_t. 
\]
For a perturbation \(\Delta\mathbf b_t\), consider the equality-constrained
quadratic problem
\[
    \min_{\mathbf y}\ \frac12\|\mathbf y\|_{\mathbf W}^2
    \quad\text{s.t.}\quad
    \tilde{\mathbf M}_t\mathbf y=\mathbf U_t\Delta\mathbf b_t .
\]
With Lagrangian multipliers $\boldsymbol \nu$, its KKT conditions are
\[
    \mathbf W\mathbf y-\tilde{\mathbf M}_t^\top\boldsymbol\nu=\mathbf0,
    \qquad
    \tilde{\mathbf M}_t\mathbf y=\mathbf U_t\Delta\mathbf b_t ,
\]
so, using the linear independence of the active rows,
\[
    \mathbf y
    =
    \mathbf W^{-1}\tilde{\mathbf M}_t^\top
    \left(\tilde{\mathbf M}_t\mathbf W^{-1}\tilde{\mathbf M}_t^\top\right)^{-1}
    \mathbf U_t\Delta\mathbf b_t
    =
    \frac{\partial \mathbf{x}_t^*}{\partial \mathbf{b}_t}\Delta\mathbf b_t .
\]
Strict convexity gives uniqueness, proving the minimum-norm
characterization.
\end{proof}

\begin{proof}{Proof of Proposition~\ref{prop:hybrid_rounding_gap}, part (i) (objective gap)}
Since
\(\mathbf x_t^*\in\mathcal A^{\mathrm{rel}}(\mathbf s_t)\subseteq\mathbb R_{\ge0}^{n}\),
the componentwise floor \(\mathbf x_t^F = \lfloor \mathbf x_t^* \rfloor\) satisfies
\(\mathbf 0 \le \mathbf x_t^F \le \mathbf x_t^*\), so downward closedness of
\(\mathcal A^{\mathrm{rel}}(\mathbf s_t)\) by Assumption (i) gives
\(\mathbf x_t^F \in \mathcal A^{\mathrm{rel}}(\mathbf s_t) \cap \mathbb N_0^{n} = \mathcal A(\mathbf s_t)\).
Write \(\boldsymbol{\delta}_t := \mathbf x_t^*-\mathbf x_t^F\) for the rounding remainder and
\(\Delta_t := \|\boldsymbol{\delta}_t\|_{\mathbf W}\) for its weighted norm. By definition of the integer mapping,
\(\mathbf x_t\in\mathcal A(\mathbf s_t)\) and
\(f_t(\mathbf x_t)\le f_t(\mathbf x_t^F)\). Since
\(\mathcal A(\mathbf s_t)\subseteq \mathcal A^{\mathrm{rel}}(\mathbf s_t)\)
and \(\mathbf x_t^I\) is integer optimal,
\[
0\le f_t(\mathbf x_t)-f_t(\mathbf x_t^I)
\le f_t(\mathbf x_t)-f_t(\mathbf x_t^*)
\le f_t(\mathbf x_t^F)-f_t(\mathbf x_t^*).
\]
Using \(\mathbf x_t^F=\mathbf x_t^*-\boldsymbol{\delta}_t\),
\[
f_t(\mathbf x_t^F)-f_t(\mathbf x_t^*)
=
\frac12\|\boldsymbol{\delta}_t\|_{\mathbf W}^2
+\boldsymbol{\delta}_t^\top \mathbf W(\mathbf z_t-\mathbf x_t^*)
\le \frac12\Delta_t^2+\Delta_t R_t,
\]
where the inequality follows from Cauchy--Schwarz. Since
\(0\le\delta_{i,t}<1\), we have
\(\Delta_t^2=\sum_{i\in[n]} w_i\delta_{i,t}^2\le\sum_{i\in[n]} w_i=1\)
by the weight normalization.
Thus \(f_t(\mathbf x_t)-f_t(\mathbf x_t^I)\le \frac12+R_t\). If
\(R_t\le\rho\),
inserting gives the first claim.

Now suppose \(R_t>\rho\). Since \(f_t(\mathbf x_t^*)=\frac12R_t^2\) and
\(f_t(\mathbf x_t^I)\ge f_t(\mathbf x_t^*)>0\),
\[
\frac{f_t(\mathbf x_t)-f_t(\mathbf x_t^I)}{f_t(\mathbf x_t^I)}
\le
\frac{\frac12\Delta_t^2+\Delta_t R_t}{\frac12R_t^2}
=
\frac{\Delta_t^2}{R_t^2}+2\frac{\Delta_t}{R_t}
\le
\frac{1}{\rho^2}+\frac{2}{\rho},
\]
using \(\Delta_t\le 1\) and \(R_t>\rho\). This proves the claim.
\end{proof}

\begin{proof}{Proof of Proposition~\ref{prop:hybrid_rounding_gap}, part (ii) (decision distance)}
We adopt \(\mathbf x_t^F\), \(\boldsymbol{\delta}_t\), and \(\Delta_t\) from the proof of part (i).
The first bound follows from the projection inequality:
\[
\frac12\|\mathbf x_t^I-\mathbf x_t^*\|_{\mathbf W}^2
\le
f_t(\mathbf x_t^I)-f_t(\mathbf x_t^*)
\le
f_t(\mathbf x_t^F)-f_t(\mathbf x_t^*).
\]
By the same expansion and Cauchy--Schwarz step as in the proof of part (i),
\[
f_t(\mathbf x_t^F)-f_t(\mathbf x_t^*)
\le
\frac12\Delta_t^2+\Delta_t R_t.
\]
Hence
\(\|\mathbf x_t^I-\mathbf x_t^*\|_{\mathbf W}\le\sqrt{\Delta_t^2+2\Delta_t R_t}\).
Since \(0\le\delta_{i,t}<1\), we also have \(\Delta_t\le 1\), giving
\(\|\mathbf x_t^I-\mathbf x_t^*\|_{\mathbf W}\le\sqrt{1+2R_t}\).

It remains to bound the distance from \(\mathbf x_t\) to \(\mathbf x_t^*\).
Since
\(\mathbf x_t=\mathbf x_t^F+\boldsymbol{\eta}_t
=\mathbf x_t^*-\boldsymbol{\delta}_t+\boldsymbol{\eta}_t\), where
\(\boldsymbol{\eta}_t\in\{0,1\}^{n}\) collects the unit increments accepted by
\(\mathrm{IM}_t\), each coordinate satisfies
\(|x_{i,t}-x_{i,t}^*|=|\eta_{i,t}-\delta_{i,t}|\le 1\). Therefore
\(
\|\mathbf x_t-\mathbf x_t^*\|_{\mathbf W}^2
=
\sum_{i\in[n]} w_i(\eta_{i,t}-\delta_{i,t})^2
\le 1.
\)
By the triangle inequality,
\[
\|\mathbf x_t-\mathbf x_t^I\|_{\mathbf W}
\le
\|\mathbf x_t-\mathbf x_t^*\|_{\mathbf W}
+\|\mathbf x_t^I-\mathbf x_t^*\|_{\mathbf W}
\le
1+\sqrt{\Delta_t^2+2\Delta_t R_t} \le
1+\sqrt{1+2R_t}.
\]
If \(R_t\le\rho\),
monotonicity in \(R_t\) gives the first claim. If \(R_t>\rho\), dividing by \(R_t\) along with monotonicity yields the second claim.
\end{proof}

\begin{proof}{Proof of Proposition~\ref{prop:completeness}, part (a) (completeness of the integer mapping)}
Fix \(\mathbf q\in\mathcal A(\mathbf s_t)\). Since \(\mathbf q\) is relaxed feasible, \(P_t(\mathbf q)=\mathbf q\). By continuity of the projection map, choose \(0<\eta<1/2\) such that, for every
\(\mathbf z_t=\mathbf q+\boldsymbol\varepsilon\in\mathbb R_{\ge0}^{n}\) with \(|\varepsilon_i|<\eta\) when \(q_i\ge1\) and
\(0\le\varepsilon_i<\eta\) when \(q_i=0\),
\[
    \|P_t(\mathbf z_t)-\mathbf q\|_\infty<1 .
\]
Thus, for \(\mathbf x_t^*=P_t(\mathbf z_t)\), the floor point satisfies
\(x_{i,t}^F=0\) if \(q_i=0\), and
\(x_{i,t}^F\in\{q_i-1,q_i\}\) if \(q_i\ge1\). Hence \(\mathbf x_t^F\)
is componentwise below \(\mathbf q\) and differs from it by at most one unit in each coordinate.

During the IM traversal, every intermediate point below \(\mathbf q\) is feasible, because \(\mathbf q\in\mathcal A(\mathbf s_t)\) and \(\mathcal A(\mathbf s_t)\) is downward closed by Assumption (i). Moreover, the one-step objective change
\(f_t(\mathbf x+\mathbf e_i;\mathbf z_t)-f_t(\mathbf x;\mathbf z_t)=w_i(x_i+\tfrac12-z_{i,t})\)
depends only on coordinate \(i\), so the acceptance decisions below do not depend on the traversal order induced by the priority scores. If the current coordinate satisfies
\(x_i=q_i-1\), then
\[
    f_t(\mathbf x+\mathbf e_i;\mathbf z_t)
    -
    f_t(\mathbf x;\mathbf z_t)
    =
    w_i(q_i-\tfrac12-z_{i,t})
    =
    -w_i(\tfrac12+\varepsilon_i)<0,
\]
so IM accepts the missing increment. If \(x_i=q_i\), then
\[
    f_t(\mathbf x+\mathbf e_i;\mathbf z_t)
    -
    f_t(\mathbf x;\mathbf z_t)
    =
    w_i(q_i+\tfrac12-z_{i,t})
    =
    w_i(\tfrac12-\varepsilon_i)>0,
\]
so IM rejects any increment beyond \(\mathbf q\). Therefore every target
in this relative neighbourhood of \(\mathbf q\) maps to \(\mathbf q\).
The preimage of \(\mathbf q\) thus has positive Lebesgue measure, and
\(\Phi_t^{\mathrm{IM}}\) is complete.
\end{proof}

\begin{proof}{Proof of Proposition~\ref{prop:completeness}, part (a) (completeness of flooring on slack actions)}
Fix $\mathbf q\in\mathcal A^{\mathrm{slack}}(\mathbf s_t)$. By definition no $\mathbf A$ or $\mathbf C$ constraint is active, so $\mathbf A\mathbf q<\mathbf b_t$ and $\mathbf C\mathbf q<\mathbf k$ componentwise. Hence there is $\varepsilon\in(0,1)$ with $\mathbf A(\mathbf q+\varepsilon\mathbf 1)\le\mathbf b_t$ and $\mathbf C(\mathbf q+\varepsilon\mathbf 1)\le\mathbf k$, i.e., $\mathbf q+\varepsilon\mathbf 1\in\mathcal A^{\mathrm{rel}}(\mathbf s_t)$. By downward-closedness the box $[\mathbf 0,\mathbf q+\varepsilon\mathbf 1]\subseteq\mathcal A^{\mathrm{rel}}(\mathbf s_t)$, and it is full-dimensional because $\mathbf q+\varepsilon\mathbf 1>\mathbf 0$. Hence $\mathbf p:=\mathbf q+\tfrac{\varepsilon}{2}\mathbf 1$ lies in its interior, and therefore in the interior of $\mathcal A^{\mathrm{rel}}(\mathbf s_t)$, denoted $\operatorname{int}\mathcal A^{\mathrm{rel}}(\mathbf s_t)$, as well as in the open cell $(\mathbf q,\mathbf q+\mathbf 1)$. Choose a ball $B(\mathbf p,\delta)\subseteq\operatorname{int}\mathcal A^{\mathrm{rel}}(\mathbf s_t)\cap(\mathbf q,\mathbf q+\mathbf 1)$. For every $\mathbf z_t\in B(\mathbf p,\delta)$ we have $\mathbf z_t\in\mathcal A^{\mathrm{rel}}(\mathbf s_t)$, so the projection fixes it, $P_t(\mathbf z_t)=\mathbf z_t$, and therefore $\lfloor P_t(\mathbf z_t)\rfloor=\lfloor\mathbf z_t\rfloor=\mathbf q$. Thus $B(\mathbf p,\delta)\subseteq(\Phi_t^{\mathrm{F}})^{-1}(\mathbf q)$ has positive Lebesgue measure. Since $\mathbf q\in\mathcal A^{\mathrm{slack}}(\mathbf s_t)$ was arbitrary, $\Phi_t^{\mathrm{F}}$ is complete on $\mathcal A^{\mathrm{slack}}(\mathbf s_t)$.
\end{proof}

\begin{proof}{Proof of Proposition~\ref{prop:completeness}, part (b), incompleteness of flooring on active actions}
Fix any positive weights $\mathbf W$ and take the single resource constraint $x_1 + x_2 \le 2$ and $\mathbf q = (1,1)$, which is active since $x_1 + x_2 = 2$ at $\mathbf q$, so $\mathbf q\in\mathcal A^{\mathrm{act}}(\mathbf s_t)$. The only point of $\mathcal A^{\mathrm{rel}}(\mathbf s_t)$ in $[1,2)^2$ is $\mathbf q$ itself, since $x_1, x_2 \ge 1$ forces $x_1 + x_2 \ge 2$ with equality only at $\mathbf q$. Hence $\Phi_t^{\mathrm{F}}(\mathbf z_t) = \mathbf q$ if and only if $P_t(\mathbf z_t) = \mathbf q$. By the optimality condition of the weighted projection, $\mathbf x_t^* = P_t(\mathbf z_t)$ if and only if $\mathbf x_t^*\in\mathcal A^{\mathrm{rel}}(\mathbf s_t)$ and $\mathbf W(\mathbf z_t - \mathbf x_t^*) \in N(\mathbf x_t^*)$, where $N(\mathbf x)$ denotes the normal cone of $\mathcal A^{\mathrm{rel}}(\mathbf s_t)$ at $\mathbf x$. The floor preimage of $\mathbf q$ therefore equals $P_t^{-1}(\mathbf q) = \mathbf q + \mathbf W^{-1}N(\mathbf q)$. Only the resource constraint is active at $\mathbf q$, so $N(\mathbf q) = \{(a,a)^\top : a\ge 0\}$ and $\mathbf W^{-1}N(\mathbf q) = \{(a/w_1, a/w_2)^\top : a\ge 0\}$, whence the preimage is a one-dimensional ray of Lebesgue measure zero in $\mathbb R_{\ge 0}^2$. Since $\mathbf W$ was arbitrary, $\Phi_t^{\mathrm{F}}$ is not complete on $\mathcal A^{\mathrm{act}}(\mathbf s_t)$ for any positive weights.
\end{proof}

\begin{proof}{Proof of Proposition~\ref{prop:frontier_reachability}}
Let \(\mathbf y := \Phi_t^{\mathrm{IM}}(\mathbf z_t)\) and let
\(S \subseteq [n]\) denote the set of coordinates the IM
incremented, so \(y_i = x_{i,t}^F + 1\) for \(i\in S\)
and \(y_i = x_{i,t}^F\) for \(i\notin S\). It suffices to
exhibit some \(i\in[n]\) with \(\mathbf y + \mathbf e_i \notin
\mathcal A(\mathbf s_t)\).

\textit{Step 1: \(S\ne[n]\).} Suppose, for contradiction, that
\(S = [n]\), so \(\mathbf y = \mathbf x_t^F + \mathbf 1\).
Because the IM accepts only feasible increments,
\(\mathbf y\in\mathcal A(\mathbf s_t)\subseteq
\mathcal A^{\mathrm{rel}}(\mathbf s_t)\). Fix \(i\in[n]\) and write
\(r_i := z_{i,t}-x_{i,t}^*\) for the projection residual and
\(c_i := y_i-x_{i,t}^* = 1-\bigl(x_{i,t}^*-x_{i,t}^F\bigr)\) for the step taken
in coordinate \(i\). Since \(0\le x_{i,t}^*-x_{i,t}^F<1\) and \(r_i>\tfrac12\)
by assumption, we have \(0<c_i\le1<2r_i\) and therefore
\(
    |z_{i,t}-y_i| = |r_i-c_i| < r_i = |z_{i,t}-x_{i,t}^*| ,
\)
that is, the increment moves coordinate \(i\) towards the target without
overshooting it far enough to increase the distance. Weighting by \(w_i>0\) and
summing over \(i\in[n]\) gives
\(f_t(\mathbf y;\mathbf z_t)<f_t(\mathbf x_t^*;\mathbf z_t)\) for the
relaxed-feasible point \(\mathbf y\), contradicting the optimality of
\(\mathbf x_t^*\) in \eqref{eq:qp_objective}. Hence \(S\ne[n]\).

\textit{Step 2: pick \(i\in[n]\setminus S\) and show
\(\mathbf y + \mathbf e_i \notin \mathcal A(\mathbf s_t)\).} The IM
considered coordinate \(i\) with \(x_i = x_{i,t}^F\) at some intermediate state \(\mathbf x\le\mathbf y\) and
rejected the unit increment. That increment strictly decreases the
projection objective,
\(
    f_t(\mathbf x + \mathbf e_i;\mathbf z_t) - f_t(\mathbf x;\mathbf z_t)
    = w_i\bigl(x_{i,t}^F + \tfrac12 - z_{i,t}\bigr) < 0 ,
\)
because \(z_{i,t}-x_{i,t}^F\ge z_{i,t}-x_{i,t}^*>\tfrac12\). Since the IM
accepts every feasible increment that does not increase the objective, the
rejection must be due to infeasibility,
\(\mathbf x + \mathbf e_i \notin \mathcal A(\mathbf s_t)\), and downward
closedness lifts this to \(\mathbf y+\mathbf e_i\notin\mathcal A(\mathbf s_t)\)
because \(\mathbf x+\mathbf e_i\le\mathbf y+\mathbf e_i\).
Hence \(\mathbf y \in \mathcal F(\mathbf s_t)\).
\end{proof}

\end{APPENDIX}

\end{document}